\useunder{\uline}{\ul}{}
\newcommand{\shorteq}{\mathrel{\mkern0.2mu\mathpalette\shorteq@\relax\mkern0.2mu}}
\newcommand{\shorteq@}[2]{\scalebox{0.5}[1]{$\m@th#1=$}}
\newcommand{\shortminus}{\mathrel{\mkern0.2mu\mathpalette\shortminus@\relax\mkern0.2mu}}
\newcommand{\shortminus@}[2]{\scalebox{0.5}[1]{$\m@th#1-$}}
\newcommand{\shortnabla}{\mathrel{\mkern0.2mu\mathpalette\shortnabla@\relax\mkern0.2mu}}
\newcommand{\shortnabla@}[2]{\scalebox{0.8}[1]{$\m@th#1\nabla$}}
\crefname{figure}{Fig.}{Figs.}
\crefname{table}{Tab.}{Tabs.}
\crefname{equation}{Eq.}{Eqs.}
\crefname{algorithm}{Alg.}{Algs.}
\crefname{section}{Section}{Sections}
\theoremstyle{plain}
\newtheorem{theorem}{Theorem}[section]
\newtheorem{proposition}[theorem]{Proposition}
\newtheorem{lemma}[theorem]{Lemma}
\newtheorem{corollary}[theorem]{Corollary}
\theoremstyle{definition}
\newtheorem{definition}[theorem]{Definition}
\theoremstyle{remark}
\newtheorem{remark}[theorem]{Remark}
\newtheorem*{rep@theorem}{\rep@title}
\newcommand{\newreptheorem}[2]{%
\newenvironment{rep#1}[1]{%
 \def\rep@title{\bf #2 \ref{##1}}%
 \begin{rep@theorem}}%
 {\end{rep@theorem}}}
\colorlet{LightRed}{BrickRed!15!}
\colorlet{LightGreen}{ForestGreen!10!}
\colorlet{Lightgray}{gray!15!}
\DeclareRobustCommand{\shortto}{%
  \mathrel{\mathpalette\short@to\relax}%
}
\newcommand{\short@to}[2]{%
  \mkern2mu
  \clipbox{{.5\width} 0 0 0}{$\m@th#1\vphantom{+}{\shortrightarrow}$}%
  }
\def\1{\bm{1}}
\tikzset{
  singlemathdoublearrow/.style={
    draw,
    line width=0.5pt,
    <->,
    >={[scale=0.6]Triangle} 
  }
}
\def\dd{{\mathrm{d}}}
\DeclareMathAlphabet{\mathsfit}{\encodingdefault}{\sfdefault}{m}{sl}
\SetMathAlphabet{\mathsfit}{bold}{\encodingdefault}{\sfdefault}{bx}{n}
\def\gL{{\gL}}
\def\gN{{\mathcal{N}}}
\def\gO{{\mathcal{O}}}
\newcommand{\E}{\mathbb{E}}
\let\log\relax
\DeclareMathOperator{\log}{log}
\def\P{{\mathbb{P}}}
\def\Q{{\mathbb{Q}}}
\newcommand{\Var}{\mathrm{Var}}
\newcommand{\bwd}{\mathpalette{\overarrowsmall@\leftarrowfill@}}
\newcommand{\overarrowsmall@}[3]{%
  \vbox{%
    \ialign{%
      ##\crcr
      #1{\smaller@style{#2}}\crcr
      \noalign{\nointerlineskip}%
      $\m@th\hfil#2#3\hfil$\crcr
    }%
  }%
}
\def\smaller@style#1{%
  \ifx#1\displaystyle\scriptstyle\else
    \ifx#1\textstyle\scriptstyle\else
      \scriptscriptstyle
    \fi
  \fi
}
\newcommand{\fwd}{\mathpalette{\overrightarrowsmall@\rightarrowfill@}}
\newcommand{\overrightarrowsmall@}[3]{%
  \vbox{%
    \ialign{%
      ##\crcr
      #1{\smaller@style{#2}}\crcr
      \noalign{\nointerlineskip}%
      $\m@th\hfil#2#3\hfil$\crcr
    }%
  }%
}
\def\smaller@style#1{%
  \ifx#1\displaystyle\scriptstyle\else
    \ifx#1\textstyle\scriptstyle\else
      \scriptscriptstyle
    \fi
  \fi
}
\definecolor{red}{HTML}{ca0020}
\definecolor{lightred}{HTML}{f4a582}
\definecolor{lightblue}{HTML}{92c5de}
\definecolor{green}{HTML}{008837}
\definecolor{blue}{HTML}{2c7bb6}
\definecolor{myred}{HTML}{e83722}
\newtcolorbox{coloredquote}{
  colback=gray!0,      %
  colframe=gray!0,     %
  left=6pt,             %
  right=6pt,            %
  top=6pt,              %
  bottom=6pt,           %
  breakable             %
}
\newcommand{\sbdarrowww}{%
  \tikz[baseline={(current bounding box.center)}]{%
    \draw[->] (0,0) -- (5em,0);
  }%
}
\DeclareRobustCommand{\rebuttal}[1]{{#1}}
\renewcommand{\@fnsymbol}[1]{%
  \ifcase#1\or\dagger\or\ddagger\or\mathsection\or\mathparagraph\or\|\or**\or\dagger\dagger\or\ddagger\ddagger\else\@ctrerr\fi
}
\title{RNE: plug-and-play diffusion inference-time control and energy-based training}
\author{Jiajun He\\
University of Cambridge \\
\texttt{jh2383@cam.ac.uk}
\And
José Miguel Hernández-Lobato\\
University of Cambridge\\
\texttt{jmh233@cam.ac.uk}
\AND
Yuanqi Du\thanks{Last authors.}\\
Cornell University\\
\texttt{yuanqidu@cs.cornell.edu}
\And
\hspace{99pt}Francisco Vargas$^\dagger$\\
\hspace{99pt}Xaira Therapeutics\\
\hspace{99pt}\texttt{vargfran@gmail.com}
}
\crefname{theorem}{Theorem}{Theorems}
\Crefname{theorem}{Theorem}{Theorems}
\crefname{proposition}{Proposition}{Propositions}
\Crefname{proposition}{Proposition}{Propositions}
\crefname{remark}{Remark}{Remarks}
\Crefname{remark}{Remark}{Remarks}
\crefname{lemma}{Lemma}{Lemmas}
\Crefname{lemma}{Lemma}{Lemmas}
\crefname{corollary}{Corollary}{corollaries}
\Crefname{corollary}{Corollary}{Corollaries}
\crefname{definition}{Definition}{Definitions}
\Crefname{definition}{Definition}{Definitions}
\newif\ifjj@inappendix
  \let\jj@oldlabel\label
  \renewcommand{\label}{%
    \@ifnextchar[{\jj@label@opt}{\jj@label@auto}%
  }%
  \def\jj@label@opt[#1]#2{%
    \jj@oldlabel[#1]{#2}%
  }%
  \def\jj@label@auto#1{%
    \ifjj@inappendix
      \ifcsname @currentcounter\endcsname
        \edef\jj@currentcounter{\@currentcounter}%
      \else
        \def\jj@currentcounter{}%
      \fi
      \def\jj@section{section}%
      \def\jj@subsection{subsection}%
      \def\jj@subsubsection{subsubsection}%
      \ifx\jj@currentcounter\jj@section
        \jj@oldlabel[appendix]{#1}%
      \else\ifx\jj@currentcounter\jj@subsection
        \jj@oldlabel[subappendix]{#1}%
      \else\ifx\jj@currentcounter\jj@subsubsection
        \jj@oldlabel[subsubappendix]{#1}%
      \else
        \jj@oldlabel{#1}%
      \fi\fi\fi
    \else
      \jj@oldlabel{#1}%
    \fi
  }%
\crefname{appendix}{Appendix}{Appendices}
\Crefname{appendix}{Appendix}{Appendices}
\crefname{subappendix}{Appendix}{Appendices}
\Crefname{subappendix}{Appendix}{Appendices}
\crefname{subsubappendix}{Appendix}{Appendices}
\Crefname{subsubappendix}{Appendix}{Appendices}
\begin{document}

\maketitle

\begin{abstract}
Diffusion models generate data by removing noise gradually, which corresponds to the time-reversal of a noising process.
However, access to only the denoising kernels is often insufficient.
In many applications, we need the knowledge of the marginal densities along the generation trajectory, which enables tasks such as inference-time control.
To address this gap, in this paper, we introduce the \textsc{Radon-Nikodym Estimator (RNE)}.
Based on the concept of the \textit{density ratio} between path distributions, it reveals a fundamental connection between marginal densities and transition kernels, providing a flexible plug-and-play framework that unifies (1) diffusion density estimation, (2) inference-time control, and (3) energy-based diffusion training under a single perspective.
Experiments demonstrate that RNE delivers strong results in inference-time control applications, such as annealing and model composition, with promising inference-time scaling performance, and achieves a simple yet efficient regularisation for training energy-based diffusion models.
Additionally, our proposed RNE is modality-agnostic and applicable not only to continuous diffusion models but also to their discrete diffusion counterparts.

\end{abstract}

\section{Introduction and Background}
Diffusion models \citep{ho2020denoising,song2021denoising,song2021score} are a class of flexible generative models that excel in generating high-quality samples from complex data distributions, and have had a broad impact across a wide range of applications from image \citep{rombach2022high,karras2022elucidating}, video \citep{ho2022video} and text \citep{austin2021structured} generation, designing novel proteins~\citep{watson2023novo}, materials~\citep{zeni2025generative} and capturing transient structures in chemical reactions~\citep{duan2023accurate}.
Diffusion models rely on a pair of forward and backwards stochastic differential equations  (\cref{eq:dm_fwd,eq:dm_bwd}) to transport between data distribution and a tractable prior distribution, commonly selected as Gaussian.
They then parametrise the \emph{score} function $\nabla \log p_t$ with a time-dependent network, and when trained to optimality, \cref{eq:dm_fwd,eq:dm_bwd} will be the time-reversal of each other.
This allows us to generate high-quality samples by simulating the backward SDE in \cref{eq:dm_bwd} starting from a Gaussian. 
Equivalent to diffusion models is the more flexible stochastic interpolants \citep{albergo2023stochastic,ma2024sit,gao2025diffusion} parametrisation of diffusion models (\cref{eq:general_dm_fwd,eq:general_dm_bwd}). \vspace{-7pt}
\par
\begin{minipage}{0.45\textwidth}
\resizebox{\textwidth}{!}{
\begin{minipage}{260pt}
\begin{tcolorbox}[height=80pt, colback=gray!10, colframe=gray!20, coltitle=white!25!black, title={\centering\Large DM Characterisation - Fixed $\sigma_t$} ]
\vspace{8.5pt}
\begin{minipage}{\textwidth}
\begin{fleqn}[4pt]
\begin{align}
&\mathrm{d}X_t = f_t(X_t) \mathrm{d}t + \sigma_t \fwd{\mathrm{d}W_t} \label{eq:dm_fwd} \\
&\mathrm{d}X_t = \left(f_t(X_t) - \sigma_t^2 \nabla\log p_t(X_t)  \right)\mathrm{d}t +\sigma_t \bwd{\mathrm{d}W_t} \label{eq:dm_bwd}
\end{align}
\end{fleqn}
\end{minipage}
\end{tcolorbox}
\end{minipage}
}
\end{minipage}
\begin{minipage}{0.10\textwidth}
\centering
\vspace{2.5em} %
\scalebox{0.77}{$\underset{\forall \epsilon_t \geq 0}{\overset{  \mathrm{v} \coloneqq f \shortminus \frac{\sigma}{2}^{\!\small 2} \shortnabla \log \! p  }{\sbdarrowww}}$ %
}
\vspace{1.5em}
\end{minipage}
\begin{minipage}{0.45\textwidth}
\resizebox{\textwidth}{!}{
\begin{minipage}{260pt}
\begin{tcolorbox}
[height=80pt, colback=gray!10, colframe=gray!20, coltitle=white!25!black, title={\centering \Large SI Characterisation - $\forall \epsilon_t > 0$}]
\vspace*{-0.3\baselineskip}
\begin{fleqn}[4pt]
\begin{align}
&\mathrm{d}X_t = \left(\mathrm{v}_t(X_t) + \frac{\epsilon_t^2}{2}  \nabla\log p_t(X_t)  \right)\mathrm{d}t +\epsilon_t\fwd{\mathrm{d}W_t}\label{eq:general_dm_fwd}\\
&\mathrm{d}X_t = \left(\mathrm{v}_t(X_t) - \frac{\epsilon_t^2}{2} \nabla\log p_t(X_t)  \right)\mathrm{d}t +\epsilon_t\bwd{\mathrm{d}W_t}\label{eq:general_dm_bwd}
\end{align}
\end{fleqn}
\end{tcolorbox}
\end{minipage}
}
\end{minipage}
In \Cref{eq:dm_fwd}, $f_t$ is typically chosen as a linear function. For example, $f_t(x) = -\beta_t x$ for vaiance-preserving process \citep{ho2020denoising,song2021denoising} with an extra hyperparameter $\beta_t$, or $f_t \equiv 0$ for variance-exploding process \citep{song2021denoising,karras2022elucidating}.
Note that one can always recover the classical DM characterisation from its SI perspective by setting $\epsilon_t\leftarrow \sigma_t$. This unifying parametrisation allows us to establish theoretical connections to existing work and extend our methodology to models trained via flow matching \citep{albergo2023stochastic,lipman2022flow}.
\vspace{-2pt}

Strictly generating samples that resemble the overall data distribution may lack practical applications.
Fortunately, the progressive generation process of diffusion models naturally allows us to apply more flexible probabilistic inference, unlocking a variety of approaches and applications.
For instance, in diffusion posterior sampling and inference-time steering \citep{dhariwal2021diffusion,ho2022classifier,song2023pseudoinverse,chungdiffusion,song2023loss,trippediffusion,rozet2024learning,schneuing2024structure,kongdiffusion}, the goal is to generate samples that satisfy specific constraints or exhibit desired attributes.
In diffusion model composition \citep{liu2022compositional,du2023reduce,ajay2023compositional,biggs2024diffusion,skreta2024superposition,thornton2025composition}, multiple diffusion models are combined to produce samples with richer attributes.
Also, in sampling tasks, diffusion models can be used to accelerate standard algorithms such as annealed importance sampling or parallel tempering \citep{doucet2022score,chen2024sequential,zhang2025accelerated}.\vspace{-2pt}

While heuristic methods such as guidance can be effective for these tasks, they often introduce bias due to \emph{ad hoc} design choices.
By contrast, probabilistic inference techniques offer a principled approach to eliminating such bias and can lead to more reliable performance.
A central requirement for applying these techniques is to evaluate or approximate the sample density under a pretrained diffusion model along the generation trajectory.
One classic approach reformulates the diffusion process as a probability flow ODE \citep[PF-ODE, ][]{song2021score} and applies the instantaneous change-of-variables formula \citep{chen2018neural}; however, this is computationally prohibitive, as it requires calculating the divergence of the score network at every denoising step.
To address this difficulty, some works have developed sequential Monte Carlo (SMC) algorithms based on twisting functions or Feynman–Kac formulations, which bypass the need for explicit density evaluation \citep{wu2023practical,skreta2025feynman,singhal2025general}.
Other approaches introduce diffusion density estimators leveraging the Feynman–Kac formula or It\^o's lemma \citep{huang2021variational,premkumar2024diffusion,karczewski2024diffusion,skreta2024superposition}.
Alternatively, one can directly train energy-parametrised diffusion models \citep{du2023reduce,phillips2024particle,thornton2025composition,zhang2025accelerated}, which provide explicit access to the unnormalised marginals along the process.
\vspace{-4pt}

\paragraph{Our contributions.} Despite the above advances, these methods remain disparate in their scope.
The connections between these approaches remain unclear, and many depend on specialised designs, which can limit their applicability.
In this paper, we close this gap with \textsc{Radon–Nikodym Estimator} (RNE), a \emph{unified, flexible, and plug-and-play} framework that enables density estimation, SMC weight computation for inference-time control, and better training of energy-based diffusion.\vspace{-2pt}
\begin{itemize}[leftmargin=*]
 \item For inference-time control, RNE can \textit{compute SMC weights for \textbf{any} sampling process without re-deriving the formula}, enabling a wide variety of options, such as \citet{chungdiffusion}, \citet{song2023loss}, and \citet{singhal2025general}. This opens up broader design spaces and offers \emph{better inference-time scaling performances.} For energy-based training, RNE yields a simple yet effective regulariser, significantly improving the learned energy with negligible computational overhead.
    \item RNE \textit{generalises and unifies a wide range of established methods}---such as the twisted diffusion sampler \citep{wu2023practical}, Feynman–Kac steering \citep{singhal2025general}, Feynman–Kac corrector \citep{skreta2025feynman}, guidance corrector \citep{lee2025debiasing}, It\^o density estimator \citep{karczewski2024diffusion,skreta2024superposition}, Feynman–Kac density estimator \citep{huang2021variational,premkumar2024diffusion}, and Fokker–Planck regulariser \citep{plainer2025consistent}---that may appear distinct at first glance.
\item  RNE is not restricted to Gaussian diffusion.
It applies broadly to any generative model that admits a pair of dynamics that are time-reversal.
This includes stochastic interpolants and bridge models \citep{shi2023diffusion,peluchetti2023diffusion,albergo2023stochastic}, as well as more processes in other modalities such as continuous-time Markov chains \citep[CTMC,][]{lou2023discrete,shi2024simplified}.
\end{itemize}

\vspace{-10pt}
\section{Methods}
\vspace{-8pt}
\begin{figure}[t]
\centering
\begin{subfigure}{0.6\textwidth}
    \centering
      \includegraphics[width=0.9\linewidth]{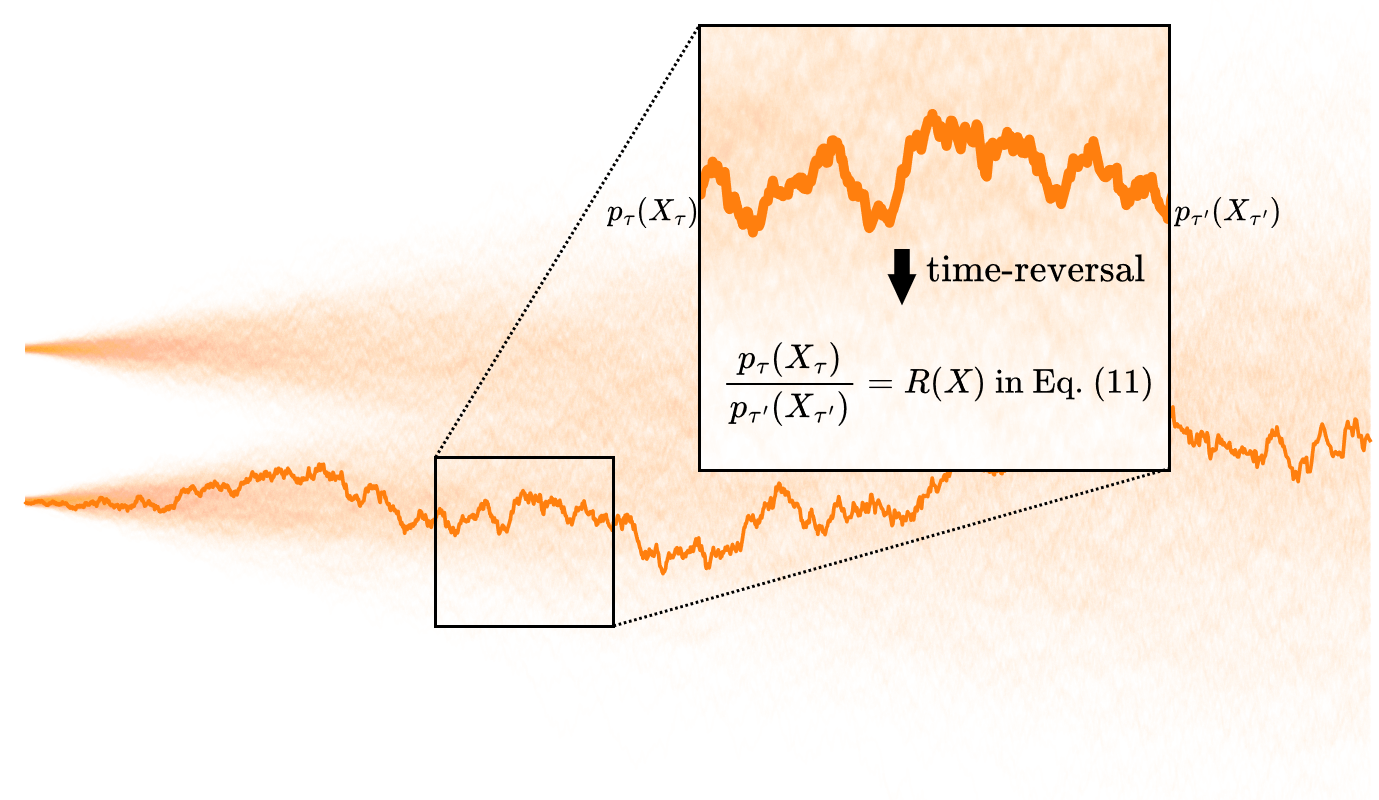}
     \caption{Radon-Nikodym Estimator (RNE).}\label{fig:1a}
\end{subfigure}
\begin{subfigure}{0.39\textwidth}
\centering
      \includegraphics[width=0.88\linewidth]{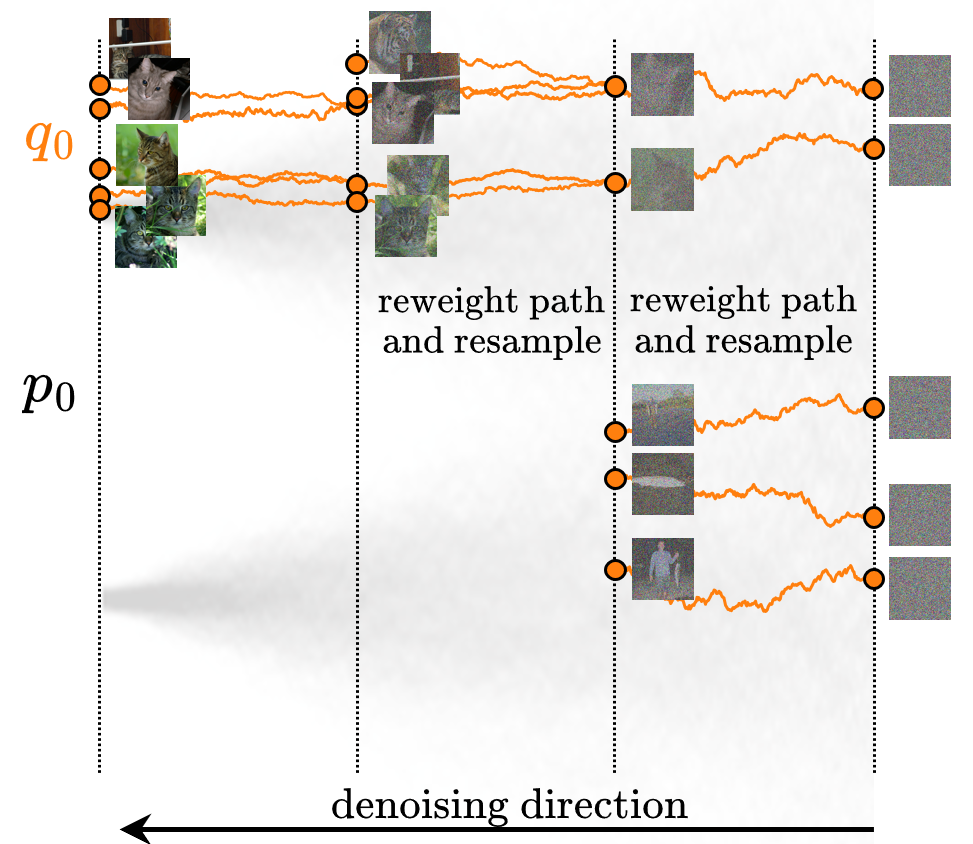}
     \caption{Inference-time control with RNC.}\label{fig:1b}
\end{subfigure}\vspace{-7pt}
\caption{Conceptual illustration of our proposed approach. (a) RNE leverages the fact that RND between time-reversal processes is 1 to calculate marginal densities.
(b) RNC applies RNE to calculate importance weights for inference time control.}\vspace{-14pt}
\end{figure}

In many applications of diffusion models, including inference-time steering or model composition, we need access to the marginal density $p_t$ at time step $t$ of the diffusion process.
Unfortunately, this is generally intractable for a score-based diffusion model. 
Instead, in most cases, it is easy to access the transition kernels (e.g., denoising or noising  kernels)  of the diffusion model.
Therefore, a natural question is: \emph{can we connect the transition kernels of an SDE with its marginal densities?}

Before considering diffusion models in the form of \Cref{eq:dm_fwd,eq:dm_bwd}, it is helpful to consider their discrete counterparts.
The transition kernels in discrete time are defined by the conditional densities $p_{n|n+1}$ and $p_{n+1|n}$.
Therefore, the question we are seeking to answer is:
 \emph{can we connect conditional densities with {\color{myred}marginal} densities?}
In fact, this is precisely what Bayes' rule states:
\begin{align}\label{eq:bayes}
    p_{n|n+1}(X_n|X_{n+1}){\color{myred} p(X_{n+1})}  =    p_{n+1|n}(X_{n+1}|X_n){\color{myred} p(X_n)}, \quad \forall (X_n, X_{n+1}) \in {\mathcal{X}\times \mathcal{X}}.
\end{align}
Under a Bayesian interpretation, the forward and backward transition kernels play the roles of the likelihood and the Bayesian posterior. 
Ordinarily, one’s goal is to infer the posterior, but here---assume both kernels are available---we can directly form their ratio to compute the ratio of marginals.
\par
A similar conclusion also exists in continuous time through the concept of time-reversal.
Specifically, considering the SDE evolving from $p_\tau$ to $p_{\tau'}$:\vspace{-3pt}
\begin{align}\label{eq:fwd}
    \mathrm{d} X_t = \mu_t(X_t) \mathrm{d}t + \epsilon_t \fwd{\mathrm{d}W_t},\quad  X_\tau\sim p_\tau ,
\end{align}
given regularity on $\mu_t$, one can define its time-reversal \citep{anderson1982reverse,9a350bc3-5552-3c07-bec5-2119235ec937}:\vspace{-3pt}
\begin{align}\label{eq:bwd}
    \mathrm{d} X_t = \nu_t(X_t) \mathrm{d}t + \epsilon_t\bwd{\mathrm{d}W_t},\quad  X_{\tau'}\sim p_{\tau'} ,\vspace{-2.5pt}
\end{align}
where $\nu_t = \mu_t - \epsilon_t^2 \nabla \log p_t$, and $p_t$ is the law for $X_t$.  
The forward and backward diffusion processes in \Cref{eq:dm_fwd,eq:dm_bwd} (or equivalently \Cref{eq:general_dm_fwd,eq:general_dm_bwd}) exemplify this time-reversal pairing.

A key observation is that while the processes in \Cref{eq:fwd,eq:bwd} evolve in opposite directions, they induce the same probability measure over the path space.
Therefore, their Radon-Nikodym derivative (informally, the ``density ratio")
is always 1.
Let $\fwd{\P}^\mu$ and $\bwd{\P}^\nu$ be the path measures of \Cref{eq:fwd,eq:bwd} respectively. We have ${ \mathrm{d} \fwd{\P}^\mu}/{ \mathrm{d}\bwd{\P}^\nu}(Y_{[\tau, \tau']}) \!= \! 1 $, 
where $Y_{[\tau, \tau']}$ is the solution to \emph{any} It\^{o} process within the time-horizon $[\tau, \tau']$, with diffusion coefficient $\epsilon_t$.
The expression is merely the definition of time reversal.  
However, when discretising the SDEs with, e.g., the Euler-Maruyama integrator, we can easily see how this definition connects marginals with transition kernels.
Concretely, we first split the time horizon with $N$ time steps $\tau = t_1 <  t_2 < \cdots < t_N = \tau'$. 
Then at each step, with $\Delta t_n = |t_{n+1}-t_n | $, the forward and backward processes are defined as \vspace{-2.5pt}
\begin{align}\label{eq:fwd_gaussian}
   &  p_{n+1|n}^\mu(X_{t_{n+1}}|X_{t_{n}}) = \mathcal{N} ( X_{t_{n+1}}| X_{t_{n}} + \mu_{t_n}(X_{t_{n}})  \Delta t_n, \epsilon_{t_{n}}^2\Delta t_n I),\\
    & p_{n|n+1}^\nu(X_{t_{n}}|X_{t_{n+1}})  =\mathcal{N} \left(X_{t_n}|X_{t_{n+1}}\! \!- \nu_{t_{n+1}}(X_{t_{n+1}})\Delta t_n, \epsilon_{t_{n+1}}^2\Delta t_n I\right). \label{eq:bwd_gaussian}
\end{align} 
After discretising, \vspace{-4pt}${ \mathrm{d} \fwd{\P}^\mu}/{ \mathrm{d}\bwd{\P}^\nu}(Y_{[\tau, \tau']}) =  1 $ becomes \vspace{-.53pt}
   $\frac{ {\color{myred} p_\tau(Y_\tau)}  \prod_{n=1}^{N-1} p^\mu_{n+1|n}(Y_{t_{n+1}}|Y_{t_n}) }{ {\color{myred} p_{\tau'}(Y_{\tau'})} \prod_{n=1}^{N-1} p^\nu_{n|n+1} (Y_{t_n}|Y_{t_{n+1}})} \approx 1.$

 This approximation is exact as $N \!\!\shortrightarrow \!\!\infty$. 
Formally, we define the quantity $R$ as follows:\vspace{-1pt} %
\begin{mdframed}[style=highlightedBox]
\begin{definition}\small
    Consider the forward and backward SDEs in \Cref{eq:fwd,eq:bwd}. 
Let $Y$ be the solution to an arbitrary process with the same diffusion coefficient. 
Following  \Cref{eq:fwd_gaussian,eq:bwd_gaussian}'s discretisation, we define \footnotemark \vspace{-7pt}
\begin{align}
\label{eq:R_definition}
    R^\nu_\mu (Y_{[\tau, \tau']})  = \lim_{N\rightarrow \infty} \frac{  \prod_{n=1}^{N-1} p^\nu_{n|n+1} (Y_{t_n}|Y_{t_{n+1}})} {   \prod_{n=1}^{N-1} p^\mu_{n+1|n}(Y_{t_{n+1}}|Y_{t_n}) }.
\end{align}\vspace{-12pt}
\end{definition} 
\end{mdframed}\vspace{-4.5pt} \footnotetext{The limit in \Cref{eq:R_definition} can be understood in an almost sure sense.}
With this definition, we obtain the following identity for the $\mu$ and $\nu$ processes satisfying time-reversal.\vspace{-2pt}
\begin{align}\label{eq:R_RNE_apply}
     { {\color{myred} p_\tau(Y_\tau)}  }/{ {\color{myred} p_{\tau'}(Y_{\tau'})} } = R^\nu_\mu (Y_{[\tau, \tau']}) 
\end{align}\vspace{-3pt}
We note that the limit in \Cref{eq:R_definition} can be formalised \citep{berner2025discrete} thus $R^\nu_\mu$ can be expressed as\vspace{-3pt}
\begin{align}\label{eq:continuous_rne}
  R^\nu_\mu(Y_{[\tau, \tau']}) = \exp \Big(
\int_{\tau}^{\tau'}  \frac{1}{\epsilon_t^2} \nu_t \cdot \bwd{\mathrm{d}Y_t }  -  \int_{\tau}^{\tau'}  \frac{1}{\epsilon_t^{2}}\mu_t \cdot \fwd{\mathrm{d}Y_t } + \frac{1}{2} \int_{\tau}^{\tau'} \frac{1}{\epsilon_t^2}(||\mu_t ||^2 -\!\! ||\nu_t||^2)\mathrm{d}t
\Big),
\end{align}
For a more detailed discussion on this expression, we defer to \citet[][eq. 14-15]{vargas2023transport}. For readers not familiar with It\^{o} integrals, we highlight that our approach can be fully understood and implemented as \Cref{eq:R_definition} with finite $N$ and simple Gaussian kernels. That said, the connection to its continuous-time counterpart will enable us to design better estimators. 
\par

In summary, as conceptually shown in \Cref{fig:1a}, for the denoising process of a pretrained diffusion model, we can always pair it with its time-reversal, up to training error.
By exploiting the fact that the Radon-Nikodym derivative between any diffusion process and its time‐reversal is identically one, we obtain a simple and intuitive formula that ties together marginal densities and transition kernels. 
We call this identity the \textsc{Radon–Nikodym Estimator} (RNE). 
\par
A direct application of this relation is for density estimation: when $\tau' = 1$, $p_{\tau'}$ is tractable, typically as a Gaussian distribution.
Interestingly, when writing $R$ in continuous-time as \Cref{eq:continuous_rne}, we will recover the estimator with density-augmented SDE  (\citealp{karczewski2024diffusion}, Theorem 1), and equivalently, in their concurrent work,  It\^o density estimator (\citealp{skreta2024superposition}, Theorem 1).
We will discuss this connection and application in more detail in \Cref{subsec:rnde}.
In the following sections, we will mainly focus on demonstrating RNE for inference-time control and energy-based training.

\vspace{-2pt}
\subsection{RNE for Inference-time Control}
\vspace{-2pt}
RNE provides a flexible and plug-and-play approach for calculating the weight of Sequential Monte Carlo (SMC) for inference-time control.
Given a pretrained diffusion model which samples from distribution $p_0$ (or two pretrained diffusion models for $p_0^{(1)}$ and $p_0^{(2)}$), we may want to generate samples from a new target $q_0$ without retraining the model(s). 
This includes, but not limited to:\textbf{ (1)  annealing:} $q_0 \propto p_0^\beta$; 
  \textbf{(2) reward-tilting/posterior sampling:} $q_0 \propto p_0 \exp(r)$ with a reward/likelihood  $r$; 
and  \textbf{ (3) classifier-free guidance ($\alpha=1-\beta$) or model product ($\alpha=\beta$):} $q_0 \propto (p_0^{(1)})^{\alpha} (p_0^{(2)})^\beta$.

One naive approach to generate samples from $q_0$ is importance sampling (IS).
Specifically, we can estimate the density $p_0$ of generated samples from the pretrained diffusion model, and then calculate the importance weight as $q_0 / p_0$.
However, when $p_0$ and $q_0$ differ significantly, the importance weight will have a large variance, rendering this approach infeasible in practice.
Therefore, we consider applying importance resampling along the sampling process, essentially  forming the Sequential Monte Carlo (SMC) algorithm.
In the following, we first describe SMC, and then introduce RNE to calculate the importance weights, which we refer to as the Radon-Nikodym Corrector (RNC).

\vspace{-4pt}
\subsubsection{Sequential Monte Carlo}\label{subsubsec:smc}
\vspace{-4pt}
Conceptually, SMC distributes the burden of importance sampling across the entire path, thereby reducing variance at each step.
To apply SMC in this setting, we first define a sequence of intermediate target distributions corresponding to each time step.
Next, we specify a proposal process from which samples are drawn.
Since particles generated by the proposal may not align perfectly with the intermediate targets, we apply importance resampling to progressively realign the particles with the intended sequence of targets.
To apply this procedure, we introduce three key components:\vspace{-5pt}
\begin{enumerate}[leftmargin=*]
  \item 
    \refstepcounter{equation}\label{eq:sampling}%
    \textbf{a backward sampling (``proposal") process:} 
    $\mathrm{d}X_t = a_t(X_t)\,\mathrm{d}t + \epsilon_t\,\bwd{\mathrm{d}W_t};  \hfill(\theequation)$
    \item  \refstepcounter{equation}\label{eq:target_process}%
    \textbf{a forward auxiliary (``target") process:}
    $ \mathrm{d} Y_t = b_t(Y_t) \mathrm{d}t + \epsilon_t\fwd{\mathrm{d}W_t};  \hfill(\theequation)$
    \item  
  \textbf{intermediate target marginal densities: }
    $q_t$.  
\end{enumerate}\vspace{-5pt}
Note that the sampling and target processes are not the same; they are generally not required to be time-reversals.
We hence denote the process in \Cref{eq:sampling} by $X$ and that in \Cref{eq:target_process} by $Y$.
In fact, we have a large flexibility in designing these components.
We will discuss the choices of the backward and forward processes in \Cref{subsubsec:choice_of_fwd_bwd}.
For the intermediate marginal, we can heuristically choose:
 \begin{equation}\label{eq:examples}
    \begin{aligned}
        &\textbf{anneal:}  q_t   \propto   p_t^\beta;\ \textbf{reward-tilting:}  q_t \propto  p_t  \exp(r_t);
        \ \textbf{CFG \& product:}  q_t   \propto   (p_t^{(1)})^{\alpha} (p_t^{(2)})^{\beta}
    \end{aligned}    
    \end{equation}
     where $r_t$ is an intermediate reward which can be heuristically crafted \citep{wu2023practical}, or be a reward model trained on the corresponding noisy data~\citep{kongdiffusion}.

We now consider how to apply these components.
Assume we have $M$ particles $\{X_{\tau'}^{(m)}\} \sim q_{\tau'}$ at time $\tau'$, now we consider how to  obtain particles following $q_{\tau}$ at time $\tau$  ($\tau < \tau'$).
We first evolve the particles along the backward sampling process in \Cref{eq:sampling} to time step $\tau$, resulting in $M$ trajectories $\{X^{(m)}_{[\tau, \tau']}\}$.
However, these trajectories will not follow the marginal density $q_\tau$ at time $\tau$. 
Therefore, we need to resample to ensure asymptotically unbiased samples from $q_\tau$, as explained in the following. 
\par
Let $\bwd{\Q}^{a}$ be the path measure of \Cref{eq:sampling} in $t\in [\tau, \tau']$ with initial density at $\tau'$ as $q_{\tau'}$, and let $\fwd{\Q}^{b}$ be the path measure of \Cref{eq:target_process} in $t\in [\tau, \tau']$ with initial density at $\tau$ as $q_{\tau}$.
Here we slightly abuse the notion for simplicity: we should understand  $\bwd{\Q}^{a}$ as $\bwd{\Q}^{a, q_{\tau'}}_{[\tau, \tau']}$, the path measure starting from $q_{\tau'}$ at time ${\tau'}$ and ends at time $\tau$, and also understand $\fwd{\Q}^{b}$ similarly.
The importance weight of $X_{[\tau, \tau']}$ is then\vspace{-3pt}
 \begin{align}\label{eq:is_weight}
      w_{[\tau, \tau']}(X_{[\tau, \tau']}) =   {\mathrm{d} \fwd{\Q}^{b} }/{\mathrm{d} \bwd{\Q}^{a}}(X_{[\tau, \tau']})   = 
{{q_\tau(X_\tau)}/{q_{\tau'}(X_{\tau'})} \left[R^a_b (X_{[\tau, \tau']})\right]^{-1}},
    \end{align}
where $R^a_b$ are defined in \Cref{eq:R_definition}.
Note that while \Cref{eq:is_weight} calculates the weight over path space, we can verify that this yields a correct importance weight for the marginal $q_\tau$, as shown in \Cref{app:correct_smc_weight}.

\par
We then perform self-normalised importance resampling: first, we normalise the importance weights $   \bar{w}^{(m)} \leftarrow \frac{w_{[\tau,\tau']}(X^{(m)}_{[\tau,\tau']})}
                     {\sum_{m'=1}^M w_{[\tau,\tau']}(X^{(m')}_{[\tau,\tau']})}$, and sample $M$ indices from the Categorical distribution defined with these weights
                     $
 \{i_m\} \sim \mathrm{Categorical}\bigl(\bar{w}^{(1)}, \cdots, \bar{w}^{(M)}\bigr)$.
We return the particles corresponding to the resampled indices.  
This ensures the samples at time $\tau$ follow the desired target \(q_\tau\) as \(M\to\infty\).
We repeat this pipeline until reaching $q_0$ and this process is conceptually illustrated in \Cref{fig:1b}.
\vspace{-5pt}
\subsubsection{Calculating Importance Weights with RNC}
\vspace{-5pt}
Now, we consider how to calculate the importance weight in \Cref{eq:is_weight}.
The term $R^a_b$ is simply a ratio of products of Gaussian densities when discretised, the only unknown term is the ratio between two marginals ${q_\tau(X^{(m)}_\tau)}/{q_{\tau'}(X^{(m)}_{\tau'})}$.
Fortunately, as we define the intermediate target $q_t$ by modifying the marginal $p_t$ of the pre-trained diffusion model as exemplified in \Cref{eq:examples}, we can express this unknown ratio using the pre-trained model's marginals.
Precisely, plugging in the RNE in \Cref{eq:R_definition}: $
    { {\color{myred} p_\tau(Y_\tau)}  }/{ {\color{myred} p_{\tau'}(Y_{\tau'})} } = R^\nu_\mu (Y_{[\tau, \tau']})$, we obtain the following results:\vspace{-3pt}
\begin{mdframed}[style=highlightedBox]\small
\textbf{RN Corrector (RNC).} Consider a pair of time-reversal forward and backward SDEs in \Cref{eq:fwd,eq:bwd} with drifts $\mu_t$ and $\nu_t$ (or two pairs:   $\mu^{(1)}$ \& $\nu^{(1)}$  and    $\mu^{(2)}$ \& $\nu^{(2)}$). 
The SMC weight in \Cref{eq:is_weight} is given by
\vspace{-5pt}
\begin{align}\label{eq:anneal_w}
&\text{Anneal:} 
   &&  w_{[\tau, \tau']} \propto \left[R_\mu^\nu(X_{[\tau, \tau']}) \right]^\beta \left[R^a_b(X_{[\tau, \tau']})\right]^{-1} .\\
 &\text{Reward:}   &&  w_{[\tau, \tau']}  \propto  \frac{\exp(r_\tau(X_\tau))}{\exp(r_{\tau'}(X_{\tau'}))} R_\mu^\nu(X_{[\tau, \tau']}) \left[R^a_b(X_{[\tau, \tau']})\right]^{-1} .
 \label{eq:reward_w}
\\
  &\text{CFG \& Product:}   &&  w_{[\tau, \tau']} \propto \left[R_{\mu^{(1)}}^{\nu^{(1)}}(X_{[\tau, \tau']})\right]^\alpha \left[R_{\mu^{(2)}}^{\nu^{(2)}}(X_{[\tau, \tau']})\right]^\beta  \left[R^a_b(X_{[\tau, \tau']})\right]^{-1}  .
   \label{eq:prod_w}
\end{align}
\end{mdframed}\vspace{-10pt}
In summary, when performing SMC with RNC, we start from a pair of time-reversal forward and backward processes, which provides an estimate for the marginal density ratio.
We then choose the sampling process, target process and intermediate marginal defined in \Cref{eq:sampling,eq:target_process,eq:examples}, and calculate the SMC weight as above.
Importantly, all components in the importance weight  can be approximated by Gaussian kernels \footnote{As a concrete example, let's inspect the anneal IS weights in \cref{eq:anneal_w}:\vspace{-4pt}
\begin{align}\label{eq:example_anneal_w}
     w_{[\tau, \tau']} \approx \left(\frac{  \prod_{n=1}^{N-1} p^\nu_{n|n+1} (X_{t_n}|X_{t_{n+1}})} {   \prod_{n=1}^{N-1} p^\mu_{n+1|n}(X_{t_{n+1}}|X_{t_n}) }\right)^\beta \frac {   \prod_{n=1}^{N-1} p^{b}_{n+1|n}(X_{t_{n+1}}|X_{t_n}) } {  \prod_{n=1}^{N-1} p^{a}_{n|n+1} (X_{t_n}|X_{t_{n+1}})} ,
\end{align}}. 
\emph{Note that, in practice, we only need to calculate $\Delta \log w$ for each denoising step with negligible computation cost.}
\vspace{-5pt}
\subsubsection{Design Choices of the Sampling and Target Process}\label{subsubsec:choice_of_fwd_bwd}
\vspace{-5pt}

Notably, the RN Corrector works for any choice of drifts $a_t$ and $b_t$ in \Cref{eq:sampling,eq:target_process}.
We now examine two specific scenarios for designing the sampling and target processes:\vspace{-2pt}
\begin{itemize}[leftmargin=*]
    \item In the first scenario, suppose we have access to the perfect diffusion model---that is, we know the exact forms of both $\mu_t$ and $\nu_t$, and can therefore evaluate the forward and backward kernels $p_{n+1|n}^{\mu}$ and $p_{n|n+1}^{\nu}$ as $N\!\rightarrow \!\infty$. 
In this setting, we are free to choose any sampling and target processes, and this formulation supports flexible applications including annealing, reward-tilting or product.
\item if the diffusion model is imperfectly trained, we only have access to the denoising drift $\nu_t$ parameterised by the imperfect score network\footnote{The normal perspective is that we define the noising process but do not know the perfect reversal process. However, we can also interpret this case as we know the denoising process defined via the learned network, while not having access to its exact time-reversal along the noising direction.}.
Hence, we no longer get access to its time-reversal term with drift $\mu_t$. 
In this case, we can set the target process's drift $b_t$ to cancel this unknown term. 
However, this formulation is limited to reward-tilting as we will explain later.\vspace{-5pt}
\end{itemize}
\vspace{-5pt}
\paragraph{\faHandPointRight \ Perfect diffusion model: flexible design choices}
Assume a perfect diffusion model where the noising and denoising process with forward and backward drift $\mu_t$ and $\nu_t$ are time-reversals.
In this case, we enjoy great flexibility in choosing the sampling and target process.
The only approximation error then arises from the time discretisation, which disappears as the discretisation steps $N\rightarrow \infty$.
\par
In \Cref{app:heuristic_a_b}, we list some heuristic choices of the sampling and target processes for anneal, reward-tilting and produce cases.
\emph{Note that they are not exhaustive---any suitable heuristics can be used without altering the core SMC algorithm implementation.}
We present an example pseudocode in \Cref{subsec:details} to highlight this Macro-like property of RNC.
\par
Interestingly, the expressions in \Cref{eq:anneal_w,eq:reward_w,eq:prod_w} recover FKC \citep{skreta2025feynman} as special cases for certain choices of $a$ and $b$.
We discuss this connection in \Cref{sec:fkc_in_rnc}.
FKC derives its weights via the Feynman-Kac PDE and then designs the sampling process to cancel the costly divergence term. 
Therefore, FKC has restrictive design choices.
By contrast, our RNC features higher flexibility in selecting these processes, yet still incurs no extra computational overhead, allowing us to heuristically select a process pair that may reduce variance \citep{jarzynski1997nonequilibrium,neal2001annealed}. 
Moreover, FKC requires deriving the weight formula for each task (anneal, product, etc), while RNC provides a macro-style ``\textit{plug-and-play}'' recipe for computing importance weights.\vspace{-2pt}

\vspace{-5pt}
\paragraph{\faHandPointRight\ Imperfect diffusion model: choice for cancellation (reward-tilting)}
So far, we assumed a perfect diffusion model, which gives us access to an SDE and its reversal. 
However, in practice, we typically encounter model imperfection and time-discretisation errors when calculating the marginal density ratio \Cref{eq:R_RNE_apply}.
Consequently, the resampled $X_\tau$ will not follow $q_\tau$ exactly, even as the number of samples $M\rightarrow \infty$.
Fortunately, in the reward-tilting case, we can still obtain exact importance weights, despite the discretisation and score estimation errors:
\begin{mdframed}[style=highlightedBox]
\vspace{-5pt}\small
\begin{proposition}[Exact SMC weight for reward-tilting with imperfect diffusion model]\label{prop:luhuan}
\begin{align}
 w_{[\tau, \tau']}  \propto  \frac{\exp(r_\tau(X_{t_1=\tau}))}{\exp(r_{\tau'}(X_{t_N=\tau'}))}  \frac{\prod_{n=1}^{N-1} p^{\nu}_{n|n+1} (X_{t_n}|X_{t_{n+1}})}{\prod_{n=1}^{N-1} p^{a}_{n|n+1} (X_{t_n}|X_{t_{n+1}})}
\end{align}
\end{proposition}\vspace{-10pt}
\end{mdframed}\vspace{-5pt}
We provide a detailed derivation and explain why it is only applicable to reward-tilting in \Cref{app:luhuan}.
This formulation recovers the Twisted Diffusion Sampler \citep[TDS, ][eq.11]{wu2023practical}, and follow-up works such as \citet{dou2024diffusion} and Feynman-Kac Steering \citep{singhal2025general}.

In summary, {RNC allows us to compute the importance-sampling weights for SMC without requiring explicit knowledge of the marginal density}, thereby providing great flexibility in designing inference‐time control while maintaining a plug-and-play algorithm.

\vspace{-4pt}

\subsection{RNE for Regularising  Energy-based Diffusion Model}\label{sec:beyond_inference}
\vspace{-4pt}
Another application of RNE is to improve the training of energy-based diffusion models.
These models have a variety of applications in machine-learning force fields \citep{arts2023two}, free-energy estimation \citep{mate2024neural}, neural sampler \citep{phillips2024particle,zhang2025accelerated} and model composition \citep{du2023reduce}, among others.
Concretely, we aim to train a diffusion model whose network outputs a scalar energy. However, the denoising score matching objective \citep{6795935} suffers from a ``blindness" issue \citep{zhang2022towards}, leading to inaccurate energy estimates.
\par
RNE offers a natural way to enhance the accuracy of the energy-based diffusion model.
Specifically, in addition to the standard DSM, we introduce the following regularisation to enforce \Cref{eq:R_RNE_apply}.
\begin{align}\label{eq:rne_reg}
  \mathcal{R} =\E_{\texttt{sg}(X_{[\tau, \tau']})} \| \texttt{sg}(\log R^\nu_\mu(X_{[\tau, \tau']})) + \log p_{\tau'}(X_{\tau'}) - \log p_\tau(X_{\tau})\|^2
\end{align}
where $ \log p_\tau$ and $\log p_{\tau'}(X_{\tau})$ are given by the energy-parametrised diffusion model and $[\tau, \tau']$ is a randomly selected time horizon.
\texttt{sg} represents stop-gradient.
In practice, we can select a small time increment $\Delta t$, and apply this regularisation between randomly selected adjacent time steps $t$ and $t+\Delta t$.
We then calculate $R^\nu_\mu$ using a single forward and a single backward kernel.
As discussed in \Cref{app:connections} and proved in \Cref{app:energy_fpk_proof}, this regularisation is equivalent to the regularisation derived from the Fokker–Planck equation in continuous time \citep{plainer2025consistent}.  
However, our approach does not require computing or estimating the divergence, providing a more efficient alternative.

\vspace{-5pt}
\subsection{RNE for CTMC}\label{sec:ctmc_rne_main}
\vspace{-5pt}
RNE conceptually only requires a pair of dynamics that are time reversals of each other; hence it can be applied to other modalities, such as continuous-time Markov chains (CTMCs).
All the results discussed above remain valid; the only difference is that \(R\) is now defined in terms of the rate matrices. 
We provide further details on CTMC-RNE in \Cref{app:discrete_diffusion}.

\vspace{-5pt}
\section{Stabilising RNE with Reference and Convergence Analysis}\label{sec:ref}
\vspace{-5pt}
So far, we define $R$ in \Cref{eq:R_definition}, and apply this concept for inference time control and energy-based training.
However, in practice, calculating $R$ by directly discretising the forward and backwards SDE as in \Cref{eq:R_definition} can lead to instability and larger accumulated error.
We provide an intuition behind this instability in \Cref{sec:intuition_ref}. 
At a high level, this issue arises because, at each discretisation step, the variances of the forward and backward kernels are misaligned.

To address this issue, we introduce an analytical reference \citep{vargas2023denoising}: consider an SDE with linear drift $\phi$ whose initial state $\pi_0$ is Gaussian,  
In this case, the marginal density $\pi_t$ remains Gaussian at all times, and one can derive the exact time-reversal drift $\psi_t = \phi_t - \epsilon_t^2 \nabla \log \pi_t$.
Let $\color{Cerulean}\fwd{\mathbb{\Pi}}^{\phi}$  and $\color{Cerulean}\bwd{\mathbb{\Pi}}^{\psi}$ be the path measures of this analytical pair. We can rewrite \Cref{eq:R_definition} as follows:
\begin{align}
     R^\nu_\mu (Y_{[\tau, \tau']}) 
     &= \frac{p_{\tau} (Y_{\tau})}{p_{\tau'}(Y_{\tau'})}\frac{\dd\bwd{\mathbb{P}}^\nu}{\dd\fwd{\mathbb{P}}^\mu} (Y_{[\tau, \tau']}) 
     = \frac{p_{\tau} (Y_{\tau})}{p_{\tau'}(Y_{\tau'})}\frac{\dd\bwd{\mathbb{P}}^\nu}{\color{Cerulean}\dd \bwd{\mathbb{\Pi}}^{\psi}}(Y_{[\tau, \tau']})   \frac{\color{Cerulean}\dd \fwd{\mathbb{\Pi}}^{\phi}}{\dd\fwd{\mathbb{P}}^\mu}  (Y_{[\tau, \tau']}), \\
     & \approx \frac{ \color{Cerulean}\pi_{\tau} (Y_{\tau})}{ \color{Cerulean}\pi_{\tau'}(Y_{\tau'})}  \frac{  \prod_{n=1}^{N-1} p^\nu_{n|n+1} (Y_{t_n}|Y_{t_{n+1}})} {  \color{Cerulean} \prod_{n=1}^{N-1} p^\psi_{n|n+1}(Y_{t_n}|Y_{t_{n+1}})} \frac{ \color{Cerulean}  \prod_{n=1}^{N-1} p^\phi_{n+1|n}(Y_{t_{n+1}}|Y_{t_n}) } {   \prod_{n=1}^{N-1} p^\mu_{n+1|n}(Y_{t_{n+1}}|Y_{t_n}) } .\label{eq:R_with_ref}
\end{align}
By introducing the reference process $\mathbb{\Pi}$, we obtain the 
 Radon–Nikodym derivative path measures along the same direction, ensuring the variance of transition kernels is aligned after discretisation.
 In this work, we choose $\pi_0$ to be  Gaussian for simplicity. 
 However, it is not the only option---we can also use a Gaussian mixture adaptive to data.
 Using a reference similar to the data distribution may offer more accurate results, as observed by \citet{noble2024learned} in the context of neural samplers.
 \rebuttal{Additionally, we highlight that the reference process only involves calculating Gaussian kernels without any extra network evaluation, and hence has almost no computational overhead in practice.}
\par
We also note that direct Euler-Maruyama discretisation (as described in \cref{eq:direct_discretisation} in the appendix) of the continuous RNE in \cref{eq:continuous_rne} does not have such instabilities, providing a competitive practical alternative.
We include a detailed discussion in \cref{app:todo_discretisations}.
However, using our proposed reference still offers more accurate results, which we empirically verify in \Cref{fig:path_int} in \Cref{app:todo_discretisations}.

\rebuttal{
We now present our reference-based RNE's  convergence rate. 
Let's consider the case where we use the diffusion model defined in \Cref{eq:dm_fwd,eq:dm_bwd}, where  $\mu_t = f_t$ is a linear function, and $\nu_t = \mu_t - \sigma_t^2 \nabla \log p_t$ is the backward drift.
We choose a reference process whose forward drift $\phi_t=\mu_t$, and the initial marginal to be Gaussian.
Let's denote the drift of this time-reversal as $\psi_t = \mu_t - \sigma_t^2 \nabla \log \pi_t$.
In this case, we have the following non-asymptotic  guarantee:
\begin{proposition}\label{prop:r_bound}
Consider the case where $\mu_t$ is a linear forward drift, as in diffusion models. 
We choose an analytic reference by setting its drift as $\mu_t$ and $\pi_0$ to be Gaussian, and denote its time-reversal drift as $\psi_t$.
Assuming $Y$ has bounded $L^p$ moments, it follows that:
\vspace{-2pt}
\begin{align}
     ||  \log R^\nu_\mu(Y_{[\tau, \tau']}) - \log R^N(\hat{Y}_{[\tau, \tau']})  ||_{L^2} \leq \mathcal{O}(\sqrt{\Delta t})
\end{align}
$|| .||_{L^2}$ denotes the $L^2$ norm, $\hat{Y}$ is the Euler-Maruyama discretisation of $Y$, and $\log R^N(\hat{Y}_{[\tau, \tau']})$ is our discretised RNE estimator
$
 R^N(\hat{Y}_{[\tau, \tau']}) = \frac{ \pi_{\tau} (\hat{Y}_{\tau})}{ \pi_{\tau'}(\hat{Y}_{\tau'})}  \frac{  \prod_{n=1}^{N-1} p^\nu_{n|n+1} (\hat{Y}_{t_n}|\hat{Y}_{t_{n+1}})} {   \prod_{n=1}^{N-1} p^\psi_{n|n+1}(\hat{Y}_{t_n}|\hat{Y}_{t_{n+1}})} \frac{   \prod_{n=1}^{N-1} p^\phi_{n+1|n}(\hat{Y}_{t_{n+1}}|\hat{Y}_{t_n}) } {   \prod_{n=1}^{N-1} p^\mu_{n+1|n}(\hat{Y}_{t_{n+1}}|\hat{Y}_{t_n}) }$.
\end{proposition}
We further derive an error bound on the importance weights, considering both the discretisation error and the error in the score network:
\begin{proposition}\label{prop:non_asy_discrete_score}(Discrete time and approximate score bound) Following \citep{lee2023convergence,chen2022sampling},  we assume $
    || \nabla \log p_\tau(\hat{Y}_{\tau}) - s_\tau^{\theta}(\hat{Y}_{\tau}) ||_{L^2} \leq \epsilon_{\mathrm{score}},
$, 
then
\begin{align}
    ||  \log w^{\mathrm{exact}}(\hat{Y}_{[\tau, \tau']})  - \log w^{\mathrm{RNC}}_{\theta}(\hat{Y}_{[\tau, \tau']}) ||_{L^2} \leq E\epsilon_{\mathrm{score}} + P' \sqrt{\Delta t}
\end{align}
where $E, P'$ are positive constants, $w^{\mathrm{RNC}}_\theta(\hat{Y}_{[\tau, \tau']}) $ is the discrete-time weight estimated by $R^N(\hat{Y}_{[\tau, \tau']})$ with the learned score $s_\tau^{\theta}$, $w^{\mathrm{exact}}(\hat{Y}_{[\tau, \tau']})$ is the exact SMC weight.
\end{proposition}
\vspace{-5pt}
We prove these results with a more detailed discussion in \Cref{app:non-asy_res}.
}

\vspace{-6pt}
\section{Experiments}\label{sec:exp}
\vspace{-6pt}
In this section, we conduct comprehensive experiments to evaluate our approach for both inference-time control and energy-based training.
Please refer to Appendix for more details on the experiments.\vspace{-5pt}
\par

\begin{figure}[t]
  \begin{minipage}{0.48\textwidth}
  \begin{minipage}{\linewidth}
    \centering
    \captionof{table}{Inference-time annealing on ALDP. 
*SMC will reduce sample diversity, which predominantly influences $W_{2}$. Therefore, $W_2$ for ``anneal score" should not be directly compared against SMC methods. 
Instead, energy and distance TVD are less sensitive to sample diversity and are more comparable. }
    \label{tab:sample}\vspace{-8pt}
    \resizebox{\textwidth}{!}{%
\begin{tabular}{@{}lccc@{}}
\toprule
\textbf{Metric}                 & \textbf{Energy TV}($\downarrow$)  & \textbf{Distance TV}($\downarrow$) & \textbf{Sample $W_2$}($\downarrow$)  \\ \midrule
{Anneal score} (wo SMC)       & 0.794 & 0.023 & \textbf{0.173*} \\
{FKC}                & 0.338 & 0.022 & 0.289         \\
{RNC (\small $c_a=1, c_b=0$)} & 0.386 & 0.017 & 0.282         \\
{RNC (\small$c_a=0.6, c_b=0.4$)} & \textbf{0.034} & \textbf{0.011}  & {0.253} \\ \bottomrule
\end{tabular}%
}
  \end{minipage}
  \vspace{10pt}
  \begin{minipage}{\linewidth}
  \begin{minipage}{0.326\linewidth}
\centering\includegraphics[width=1.0\linewidth]{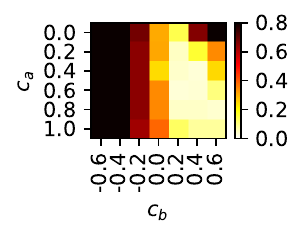}
  \end{minipage}
  \begin{minipage}{0.326\linewidth}
    \centering
    \includegraphics[width=1.0\linewidth]{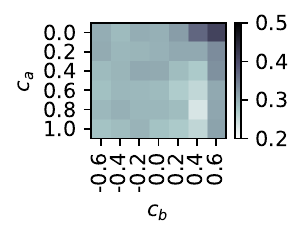}
  \end{minipage}
  \begin{minipage}{0.326\linewidth}
    \centering
    \includegraphics[width=1.08\linewidth]{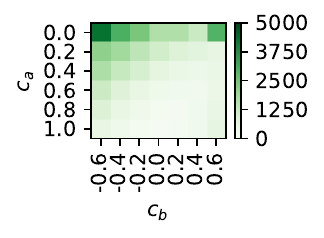}
  \end{minipage}
  \vspace{-10pt}
      \captionof{figure}{Energy TVD (left), sample $W_2$ (middle), and accumulated weight variance (right) by different pairs of $(c_a, c_b)$ for annealing on ALDP.}\label{fig:combined}
  \end{minipage}
 \begin{minipage}{\linewidth}
    \centering
    \captionof{table}{Quality of samples obtained by running denoising process (denoted as DM) and running MCMC on learned energy at $t=0$.\vspace{-5pt}}\label{tab:reg_aldp}
\resizebox{\linewidth}{!}{%
\begin{tabular}{@{}llc@{}}
\toprule
\textbf{Training method}          & \textbf{Sample Method} & \textbf{Sample $W2$} \\ \midrule
\multirow{2}{*}{\textbf{DSM}}     & \textbf{DM}            & 0.1811             \\
                                  & \textbf{MCMC}          & 0.9472             \\
\multirow{2}{*}{\textbf{RNE Reg}} & \textbf{DM}            & 0.1809             \\
                                  & \textbf{MCMC}          & 0.1836             \\ \bottomrule
\end{tabular}%
}
\end{minipage}
  \end{minipage}\hspace{5pt}\vspace{-5pt}
  \begin{minipage}{0.5\textwidth}
   \begin{minipage}{\linewidth}
\includegraphics[width=0.95\linewidth]{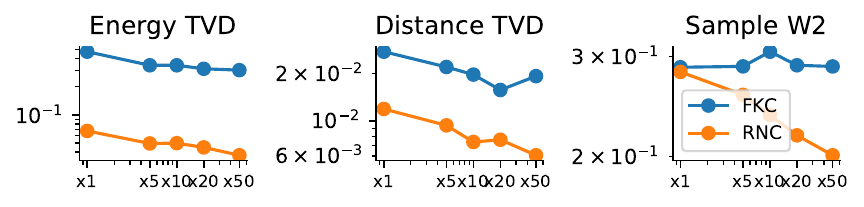}\vspace{-5pt}
\caption{Inference-time scaling on ALDP.}\label{fig:scale}
\end{minipage}\vspace{2pt}
\begin{minipage}{\linewidth}
    \centering
    \begin{subfigure}[t]{0.32\textwidth}
\includegraphics[width=\linewidth,trim=0 8 0 0,clip]{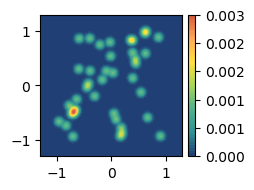}
      \caption{Ground truth}  
    \end{subfigure}
    \begin{subfigure}[t]{0.32\textwidth}
    \includegraphics[width=\linewidth,trim=0 8 0 0,clip]{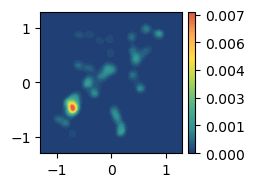}
      \caption{DSM}  
    \end{subfigure}
     \begin{subfigure}[t]{0.32\textwidth}
    \includegraphics[width=\linewidth,trim=0 8 0 0,clip]{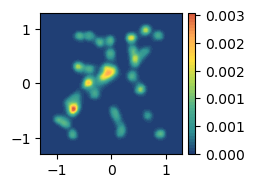}
      \caption{RNE reg.}  
    \end{subfigure}
    \vspace{-8pt}
    \caption{Learned density on 2D GMM.}\label{fig:energy}
\end{minipage}\vspace{2pt}
\begin{minipage}{\linewidth}
\setcounter{subfigure}{0}
    \begin{subfigure}[t]{0.32\textwidth}
\includegraphics[height=0.91\linewidth,trim=0 5 0 0,clip]{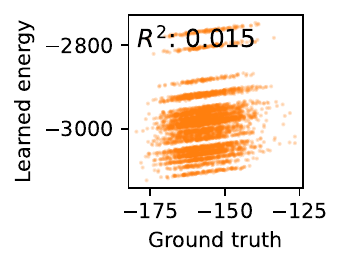}
      \caption{DSM}  
    \end{subfigure}\hspace{3pt}
    \begin{subfigure}[t]{0.32\textwidth}
\includegraphics[height=0.91\linewidth,trim=0 5 0 0,clip]{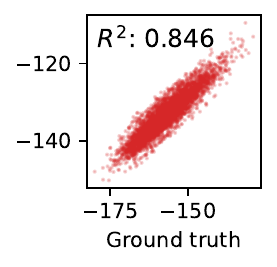}
      \caption{Dual SM}  
    \end{subfigure}\hspace{-1pt}
     \begin{subfigure}[t]{0.32\textwidth}
    \includegraphics[height=0.91\linewidth,trim=0 5 0 0,clip]{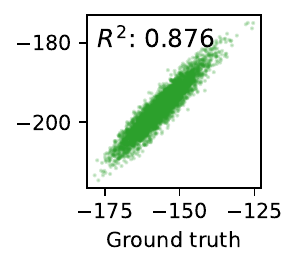}
      \caption{RNE reg.}  
    \end{subfigure}
    \vspace{-8pt}
    \caption{Learned energy vs. GT on 100D GMM.}\label{fig:gmm200D}
\end{minipage}
\begin{minipage}{\linewidth}
\setcounter{subfigure}{0}
    \begin{subfigure}[t]{0.3\textwidth}
\includegraphics[width=\linewidth,trim=0 13 0 0,clip]{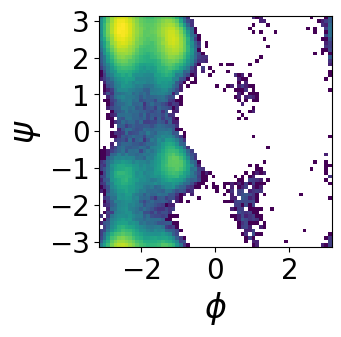}
      \caption{Ground truth}  
    \end{subfigure}\hspace{3pt}
    \begin{subfigure}[t]{0.3\textwidth}
    \includegraphics[width=\linewidth,trim=0 13 0 0,clip]{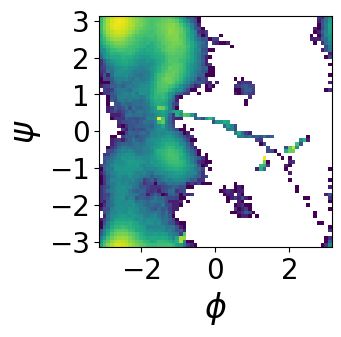}
      \caption{DSM}  
    \end{subfigure}\hspace{3pt}
     \begin{subfigure}[t]{0.3\textwidth}
    \includegraphics[width=\linewidth,trim=0 13 0 0,clip]{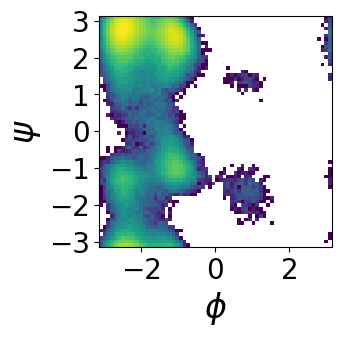}
      \caption{RNE reg.}  
    \end{subfigure}
    \vspace{-9pt}
    \caption{Samples by MCMC on learned energy.}\label{fig:mlff}
\end{minipage}
  \end{minipage}\vspace{-13pt}
\end{figure}

\vspace{-5pt}
\paragraph{Inference-time annealing}
We evaluate our proposed method for inference-time annealing on a small molecule, alanine dipeptide (ALDP). We train the model on $T_\text{high}=800K$ and  anneal it to $T_\text{low}=300K$.  
Since SMC typically suffers from low diversity,  we use a batch size of 500 and collect 50 batches to calculate the metrics.
We compare RNC against FKC \citep{skreta2025feynman} and, for reference, a baseline that merely rescales the score without  SMC correction. 
For RNC, we apply \Cref{eq:anneal_w} to calculate the importance weights, and select the sampling and target process heuristically as \Cref{eq:heuristic_fwd_bwd_anneal_app}, where we control the drift $a_t$ and $b_t$ via a hyperparameter $\lambda_t^a$ and $\lambda_t^b$.
We evaluate two different choices of sampling and target processes:
(\textbf{1}) $\lambda_t^a = -\epsilon_t^2 {T_{\text{high}} / T_{\text{low}}}, \lambda_t^b = 0$. 
By \Cref{prop:fkcrnc}, this is theoretically identical to FKC.
(\textbf{2}) we further introduce free parameters $c_a$ and $c_b$: $\lambda_t^a = -\epsilon_t^2 {T_{\text{high}} / T_{\text{low}}}\cdot c_a, \lambda_t^b = \epsilon_t^2 {T_{\text{high}} / T_{\text{low}}}\cdot c_b$.
\par
We report in \Cref{tab:sample} the Wasserstein-2 ($W_2$) distance between the ground-truth and generated samples, alongside the total variation distance (TVD) computed on both energy and interatomic-distance histograms.  
Figure \ref{fig:combined} shows a sweep over the coefficients $(c_a,c_b)$.  
When $c_a=1, c_b=0$, RNC achieves a similar performance to FKC, which echoes \Cref{prop:fkcrnc}. 
More importantly, by selecting different $(c_a,c_b)$, RNC attains higher flexibility and enhanced performance compared to FKC.
\par
We also compute the variance of the importance weights accumulated over the entire sampling trajectory for various $(c_a,c_b)$. 
As shown in Figure \ref{fig:combined} (right), choices with $c_a+c_b$ within $1\pm 0.2$ tend to minimise the variance, which also correlates with high sample quality.
However, while the pattern for variance persists across different targets, the optimal balance of $c_a$ and $c_b$ for achieving the highest sample quality can be task-dependent. 
For example, for unimodal targets with a sharp peak, a lower ESS reduces diversity but can actually be advantageous in ensuring the sample is closer to the peak.
Conversely, for multimodal targets, maintaining a higher ESS preserves greater diversity and helps prevent mode collapse. 
To illustrate this trade-off, we include inference-time annealing results and additional analysis for the Lennard-Jones (LJ) system and Mixture-of-Gaussian in \Cref{app:anneal_analysis}.

\vspace{-5pt}
\paragraph{Inference-time product: multi-target structure-based small-molecule ligand design} 
Following \citet{skreta2025feynman}, we evaluate RNC for model product with multi-target structure-based small-molecule ligand design.
For a detailed introduction to the background of this task, please refer to \Cref{app:detail_sbdd}.
In summary, we consider sampling from $q_0\propto (p^{(1)}_0 p^{(2)}_0)^\beta$, where $p^{(1)}_0 $ and $p^{(2)}_0 $ represents the diffusion model's outcome conditional on two protein targets.
In our experiments, we set $\beta=2$ following the optimal hyperparameter used by \citet{skreta2025feynman}.
We compare our RNC method with FKC~\citep{skreta2025feynman} and baseline, ``Sum score", which samples from the denoising process by directly summing the scores conditioned on each target without SMC. 
\par

For RNC, similar to the annealing case, 
we can choose any sampling and target processes.
Here, we heuristically select the options in \Cref{eq:heuristic_fwd_bwd_product}, where we set $\lambda_{t}^{a,1}=\lambda_{t}^{a,2}=-\epsilon_t^2 \beta \cdot c_a$ and $\lambda_{t}^{b,1}=\lambda_{t}^{b,2}=\epsilon_t^2 \beta \cdot c_b$.
In \Cref{tab:dock}, we present the result obtained with $c_a=1, c_b=0$, which is theoretically equivalent to FKC in continuous time. We observe that these results achieve similar performance, up to the inherent stochasticity in the generation process.
We also report the performance obtained with $c_a=1.0, c_b=0.2$.
As  shown in  \Cref{tab:dock}, 
both FKC and RNC variants are significantly better than the heuristic score summation, at the price of lower diversity. 
Furthermore, RNC offers higher flexibility in choosing the sampling and target process, providing a visible gain over FKC, particularly with more ligands that have better docking scores than both reference ligands.
\begin{wraptable}{r}{0.4\linewidth}
\caption{\rebuttal{Success rate of trajectory stitching by guidance without SMC and with RNC across 5 tasks. }}\label{tab:stitch}
\vspace{-10pt}
\resizebox{\linewidth}{!}{%
\begin{tabular}{@{}lccccc@{}}
\toprule
\multicolumn{1}{c}{} & \textbf{task 1} & \textbf{task 2} & \textbf{task 3} & \textbf{task 4} & \textbf{task 5} \\ \midrule
\textbf{wo SMC} & 0.501 & 0.669 & 0.585 & 0.288 & 0.714 \\
\textbf{RNC} & 1.000 &  1.000 &  1.000 &  1.000 &  1.000 \\ \bottomrule
\end{tabular}%
}\vspace{-20pt}
\end{wraptable}\vspace{-20pt}
\rebuttal{
\paragraph{Flexible controls: stitching and reward-tilting for maze navigation}
One advantage of RNC is that it can be intuitively and seamlessly extended to tasks that require more flexible controls.
Here, we consider a maze-navigation task by stitching together diffusion models trained on short trajectories.
Formally, letting the short trajectories follow $p_0$, we aim to sample $[X^{(1)}, \dots, X^{(L)}] \sim q_0 \propto \exp\bigl(r([X^{(1)}, \dots, X^{(L)}])\bigr)\prod_{l} p_0(X^{(l)})$, where the reward $r$ is defined to impose first trajectory starts from the initial position and the last trajectory ends at the target, and consecutive trajectories are connected.
Despite the complexity of this task, RNC can still provide SMC weights for it without changing the general formula.
We use the \texttt{pointmaze-medium-stitch-v0} dataset from \citet{park2024ogbench}, and consider 5 different pairs of initial points and targets.
We include more experimental details in \Cref{app:stitch_details}. 
We visualise the unstitched trajectories and the samples generated by SMC in \Cref{fig:stitch}, using different colours to represent different short trajectories. We also report the success rate in \Cref{tab:stitch}. For comparison, we include results without SMC, where we only use guidance from the gradient of the reward. We observe that RNC increases the success rate to 100\%, demonstrating the flexibility and practical applicability of our method.
}

\begin{figure}[t]
\begin{minipage}{0.36\textwidth}
    \centering
    \includegraphics[width=\linewidth]{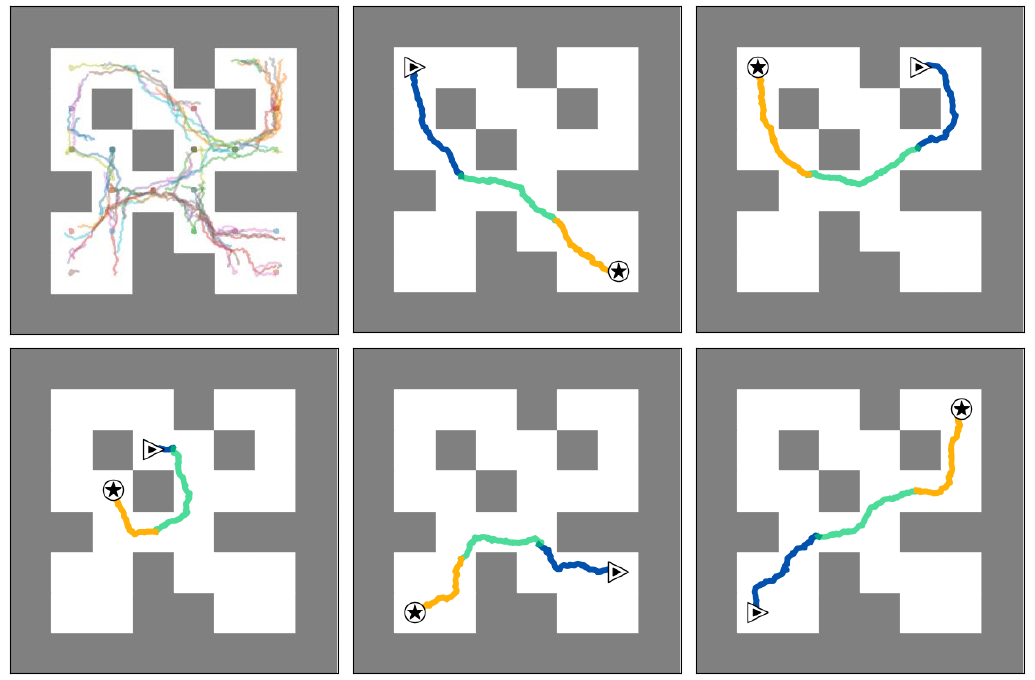}\vspace{-8pt}
     \caption{\rebuttal{Visualisation of stitched trajectory for maze navigation. We also show examples of unstitched short trajectories in the upper-left corner.}}
    \label{fig:stitch}
\end{minipage}\quad
\begin{minipage}{0.62\textwidth}
 \centering
 \begin{subfigure}{\linewidth}
       \includegraphics[width=\linewidth]{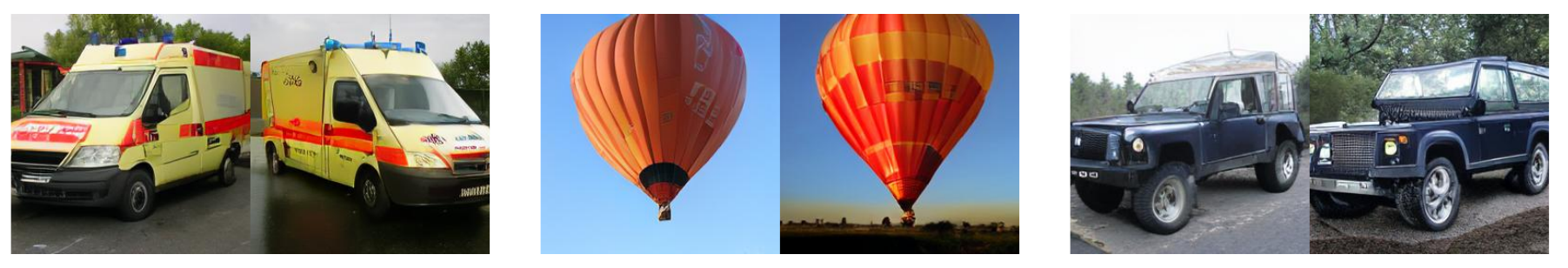}\vspace{-4pt}
       \caption{\rebuttal{Tilting with RNC. \textbf{Prompt from left to right:} (1) \emph{A yellow ambulance}; (2) \emph{An orange balloon with red spots}; (3) \emph{A blue jeep.}} }
 \end{subfigure}
   \begin{subfigure}{\linewidth}
       \includegraphics[width=\linewidth]{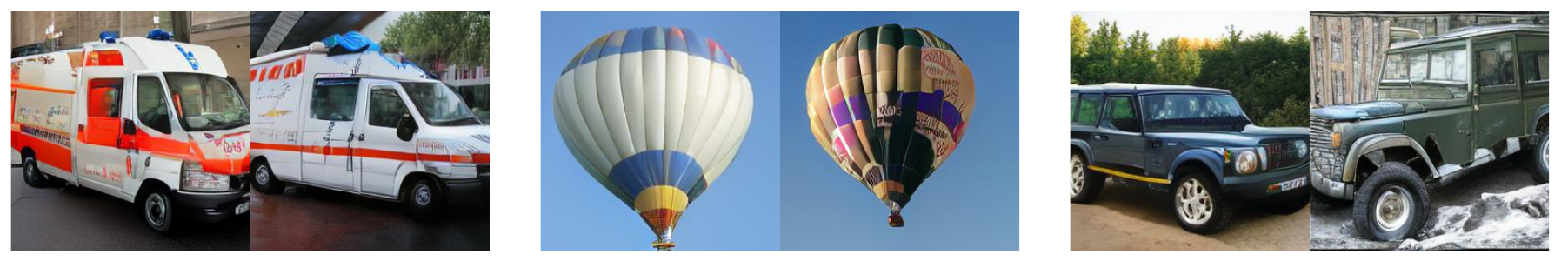}\vspace{-4pt}
       \caption{\rebuttal{Generation without SMC.}}
 \end{subfigure}\vspace{-10pt}
    \caption{\rebuttal{Visualisation of prompt-reward-tilting on \emph{\textbf{masked discrete diffusion}} on ImageNet-256. }}
    \label{fig:ctmc_maskgit}
\end{minipage}
\vspace{-8pt}
\end{figure}

\paragraph{Inference-time scaling} RNC allows us to increase the number of particles during inference to obtain better performance.
We showcase this property with the annealing experiments on alanine dipeptide (ALDP).
Specifically, we follow the setting we used in \Cref{tab:sample}, with $c_a=0.6, c_b=0.4$, and evaluate the performance scaling with different batch sizes: 100, 500, 1000, 2000, 5000. 
For comparison, we also report the corresponding FKC results.
\Cref{fig:scale} shows the sample quality scaling with different numbers of particles.
RNC not only features better sample quality but also presents better scaling properties, especially for sample diversity, as reflected by the sample $W_2$ distance. \vspace{-5pt}

\begin{table}[t]
\centering
\caption{Multi-target SBDD performances. Better than known denotes the percentage of generated ligands with lower docking scores than both of two ground truth reference ligands. 
\texttt{P}$_1$ and \texttt{P}$_2$ are docking scores for two pockets.
Div. assesses the pairwise difference over molecular fingerprints. Val. \& Uniq. denotes the percentage of ligands that are both valid and unique. Qual. denotes the percentage of ligands that have good physicochemical properties.}\label{tab:dock}
\vspace{-10pt}
\resizebox{\linewidth}{!}{
\begin{tabular}{lcccccccc}
\toprule

& Better than known. ($\uparrow$) & (\texttt{P$_1$} * \texttt{P$_2$}) ($\uparrow$) & max(\texttt{P$_1$}, \texttt{P$_2$}) ($\downarrow$) & \texttt{P$_1$} top-1 ($\downarrow$) & \texttt{P$_2$} top-1 ($\downarrow$)  & Div. ($\uparrow$) & Val. \& Uniq. ($\uparrow$) & Qual. ($\uparrow$) \\ \midrule 

Sum score & $0.345_{\pm 0.288}$ & $65.110_{\pm 17.802}$ & $-7.222_{\pm 1.348}$ &  $-9.411_{\pm 1.574}$ & $-9.769_{\pm 1.758}$  & $0.881_{\pm 0.010}$ & $0.927_{\pm 0.147}$ & $0.134_{\pm 0.087}$ \\

FKC & $0.608_{\pm 0.390}$  & $\bf 82.371_{\pm 24.928}$ & $\bf-8.296_{\pm 1.450}$  & $-9.437_{\pm 1.733}$ & $-10.035_{\pm 1.601}$ & $0.814_{\pm 0.043}$ & $0.925_{\pm 0.113}$ & $0.192_{\pm 0.191}$ \\

\midrule 
\midrule 
RNC \scalebox{0.7}{$(c_a=1, c_b=0.0)$}  & $0.589_{\pm 0.413}$ & $81.186_{\pm 26.158}$ & $-8.122_{\pm 1.588}$ &  $\bf -9.650_{\pm 1.608}$ & $-10.075_{\pm 1.663}$  & $0.823_{\pm 0.027}$ & $0.942_{\pm 0.069}$ & $0.222_{\pm 0.173}$ \\

RNC \scalebox{0.7}{$(c_a=1, c_b=0.2)$} & $\bf 0.649_{\pm 0.356}$  & $81.771_{\pm 24.673}$ & $-8.112_{\pm 1.660}$ &  $\bf -9.585_{\pm 1.885}$ & $\bf -10.102_{\pm 1.525}$ & $\bf 0.836_{\pm 0.025}$ & $\bf 0.950_{\pm 0.066}$ & $\bf 0.223_{\pm 0.202}$ \\

\bottomrule
\end{tabular}
}\vspace{-16pt}
\end{table}

\paragraph{Training energy-based models}
We train energy-based diffusion with RNE regularisation on both the Gaussian mixture and the alanine dipeptide (ALDP). 
In \Cref{fig:energy}, we visualise the learned density for the standard denoising score matching (DSM) and for DSM with RNE regularisation on 2D GMM. 
 We obtain the density value by exponentiating and normalising the learned negative energy at $t=0$.
The unregularised DSM fails to capture the target accurately, whereas RNE regularisation enables the model to recover the energy at $t=0$ relatively exactly.
In \Cref{fig:gmm200D}, we evaluate our approach on a 100D GMM, and also include dual score matching \citep{guth2025learning} as a baseline.
We can see both RNE and dual score matching significantly improve the accuracy of the learned energy.
\par
To assess its potential as a conservative machine-learning force field (MLFF), we train energy‐based diffusion models on ALDP samples at 300K, then run MCMC on the learned energy at  $t=0$ and visualise the resulting Ramachandran plot in \Cref{fig:mlff}, where the RNE-regularised model closely reproduces the ground-truth distribution.
Importantly, throughout training, we only use samples, without accessing the energies or scores.
From \Cref{tab:reg_aldp}, the RNE regularisation does not noticeably influence the quality of the diffusion model itself, showcasing our RNE's flexibility and applicability.
\par

\begin{wraptable}{r}{0.4\linewidth}
\vspace{-10pt}\caption{ALDP solvation free energy estimated with thermodynamic integration. }\label{tab:ti}
\vspace{-7pt}
\resizebox{\linewidth}{!}{%
\begin{tabular}{@{}ccc@{}}
\toprule
Reference Value & \makecell{TI wo RNE \\ \citep{mate2025solvation} }&  \makecell{TI w. RNE\\ (Ours)} \\ \midrule
29.43     $\pm$ 0.01      &      27.30 $\pm$ 0.45     &  \textbf{29.28 $\pm$ 0.04}        \\ \bottomrule
\end{tabular}%
}\vspace{-7pt}
\end{wraptable}
RNE regularisation can also be applied to bridge models such as stochastic interpolants \citep{albergo2023stochastic}.
Learning an accurate energy path can improve the accuracy of free-energy estimation via thermodynamic integration \citep[TI, ][]{10.1063/1.1749657,mate2025solvation}.
We demonstrate this by estimating the solvation free energy of the alanine dipeptide, using the dataset and systems described in \citet{he2025feat}.
Further details on the background and setup are provided in \Cref{app:ti}.
We report the values estimated without and with RNE in \Cref{tab:ti}, where RNE regularisation substantially improves the accuracy of the results.
\vspace{-5pt}

\rebuttal{
\paragraph{RNE for CTMC} We also verify  RNE for CTMC.
More precisely, we consider CFG debiasing plus tilting the generation with ImageReward~\citep{xu2023imagereward} with a prompt.
We take the MaskGIT \citep{chang2022maskgit} pretrained on ImageNet-256 by \citet{besnier2025halton}.
MaskGIT defines a (latent) mask image model that predicts the conditional distributions of masked positions given a masked sample.
Therefore, similar to \citet{ren2025fast}, we can  turn MaskGIT into a (latent) masked discrete diffusion model by introducing a stochastic masking schedule following \citet{shi2024simplified}.
For more details, please refer to \Cref{app:ctmc_details}.
We visualise the results in \Cref{fig:ctmc_maskgit}.
As shown, RNE achieves strong alignment between the generated images and the target prompts, demonstrating both the effectiveness of RNE on CTMC and its scalability to larger image-generation settings.
}

\vspace{-9pt}
\section{Conclusion}
\vspace{-9pt}
In this paper, we introduce the \textsc{Radon–Nikodym Estimator} (RNE).
It leverages the fact that \emph{for any diffusion process we consider, we can pair it with its time-reversal}, and the Radon–Nikodym derivative of the forward path measure with respect to the reverse path measure is always equal to one. This principle lets us decouple marginal densities from transition kernels, yielding a highly flexible and plug-and-play method for density estimation, inference-time control and energy-based training, for diffusion models across modalities. 
We discuss its limitations in \Cref{app:limitation}.

\section*{Acknowledgements}
We acknowledge Alexander Denker, Julius Berner, and Kirill Neklyudov for their insightful feedback and discussion on our manuscript.
We acknowledge the helpful discussion with Michael Plainer, which drew our attention to the importance of energy-based diffusion models.
We acknowledge Tony OuYang for the helpful discussion on energy-based training, which inspired us to analyse the equivalence between RNE-regularisation and Fokker-Planck regularisation.
We also acknowledge Zijing Ou for pointing out several typos in our manuscript.
JH acknowledges support from the University of Cambridge Harding Distinguished Postgraduate Scholars Programme.
JMHL acknowledges support from a Turing AI Fellowship under grant EP/V023756/1.

\bibliography{iclr2025_conference}

\begin{thebibliography}{108}
\providecommand{\natexlab}[1]{#1}
\providecommand{\url}[1]{\texttt{#1}}
\expandafter\ifx\csname urlstyle\endcsname\relax
  \providecommand{\doi}[1]{doi: #1}\else
  \providecommand{\doi}{doi: \begingroup \urlstyle{rm}\Url}\fi

\bibitem[Ajay et~al.(2023)Ajay, Han, Du, Li, Gupta, Jaakkola, Tenenbaum, Kaelbling, Srivastava, and Agrawal]{ajay2023compositional}
Anurag Ajay, Seungwook Han, Yilun Du, Shuang Li, Abhi Gupta, Tommi Jaakkola, Josh Tenenbaum, Leslie Kaelbling, Akash Srivastava, and Pulkit Agrawal.
\newblock Compositional foundation models for hierarchical planning.
\newblock \emph{Advances in Neural Information Processing Systems}, 36:\penalty0 22304--22325, 2023.

\bibitem[Albergo \& Vanden-Eijnden(2024)Albergo and Vanden-Eijnden]{albergo2024nets}
Michael~S Albergo and Eric Vanden-Eijnden.
\newblock Nets: A non-equilibrium transport sampler.
\newblock \emph{arXiv preprint arXiv:2410.02711}, 2024.

\bibitem[Albergo et~al.(2023)Albergo, Boffi, and Vanden-Eijnden]{albergo2023stochastic}
Michael~S Albergo, Nicholas~M Boffi, and Eric Vanden-Eijnden.
\newblock Stochastic interpolants: A unifying framework for flows and diffusions.
\newblock \emph{arXiv preprint arXiv:2303.08797}, 2023.

\bibitem[Anderson(1982)]{anderson1982reverse}
Brian~DO Anderson.
\newblock Reverse-time diffusion equation models.
\newblock \emph{Stochastic Processes and their Applications}, 12\penalty0 (3):\penalty0 313--326, 1982.

\bibitem[Arts et~al.(2023)Arts, Garcia~Satorras, Huang, Zugner, Federici, Clementi, No{\'e}, Pinsler, and van~den Berg]{arts2023two}
Marloes Arts, Victor Garcia~Satorras, Chin-Wei Huang, Daniel Zugner, Marco Federici, Cecilia Clementi, Frank No{\'e}, Robert Pinsler, and Rianne van~den Berg.
\newblock Two for one: Diffusion models and force fields for coarse-grained molecular dynamics.
\newblock \emph{Journal of Chemical Theory and Computation}, 19\penalty0 (18):\penalty0 6151--6159, 2023.

\bibitem[Austin et~al.(2021)Austin, Johnson, Ho, Tarlow, and Van Den~Berg]{austin2021structured}
Jacob Austin, Daniel~D Johnson, Jonathan Ho, Daniel Tarlow, and Rianne Van Den~Berg.
\newblock Structured denoising diffusion models in discrete state-spaces.
\newblock \emph{Advances in neural information processing systems}, 34:\penalty0 17981--17993, 2021.

\bibitem[Bao et~al.(2022)Bao, Li, Zhu, and Zhang]{bao2022analytic}
Fan Bao, Chongxuan Li, Jun Zhu, and Bo~Zhang.
\newblock Analytic-dpm: an analytic estimate of the optimal reverse variance in diffusion probabilistic models.
\newblock \emph{arXiv preprint arXiv:2201.06503}, 2022.

\bibitem[Berner et~al.(2025)Berner, Richter, Sendera, Rector-Brooks, and Malkin]{berner2025discrete}
Julius Berner, Lorenz Richter, Marcin Sendera, Jarrid Rector-Brooks, and Nikolay Malkin.
\newblock From discrete-time policies to continuous-time diffusion samplers: Asymptotic equivalences and faster training.
\newblock \emph{arXiv preprint arXiv:2501.06148}, 2025.

\bibitem[Besnier et~al.(2025)Besnier, Chen, Hurych, Valle, and Cord]{besnier2025halton}
Victor Besnier, Mickael Chen, David Hurych, Eduardo Valle, and Matthieu Cord.
\newblock Halton scheduler for masked generative image transformer.
\newblock \emph{arXiv preprint arXiv:2503.17076}, 2025.

\bibitem[Biggs et~al.(2024)Biggs, Seshadri, Zou, Jain, Golatkar, Xie, Achille, Swaminathan, and Soatto]{biggs2024diffusion}
Benjamin Biggs, Arjun Seshadri, Yang Zou, Achin Jain, Aditya Golatkar, Yusheng Xie, Alessandro Achille, Ashwin Swaminathan, and Stefano Soatto.
\newblock Diffusion soup: Model merging for text-to-image diffusion models.
\newblock In \emph{European Conference on Computer Vision}, pp.\  257--274. Springer, 2024.

\bibitem[Blei et~al.(2017)Blei, Kucukelbir, and McAuliffe]{blei2017variational}
David~M Blei, Alp Kucukelbir, and Jon~D McAuliffe.
\newblock Variational inference: A review for statisticians.
\newblock \emph{Journal of the American statistical Association}, 112\penalty0 (518):\penalty0 859--877, 2017.

\bibitem[Blundell(1996)]{blundell1996structure}
Tom~L Blundell.
\newblock Structure-based drug design.
\newblock \emph{Nature}, 384\penalty0 (6604):\penalty0 23, 1996.

\bibitem[Bolognesi \& Cavalli(2016)Bolognesi and Cavalli]{bolognesi2016multitarget}
Maria~Laura Bolognesi and Andrea Cavalli.
\newblock Multitarget drug discovery and polypharmacology, 2016.

\bibitem[Campbell et~al.(2022)Campbell, Benton, De~Bortoli, Rainforth, Deligiannidis, and Doucet]{campbell2022continuous}
Andrew Campbell, Joe Benton, Valentin De~Bortoli, Thomas Rainforth, George Deligiannidis, and Arnaud Doucet.
\newblock A continuous time framework for discrete denoising models.
\newblock \emph{Advances in Neural Information Processing Systems}, 35:\penalty0 28266--28279, 2022.

\bibitem[Campbell et~al.(2024)Campbell, Yim, Barzilay, Rainforth, and Jaakkola]{campbell2024generative}
Andrew Campbell, Jason Yim, Regina Barzilay, Tom Rainforth, and Tommi Jaakkola.
\newblock Generative flows on discrete state-spaces: Enabling multimodal flows with applications to protein co-design.
\newblock \emph{arXiv preprint arXiv:2402.04997}, 2024.

\bibitem[Chang et~al.(2022)Chang, Zhang, Jiang, Liu, and Freeman]{chang2022maskgit}
Huiwen Chang, Han Zhang, Lu~Jiang, Ce~Liu, and William~T Freeman.
\newblock Maskgit: Masked generative image transformer.
\newblock In \emph{Proceedings of the IEEE/CVF conference on computer vision and pattern recognition}, pp.\  11315--11325, 2022.

\bibitem[Chen et~al.(2025)Chen, Ren, Min, Ying, and Izzo]{chen2025solving}
Haoxuan Chen, Yinuo Ren, Martin~Renqiang Min, Lexing Ying, and Zachary Izzo.
\newblock Solving inverse problems via diffusion-based priors: An approximation-free ensemble sampling approach.
\newblock \emph{arXiv preprint arXiv:2506.03979}, 2025.

\bibitem[Chen et~al.(2024)Chen, Richter, Berner, Blessing, Neumann, and Anandkumar]{chen2024sequential}
Junhua Chen, Lorenz Richter, Julius Berner, Denis Blessing, Gerhard Neumann, and Anima Anandkumar.
\newblock Sequential controlled langevin diffusions.
\newblock \emph{arXiv preprint arXiv:2412.07081}, 2024.

\bibitem[Chen et~al.(2018)Chen, Rubanova, Bettencourt, and Duvenaud]{chen2018neural}
Ricky~TQ Chen, Yulia Rubanova, Jesse Bettencourt, and David~K Duvenaud.
\newblock Neural ordinary differential equations.
\newblock \emph{Advances in neural information processing systems}, 31, 2018.

\bibitem[Chen et~al.(2022)Chen, Chewi, Li, Li, Salim, and Zhang]{chen2022sampling}
Sitan Chen, Sinho Chewi, Jerry Li, Yuanzhi Li, Adil Salim, and Anru~R Zhang.
\newblock Sampling is as easy as learning the score: theory for diffusion models with minimal data assumptions.
\newblock \emph{arXiv preprint arXiv:2209.11215}, 2022.

\bibitem[Chodera et~al.(2025)Chodera, Rizzi, Naden, Beauchamp, Grinaway, Henry, Pulido, Fass, Wade, Ross, Kraemer, Macdonald, jaimergp, Rustenburg, Swenson, Zhang, Rufa, Simmonett, Williamson, hb0402, Fennick, Roet, Ries, Kenney, Alibay, Gowers, and SimonBoothroyd]{john_chodera_2025_14782825}
John Chodera, Andrea Rizzi, Levi Naden, Kyle Beauchamp, Patrick Grinaway, Mike Henry, Iván Pulido, Josh Fass, Alex Wade, Gregory~A. Ross, Andreas Kraemer, Hannah~Bruce Macdonald, jaimergp, Bas Rustenburg, David~W.H. Swenson, Ivy Zhang, Dominic Rufa, Andy Simmonett, Mark~J. Williamson, hb0402, Jake Fennick, Sander Roet, Benjamin Ries, Ian Kenney, Irfan Alibay, Richard Gowers, and SimonBoothroyd.
\newblock choderalab/openmmtools: 0.24.1, January 2025.
\newblock URL \url{https://doi.org/10.5281/zenodo.14782825}.

\bibitem[Chung et~al.(2023)Chung, Kim, Mccann, Klasky, and Ye]{chungdiffusion}
Hyungjin Chung, Jeongsol Kim, Michael~Thompson Mccann, Marc~Louis Klasky, and Jong~Chul Ye.
\newblock Diffusion posterior sampling for general noisy inverse problems.
\newblock In \emph{The Eleventh International Conference on Learning Representations}, 2023.

\bibitem[De~Bortoli et~al.(2024)De~Bortoli, Hutchinson, Wirnsberger, and Doucet]{de2024target}
Valentin De~Bortoli, Michael Hutchinson, Peter Wirnsberger, and Arnaud Doucet.
\newblock Target score matching.
\newblock \emph{arXiv preprint arXiv:2402.08667}, 2024.

\bibitem[Denker et~al.(2024)Denker, Vargas, Padhy, Didi, Mathis, Barbano, Dutordoir, Mathieu, Komorowska, and Lio]{denker2024deft}
Alexander Denker, Francisco Vargas, Shreyas Padhy, Kieran Didi, Simon Mathis, Riccardo Barbano, Vincent Dutordoir, Emile Mathieu, Urszula~Julia Komorowska, and Pietro Lio.
\newblock Deft: Efficient fine-tuning of diffusion models by learning the generalised $ h $-transform.
\newblock \emph{Advances in Neural Information Processing Systems}, 37:\penalty0 19636--19682, 2024.

\bibitem[Dhariwal \& Nichol(2021)Dhariwal and Nichol]{dhariwal2021diffusion}
Prafulla Dhariwal and Alexander Nichol.
\newblock Diffusion models beat gans on image synthesis.
\newblock \emph{Advances in neural information processing systems}, 34:\penalty0 8780--8794, 2021.

\bibitem[Domingo-Enrich et~al.(2024)Domingo-Enrich, Drozdzal, Karrer, and Chen]{domingo2024adjoint}
Carles Domingo-Enrich, Michal Drozdzal, Brian Karrer, and Ricky~TQ Chen.
\newblock Adjoint matching: Fine-tuning flow and diffusion generative models with memoryless stochastic optimal control.
\newblock \emph{arXiv preprint arXiv:2409.08861}, 2024.

\bibitem[Dou \& Song(2024)Dou and Song]{dou2024diffusion}
Zehao Dou and Yang Song.
\newblock Diffusion posterior sampling for linear inverse problem solving: A filtering perspective.
\newblock In \emph{The Twelfth International Conference on Learning Representations}, 2024.

\bibitem[Doucet et~al.(2022)Doucet, Grathwohl, Matthews, and Strathmann]{doucet2022score}
Arnaud Doucet, Will Grathwohl, Alexander~G Matthews, and Heiko Strathmann.
\newblock Score-based diffusion meets annealed importance sampling.
\newblock \emph{Advances in Neural Information Processing Systems}, 35:\penalty0 21482--21494, 2022.

\bibitem[Du et~al.(2023)Du, Durkan, Strudel, Tenenbaum, Dieleman, Fergus, Sohl-Dickstein, Doucet, and Grathwohl]{du2023reduce}
Yilun Du, Conor Durkan, Robin Strudel, Joshua~B Tenenbaum, Sander Dieleman, Rob Fergus, Jascha Sohl-Dickstein, Arnaud Doucet, and Will~Sussman Grathwohl.
\newblock Reduce, reuse, recycle: Compositional generation with energy-based diffusion models and mcmc.
\newblock In \emph{International conference on machine learning}, pp.\  8489--8510. PMLR, 2023.

\bibitem[Duan et~al.(2023)Duan, Du, Jia, and Kulik]{duan2023accurate}
Chenru Duan, Yuanqi Du, Haojun Jia, and Heather~J Kulik.
\newblock Accurate transition state generation with an object-aware equivariant elementary reaction diffusion model.
\newblock \emph{Nature computational science}, 3\penalty0 (12):\penalty0 1045--1055, 2023.

\bibitem[Eastman et~al.(2023)Eastman, Galvelis, Pel{\'a}ez, Abreu, Farr, Gallicchio, Gorenko, Henry, Hu, Huang, et~al.]{eastman2023openmm}
Peter Eastman, Raimondas Galvelis, Ra{\'u}l~P Pel{\'a}ez, Charlles~RA Abreu, Stephen~E Farr, Emilio Gallicchio, Anton Gorenko, Michael~M Henry, Frank Hu, Jing Huang, et~al.
\newblock Openmm 8: molecular dynamics simulation with machine learning potentials.
\newblock \emph{The Journal of Physical Chemistry B}, 128\penalty0 (1):\penalty0 109--116, 2023.

\bibitem[Eberhardt et~al.(2021)Eberhardt, Santos-Martins, Tillack, and Forli]{eberhardt2021autodock}
Jerome Eberhardt, Diogo Santos-Martins, Andreas~F Tillack, and Stefano Forli.
\newblock Autodock vina 1.2. 0: New docking methods, expanded force field, and python bindings.
\newblock \emph{Journal of chemical information and modeling}, 61\penalty0 (8):\penalty0 3891--3898, 2021.

\bibitem[Gao et~al.(2025)Gao, Hoogeboom, Heek, Bortoli, Murphy, and Salimans]{gao2025diffusion}
Ruiqi Gao, Emiel Hoogeboom, Jonathan Heek, Valentin~De Bortoli, Kevin~Patrick Murphy, and Tim Salimans.
\newblock Diffusion models and gaussian flow matching: Two sides of the same coin.
\newblock In \emph{The Fourth Blogpost Track at ICLR 2025}, 2025.
\newblock URL \url{https://openreview.net/forum?id=C8Yyg9wy0s}.

\bibitem[Guan et~al.(2023)Guan, Qian, Peng, Su, Peng, and Ma]{guan3d}
Jiaqi Guan, Wesley~Wei Qian, Xingang Peng, Yufeng Su, Jian Peng, and Jianzhu Ma.
\newblock 3d equivariant diffusion for target-aware molecule generation and affinity prediction.
\newblock In \emph{The Eleventh International Conference on Learning Representations}, 2023.

\bibitem[Guth et~al.(2025)Guth, Kadkhodaie, and Simoncelli]{guth2025learning}
Florentin Guth, Zahra Kadkhodaie, and Eero~P Simoncelli.
\newblock Learning normalized image densities via dual score matching.
\newblock \emph{arXiv preprint arXiv:2506.05310}, 2025.

\bibitem[Gy{\"o}ngy \& R{\'a}sonyi(2011)Gy{\"o}ngy and R{\'a}sonyi]{gyongy2011note}
Istv{\'a}n Gy{\"o}ngy and Mikl{\'o}s R{\'a}sonyi.
\newblock A note on euler approximations for sdes with hölder continuous diffusion coefficients.
\newblock \emph{Stochastic processes and their applications}, 121\penalty0 (10):\penalty0 2189--2200, 2011.
\newblock URL \url{https://www.sciencedirect.com/science/article/pii/S030441491100144X}.

\bibitem[He et~al.(2025{\natexlab{a}})He, Du, Vargas, Wang, Gomes, Hern{\'a}ndez-Lobato, and Vanden-Eijnden]{he2025feat}
Jiajun He, Yuanqi Du, Francisco Vargas, Yuanqing Wang, Carla~P Gomes, Jos{\'e}~Miguel Hern{\'a}ndez-Lobato, and Eric Vanden-Eijnden.
\newblock Feat: Free energy estimators with adaptive transport.
\newblock \emph{arXiv preprint arXiv:2504.11516}, 2025{\natexlab{a}}.

\bibitem[He et~al.(2025{\natexlab{b}})He, Du, Vargas, Zhang, Padhy, OuYang, Gomes, and Hern{\'a}ndez-Lobato]{he2025no}
Jiajun He, Yuanqi Du, Francisco Vargas, Dinghuai Zhang, Shreyas Padhy, RuiKang OuYang, Carla Gomes, and Jos{\'e}~Miguel Hern{\'a}ndez-Lobato.
\newblock No trick, no treat: Pursuits and challenges towards simulation-free training of neural samplers.
\newblock \emph{arXiv preprint arXiv:2502.06685}, 2025{\natexlab{b}}.

\bibitem[He et~al.(2025{\natexlab{c}})He, Jeha, Potaptchik, Zhang, Hern{\'a}ndez-Lobato, Du, Syed, and Vargas]{he2025crepe}
Jiajun He, Paul Jeha, Peter Potaptchik, Leo Zhang, Jos{\'e}~Miguel Hern{\'a}ndez-Lobato, Yuanqi Du, Saifuddin Syed, and Francisco Vargas.
\newblock Crepe: Controlling diffusion with replica exchange.
\newblock \emph{arXiv preprint arXiv:2509.23265}, 2025{\natexlab{c}}.

\bibitem[Ho \& Salimans(2022)Ho and Salimans]{ho2022classifier}
Jonathan Ho and Tim Salimans.
\newblock Classifier-free diffusion guidance.
\newblock \emph{arXiv preprint arXiv:2207.12598}, 2022.

\bibitem[Ho et~al.(2020)Ho, Jain, and Abbeel]{ho2020denoising}
Jonathan Ho, Ajay Jain, and Pieter Abbeel.
\newblock Denoising diffusion probabilistic models.
\newblock \emph{Advances in neural information processing systems}, 33:\penalty0 6840--6851, 2020.

\bibitem[Ho et~al.(2022)Ho, Salimans, Gritsenko, Chan, Norouzi, and Fleet]{ho2022video}
Jonathan Ho, Tim Salimans, Alexey Gritsenko, William Chan, Mohammad Norouzi, and David~J Fleet.
\newblock Video diffusion models.
\newblock \emph{Advances in Neural Information Processing Systems}, 35:\penalty0 8633--8646, 2022.

\bibitem[Holderrieth et~al.(2025)Holderrieth, Albergo, and Jaakkola]{holderrieth2025leaps}
Peter Holderrieth, Michael~S Albergo, and Tommi Jaakkola.
\newblock Leaps: A discrete neural sampler via locally equivariant networks.
\newblock \emph{arXiv preprint arXiv:2502.10843}, 2025.

\bibitem[Hoogeboom et~al.(2022)Hoogeboom, Satorras, Vignac, and Welling]{hoogeboom2022equivariant}
Emiel Hoogeboom, V{\i}ctor~Garcia Satorras, Cl{\'e}ment Vignac, and Max Welling.
\newblock Equivariant diffusion for molecule generation in 3d.
\newblock In \emph{International conference on machine learning}, pp.\  8867--8887. PMLR, 2022.

\bibitem[Huang et~al.(2021)Huang, Lim, and Courville]{huang2021variational}
Chin-Wei Huang, Jae~Hyun Lim, and Aaron~C Courville.
\newblock A variational perspective on diffusion-based generative models and score matching.
\newblock \emph{Advances in Neural Information Processing Systems}, 34:\penalty0 22863--22876, 2021.

\bibitem[Jarzynski(1997)]{jarzynski1997nonequilibrium}
Christopher Jarzynski.
\newblock Nonequilibrium equality for free energy differences.
\newblock \emph{Physical Review Letters}, 78\penalty0 (14):\penalty0 2690, 1997.

\bibitem[Karczewski et~al.(2024)Karczewski, Heinonen, and Garg]{karczewski2024diffusion}
Rafal Karczewski, Markus Heinonen, and Vikas Garg.
\newblock Diffusion models as cartoonists! the curious case of high density regions.
\newblock \emph{arXiv preprint arXiv:2411.01293}, 2024.

\bibitem[Karras et~al.(2022)Karras, Aittala, Aila, and Laine]{karras2022elucidating}
Tero Karras, Miika Aittala, Timo Aila, and Samuli Laine.
\newblock Elucidating the design space of diffusion-based generative models.
\newblock \emph{Advances in neural information processing systems}, 35:\penalty0 26565--26577, 2022.

\bibitem[Kingma et~al.(2021)Kingma, Salimans, Poole, and Ho]{kingma2021variational}
Diederik Kingma, Tim Salimans, Ben Poole, and Jonathan Ho.
\newblock Variational diffusion models.
\newblock \emph{Advances in neural information processing systems}, 34:\penalty0 21696--21707, 2021.

\bibitem[Kingma et~al.(2013)Kingma, Welling, et~al.]{kingma2013auto}
Diederik~P Kingma, Max Welling, et~al.
\newblock Auto-encoding variational bayes, 2013.

\bibitem[Kirkwood(1935)]{10.1063/1.1749657}
John~G. Kirkwood.
\newblock Statistical mechanics of fluid mixtures.
\newblock \emph{The Journal of Chemical Physics}, 3\penalty0 (5):\penalty0 300--313, 05 1935.
\newblock ISSN 0021-9606.
\newblock \doi{10.1063/1.1749657}.
\newblock URL \url{https://doi.org/10.1063/1.1749657}.

\bibitem[Kloeden et~al.(1992)Kloeden, Platen, Kloeden, and Platen]{kloeden1992stochastic}
Peter~E Kloeden, Eckhard Platen, Peter~E Kloeden, and Eckhard Platen.
\newblock \emph{Stochastic differential equations}.
\newblock Springer, 1992.

\bibitem[Kong et~al.(2025)Kong, Du, Mu, Neklyudov, De~Bortoli, Wu, Wang, Ferber, Ma, Gomes, et~al.]{kongdiffusion}
Lingkai Kong, Yuanqi Du, Wenhao Mu, Kirill Neklyudov, Valentin De~Bortoli, Dongxia Wu, Haorui Wang, Aaron~M Ferber, Yian Ma, Carla~P Gomes, et~al.
\newblock Diffusion models as constrained samplers for optimization with unknown constraints.
\newblock In \emph{The 28th International Conference on Artificial Intelligence and Statistics}, 2025.

\bibitem[Lee et~al.(2025)Lee, Jeha, Frellsen, Lio, Albergo, and Vargas]{lee2025debiasing}
Cheuk~Kit Lee, Paul Jeha, Jes Frellsen, Pietro Lio, Michael~Samuel Albergo, and Francisco Vargas.
\newblock Debiasing guidance for discrete diffusion with sequential monte carlo.
\newblock \emph{arXiv preprint arXiv:2502.06079}, 2025.

\bibitem[Lee et~al.(2023)Lee, Lu, and Tan]{lee2023convergence}
Holden Lee, Jianfeng Lu, and Yixin Tan.
\newblock Convergence of score-based generative modeling for general data distributions.
\newblock In \emph{International Conference on Algorithmic Learning Theory}, pp.\  946--985. PMLR, 2023.

\bibitem[Lipman et~al.(2022)Lipman, Chen, Ben-Hamu, Nickel, and Le]{lipman2022flow}
Yaron Lipman, Ricky~TQ Chen, Heli Ben-Hamu, Maximilian Nickel, and Matt Le.
\newblock Flow matching for generative modeling.
\newblock \emph{arXiv preprint arXiv:2210.02747}, 2022.

\bibitem[Liu et~al.(2022)Liu, Li, Du, Torralba, and Tenenbaum]{liu2022compositional}
Nan Liu, Shuang Li, Yilun Du, Antonio Torralba, and Joshua~B Tenenbaum.
\newblock Compositional visual generation with composable diffusion models.
\newblock In \emph{European Conference on Computer Vision}, pp.\  423--439. Springer, 2022.

\bibitem[Lou et~al.(2023)Lou, Meng, and Ermon]{lou2023discrete}
Aaron Lou, Chenlin Meng, and Stefano Ermon.
\newblock Discrete diffusion language modeling by estimating the ratios of the data distribution.
\newblock 2023.

\bibitem[Luo et~al.(2025)Luo, Mishra, Du, and Xu]{luo2025generative}
Yunhao Luo, Utkarsh~A Mishra, Yilun Du, and Danfei Xu.
\newblock Generative trajectory stitching through diffusion composition.
\newblock \emph{arXiv preprint arXiv:2503.05153}, 2025.

\bibitem[Ma et~al.(2024)Ma, Goldstein, Albergo, Boffi, Vanden-Eijnden, and Xie]{ma2024sit}
Nanye Ma, Mark Goldstein, Michael~S Albergo, Nicholas~M Boffi, Eric Vanden-Eijnden, and Saining Xie.
\newblock Sit: Exploring flow and diffusion-based generative models with scalable interpolant transformers.
\newblock In \emph{European Conference on Computer Vision}, pp.\  23--40. Springer, 2024.

\bibitem[M{\'a}t{\'e} et~al.(2024)M{\'a}t{\'e}, Fleuret, and Bereau]{mate2024neural}
B{\'a}lint M{\'a}t{\'e}, Fran{\c{c}}ois Fleuret, and Tristan Bereau.
\newblock Neural thermodynamic integration: Free energies from energy-based diffusion models.
\newblock \emph{The Journal of Physical Chemistry Letters}, 15\penalty0 (45):\penalty0 11395--11404, 2024.

\bibitem[M{\'a}t{\'e} et~al.(2025)M{\'a}t{\'e}, Fleuret, and Bereau]{mate2025solvation}
B{\'a}lint M{\'a}t{\'e}, Fran{\c{c}}ois Fleuret, and Tristan Bereau.
\newblock Solvation free energies from neural thermodynamic integration.
\newblock \emph{The Journal of Chemical Physics}, 162\penalty0 (12), 2025.

\bibitem[Mbalawata \& S{\"a}rkk{\"a}(2016)Mbalawata and S{\"a}rkk{\"a}]{mbalawata2016moment}
Isambi~S Mbalawata and Simo S{\"a}rkk{\"a}.
\newblock Moment conditions for convergence of particle filters with unbounded importance weights.
\newblock \emph{Signal Processing}, 118:\penalty0 133--138, 2016.

\bibitem[Midgley et~al.(2022)Midgley, Stimper, Simm, Sch{\"o}lkopf, and Hern{\'a}ndez-Lobato]{midgley2022flow}
Laurence~Illing Midgley, Vincent Stimper, Gregor~NC Simm, Bernhard Sch{\"o}lkopf, and Jos{\'e}~Miguel Hern{\'a}ndez-Lobato.
\newblock Flow annealed importance sampling bootstrap.
\newblock \emph{arXiv preprint arXiv:2208.01893}, 2022.

\bibitem[Neal(2001)]{neal2001annealed}
Radford~M Neal.
\newblock Annealed importance sampling.
\newblock \emph{Statistics and computing}, 11:\penalty0 125--139, 2001.

\bibitem[Nelson(1967)]{9a350bc3-5552-3c07-bec5-2119235ec937}
Edward Nelson.
\newblock \emph{Dynamical Theories of Brownian Motion}.
\newblock Princeton University Press, 1967.
\newblock ISBN 9780691079509.

\bibitem[Nichol \& Dhariwal(2021)Nichol and Dhariwal]{nichol2021improved}
Alexander~Quinn Nichol and Prafulla Dhariwal.
\newblock Improved denoising diffusion probabilistic models.
\newblock In \emph{International conference on machine learning}, pp.\  8162--8171. PMLR, 2021.

\bibitem[Noble et~al.(2024)Noble, Grenioux, Gabri{\'e}, and Durmus]{noble2024learned}
Maxence Noble, Louis Grenioux, Marylou Gabri{\'e}, and Alain~Oliviero Durmus.
\newblock Learned reference-based diffusion sampling for multi-modal distributions.
\newblock \emph{arXiv preprint arXiv:2410.19449}, 2024.

\bibitem[Ou et~al.(2024{\natexlab{a}})Ou, Nie, Xue, Zhu, Sun, Li, and Li]{ou2024your}
Jingyang Ou, Shen Nie, Kaiwen Xue, Fengqi Zhu, Jiacheng Sun, Zhenguo Li, and Chongxuan Li.
\newblock Your absorbing discrete diffusion secretly models the conditional distributions of clean data.
\newblock \emph{arXiv preprint arXiv:2406.03736}, 2024{\natexlab{a}}.

\bibitem[Ou et~al.(2024{\natexlab{b}})Ou, Zhang, Zhang, Xiao, Li, and Barber]{ou2024improving}
Zijing Ou, Mingtian Zhang, Andi Zhang, Tim~Z Xiao, Yingzhen Li, and David Barber.
\newblock Improving probabilistic diffusion models with optimal diagonal covariance matching.
\newblock \emph{arXiv preprint arXiv:2406.10808}, 2024{\natexlab{b}}.

\bibitem[Park et~al.(2024)Park, Frans, Eysenbach, and Levine]{park2024ogbench}
Seohong Park, Kevin Frans, Benjamin Eysenbach, and Sergey Levine.
\newblock Ogbench: Benchmarking offline goal-conditioned rl.
\newblock \emph{arXiv preprint arXiv:2410.20092}, 2024.

\bibitem[Peluchetti(2023)]{peluchetti2023diffusion}
Stefano Peluchetti.
\newblock Diffusion bridge mixture transports, schr{\"o}dinger bridge problems and generative modeling.
\newblock \emph{Journal of Machine Learning Research}, 24\penalty0 (374):\penalty0 1--51, 2023.

\bibitem[Phillips et~al.(2024)Phillips, Dau, Hutchinson, De~Bortoli, Deligiannidis, and Doucet]{phillips2024particle}
Angus Phillips, Hai-Dang Dau, Michael~John Hutchinson, Valentin De~Bortoli, George Deligiannidis, and Arnaud Doucet.
\newblock Particle denoising diffusion sampler.
\newblock \emph{arXiv preprint arXiv:2402.06320}, 2024.

\bibitem[Plainer et~al.(2025)Plainer, Wu, Klein, G{\"u}nnemann, and No{\'e}]{plainer2025consistent}
Michael Plainer, Hao Wu, Leon Klein, Stephan G{\"u}nnemann, and Frank No{\'e}.
\newblock Consistent sampling and simulation: Molecular dynamics with energy-based diffusion models.
\newblock \emph{arXiv preprint arXiv:2506.17139}, 2025.

\bibitem[Premkumar(2024)]{premkumar2024diffusion}
Akhil Premkumar.
\newblock Diffusion density estimators.
\newblock \emph{arXiv preprint arXiv:2410.06986}, 2024.

\bibitem[Ren et~al.(2025)Ren, Chen, Zhu, Guo, Chen, Rotskoff, Tao, and Ying]{ren2025fast}
Yinuo Ren, Haoxuan Chen, Yuchen Zhu, Wei Guo, Yongxin Chen, Grant~M Rotskoff, Molei Tao, and Lexing Ying.
\newblock Fast solvers for discrete diffusion models: Theory and applications of high-order algorithms.
\newblock \emph{arXiv preprint arXiv:2502.00234}, 2025.

\bibitem[Rombach et~al.(2022)Rombach, Blattmann, Lorenz, Esser, and Ommer]{rombach2022high}
Robin Rombach, Andreas Blattmann, Dominik Lorenz, Patrick Esser, and Bj{\"o}rn Ommer.
\newblock High-resolution image synthesis with latent diffusion models.
\newblock In \emph{Proceedings of the IEEE/CVF conference on computer vision and pattern recognition}, pp.\  10684--10695, 2022.

\bibitem[Rozet et~al.(2024)Rozet, Andry, Lanusse, and Louppe]{rozet2024learning}
Fran{\c{c}}ois Rozet, G{\'e}r{\^o}me Andry, Fran{\c{c}}ois Lanusse, and Gilles Louppe.
\newblock Learning diffusion priors from observations by expectation maximization.
\newblock \emph{Advances in Neural Information Processing Systems}, 37:\penalty0 87647--87682, 2024.

\bibitem[Schneuing et~al.(2024)Schneuing, Harris, Du, Didi, Jamasb, Igashov, Du, Gomes, Blundell, Lio, et~al.]{schneuing2024structure}
Arne Schneuing, Charles Harris, Yuanqi Du, Kieran Didi, Arian Jamasb, Ilia Igashov, Weitao Du, Carla Gomes, Tom~L Blundell, Pietro Lio, et~al.
\newblock Structure-based drug design with equivariant diffusion models.
\newblock \emph{Nature Computational Science}, 4\penalty0 (12):\penalty0 899--909, 2024.

\bibitem[Shi et~al.(2024)Shi, Han, Wang, Doucet, and Titsias]{shi2024simplified}
Jiaxin Shi, Kehang Han, Zhe Wang, Arnaud Doucet, and Michalis Titsias.
\newblock Simplified and generalized masked diffusion for discrete data.
\newblock \emph{Advances in neural information processing systems}, 37:\penalty0 103131--103167, 2024.

\bibitem[Shi et~al.(2023)Shi, De~Bortoli, Campbell, and Doucet]{shi2023diffusion}
Yuyang Shi, Valentin De~Bortoli, Andrew Campbell, and Arnaud Doucet.
\newblock Diffusion schr$\backslash$" odinger bridge matching.
\newblock \emph{arXiv preprint arXiv:2303.16852}, 2023.

\bibitem[Shirts \& Chodera(2008)Shirts and Chodera]{shirts2008statistically}
Michael~R Shirts and John~D Chodera.
\newblock Statistically optimal analysis of samples from multiple equilibrium states.
\newblock \emph{The Journal of chemical physics}, 129\penalty0 (12), 2008.

\bibitem[Singhal et~al.(2025)Singhal, Horvitz, Teehan, Ren, Yu, McKeown, and Ranganath]{singhal2025general}
Raghav Singhal, Zachary Horvitz, Ryan Teehan, Mengye Ren, Zhou Yu, Kathleen McKeown, and Rajesh Ranganath.
\newblock A general framework for inference-time scaling and steering of diffusion models.
\newblock \emph{arXiv preprint arXiv:2501.06848}, 2025.

\bibitem[Skreta et~al.(2024)Skreta, Atanackovic, Bose, Tong, and Neklyudov]{skreta2024superposition}
Marta Skreta, Lazar Atanackovic, Avishek~Joey Bose, Alexander Tong, and Kirill Neklyudov.
\newblock The superposition of diffusion models using the it$\backslash$\^{} o density estimator.
\newblock \emph{arXiv preprint arXiv:2412.17762}, 2024.

\bibitem[Skreta et~al.(2025)Skreta, Akhound-Sadegh, Ohanesian, Bondesan, Aspuru-Guzik, Doucet, Brekelmans, Tong, and Neklyudov]{skreta2025feynman}
Marta Skreta, Tara Akhound-Sadegh, Viktor Ohanesian, Roberto Bondesan, Al{\'a}n Aspuru-Guzik, Arnaud Doucet, Rob Brekelmans, Alexander Tong, and Kirill Neklyudov.
\newblock Feynman-kac correctors in diffusion: Annealing, guidance, and product of experts.
\newblock \emph{arXiv preprint arXiv:2503.02819}, 2025.

\bibitem[Song et~al.(2021{\natexlab{a}})Song, Meng, and Ermon]{song2021denoising}
Jiaming Song, Chenlin Meng, and Stefano Ermon.
\newblock Denoising diffusion implicit models.
\newblock In \emph{International Conference on Learning Representations}, 2021{\natexlab{a}}.

\bibitem[Song et~al.(2023{\natexlab{a}})Song, Vahdat, Mardani, and Kautz]{song2023pseudoinverse}
Jiaming Song, Arash Vahdat, Morteza Mardani, and Jan Kautz.
\newblock Pseudoinverse-guided diffusion models for inverse problems.
\newblock In \emph{ICLR}, 2023{\natexlab{a}}.

\bibitem[Song et~al.(2023{\natexlab{b}})Song, Zhang, Yin, Mardani, Liu, Kautz, Chen, and Vahdat]{song2023loss}
Jiaming Song, Qinsheng Zhang, Hongxu Yin, Morteza Mardani, Ming-Yu Liu, Jan Kautz, Yongxin Chen, and Arash Vahdat.
\newblock Loss-guided diffusion models for plug-and-play controllable generation.
\newblock In \emph{International Conference on Machine Learning}, pp.\  32483--32498. PMLR, 2023{\natexlab{b}}.

\bibitem[Song et~al.(2021{\natexlab{b}})Song, Sohl-Dickstein, Kingma, Kumar, Ermon, and Poole]{song2021score}
Yang Song, Jascha Sohl-Dickstein, Diederik~P Kingma, Abhishek Kumar, Stefano Ermon, and Ben Poole.
\newblock Score-based generative modeling through stochastic differential equations.
\newblock In \emph{International Conference on Learning Representations}, 2021{\natexlab{b}}.

\bibitem[Tan et~al.(2025)Tan, Bose, Lin, Klein, Bronstein, and Tong]{tan2025scalable}
Charlie~B Tan, Avishek~Joey Bose, Chen Lin, Leon Klein, Michael~M Bronstein, and Alexander Tong.
\newblock Scalable equilibrium sampling with sequential boltzmann generators.
\newblock \emph{arXiv preprint arXiv:2502.18462}, 2025.

\bibitem[Thornton et~al.(2025)Thornton, B{\'e}thune, Zhang, Bradley, Nakkiran, and Zhai]{thornton2025composition}
James Thornton, Louis B{\'e}thune, Ruixiang Zhang, Arwen Bradley, Preetum Nakkiran, and Shuangfei Zhai.
\newblock Composition and control with distilled energy diffusion models and sequential monte carlo.
\newblock \emph{arXiv preprint arXiv:2502.12786}, 2025.

\bibitem[Tong et~al.(2024)Tong, Fatras, Malkin, Huguet, Zhang, Rector-Brooks, Wolf, and Bengio]{tong2024improving}
Alexander Tong, Kilian Fatras, Nikolay Malkin, Guillaume Huguet, Yanlei Zhang, Jarrid Rector-Brooks, Guy Wolf, and Yoshua Bengio.
\newblock Improving and generalizing flow-based generative models with minibatch optimal transport.
\newblock \emph{Transactions on Machine Learning Research}, pp.\  1--34, 2024.

\bibitem[Trippe et~al.(2023)Trippe, Yim, Tischer, Baker, Broderick, Barzilay, and Jaakkola]{trippediffusion}
Brian~L Trippe, Jason Yim, Doug Tischer, David Baker, Tamara Broderick, Regina Barzilay, and Tommi~S Jaakkola.
\newblock Diffusion probabilistic modeling of protein backbones in 3d for the motif-scaffolding problem.
\newblock In \emph{The Eleventh International Conference on Learning Representations}, 2023.

\bibitem[Uehara et~al.(2025)Uehara, Zhao, Wang, Li, Regev, Levine, and Biancalani]{uehara2025inference}
Masatoshi Uehara, Yulai Zhao, Chenyu Wang, Xiner Li, Aviv Regev, Sergey Levine, and Tommaso Biancalani.
\newblock Inference-time alignment in diffusion models with reward-guided generation: Tutorial and review.
\newblock \emph{arXiv preprint arXiv:2501.09685}, 2025.

\bibitem[Vaikuntanathan \& Jarzynski(2008)Vaikuntanathan and Jarzynski]{vaikuntanathan2008escorted}
Suriyanarayanan Vaikuntanathan and Christopher Jarzynski.
\newblock Escorted free energy simulations: Improving convergence by reducing dissipation.
\newblock \emph{Physical Review Letters}, 100\penalty0 (19):\penalty0 190601, 2008.

\bibitem[Vargas et~al.(2023{\natexlab{a}})Vargas, Grathwohl, and Doucet]{vargas2023denoising}
Francisco Vargas, Will Grathwohl, and Arnaud Doucet.
\newblock Denoising diffusion samplers.
\newblock \emph{arXiv preprint arXiv:2302.13834}, 2023{\natexlab{a}}.

\bibitem[Vargas et~al.(2023{\natexlab{b}})Vargas, Padhy, Blessing, and N{\"u}sken]{vargas2023transport}
Francisco Vargas, Shreyas Padhy, Denis Blessing, and Nikolas N{\"u}sken.
\newblock Transport meets variational inference: Controlled monte carlo diffusions.
\newblock \emph{arXiv preprint arXiv:2307.01050}, 2023{\natexlab{b}}.

\bibitem[Vincent(2011)]{6795935}
Pascal Vincent.
\newblock A connection between score matching and denoising autoencoders.
\newblock \emph{Neural Computation}, 23\penalty0 (7):\penalty0 1661--1674, 2011.
\newblock \doi{10.1162/NECO_a_00142}.

\bibitem[Wang et~al.(2015)Wang, Wu, Deng, Kim, Pierce, Krilov, Lupyan, Robinson, Dahlgren, Greenwood, et~al.]{wang2015accurate}
Lingle Wang, Yujie Wu, Yuqing Deng, Byungchan Kim, Levi Pierce, Goran Krilov, Dmitry Lupyan, Shaughnessy Robinson, Markus~K Dahlgren, Jeremy Greenwood, et~al.
\newblock Accurate and reliable prediction of relative ligand binding potency in prospective drug discovery by way of a modern free-energy calculation protocol and force field.
\newblock \emph{Journal of the American Chemical Society}, 137\penalty0 (7):\penalty0 2695--2703, 2015.

\bibitem[Watson et~al.(2023)Watson, Juergens, Bennett, Trippe, Yim, Eisenach, Ahern, Borst, Ragotte, Milles, et~al.]{watson2023novo}
Joseph~L Watson, David Juergens, Nathaniel~R Bennett, Brian~L Trippe, Jason Yim, Helen~E Eisenach, Woody Ahern, Andrew~J Borst, Robert~J Ragotte, Lukas~F Milles, et~al.
\newblock De novo design of protein structure and function with rfdiffusion.
\newblock \emph{Nature}, 620\penalty0 (7976):\penalty0 1089--1100, 2023.

\bibitem[Wu et~al.(2023)Wu, Trippe, Naesseth, Blei, and Cunningham]{wu2023practical}
Luhuan Wu, Brian Trippe, Christian Naesseth, David Blei, and John~P Cunningham.
\newblock Practical and asymptotically exact conditional sampling in diffusion models.
\newblock \emph{Advances in Neural Information Processing Systems}, 36:\penalty0 31372--31403, 2023.

\bibitem[Xu et~al.(2023)Xu, Liu, Wu, Tong, Li, Ding, Tang, and Dong]{xu2023imagereward}
Jiazheng Xu, Xiao Liu, Yuchen Wu, Yuxuan Tong, Qinkai Li, Ming Ding, Jie Tang, and Yuxiao Dong.
\newblock Imagereward: learning and evaluating human preferences for text-to-image generation.
\newblock In \emph{Proceedings of the 37th International Conference on Neural Information Processing Systems}, pp.\  15903--15935, 2023.

\bibitem[Yu et~al.(2025)Yu, Klami, Hyv{\"a}rinen, Korba, and Chehab]{yu2025density}
Hanlin Yu, Arto Klami, Aapo Hyv{\"a}rinen, Anna Korba, and Omar Chehab.
\newblock Density ratio estimation with conditional probability paths.
\newblock \emph{arXiv preprint arXiv:2502.02300}, 2025.

\bibitem[Zeni et~al.(2025)Zeni, Pinsler, Z{\"u}gner, Fowler, Horton, Fu, Wang, Shysheya, Crabb{\'e}, Ueda, et~al.]{zeni2025generative}
Claudio Zeni, Robert Pinsler, Daniel Z{\"u}gner, Andrew Fowler, Matthew Horton, Xiang Fu, Zilong Wang, Aliaksandra Shysheya, Jonathan Crabb{\'e}, Shoko Ueda, et~al.
\newblock A generative model for inorganic materials design.
\newblock \emph{Nature}, pp.\  1--3, 2025.

\bibitem[Zhang et~al.(2025)Zhang, Potaptchik, He, Du, Doucet, Vargas, Dau, and Syed]{zhang2025accelerated}
Leo Zhang, Peter Potaptchik, Jiajun He, Yuanqi Du, Arnaud Doucet, Francisco Vargas, Hai-Dang Dau, and Saifuddin Syed.
\newblock Accelerated parallel tempering via neural transports.
\newblock \emph{arXiv preprint arXiv:2502.10328}, 2025.

\bibitem[Zhang et~al.(2022)Zhang, Key, Hayes, Barber, Paige, and Briol]{zhang2022towards}
Mingtian Zhang, Oscar Key, Peter Hayes, David Barber, Brooks Paige, and Fran{\c{c}}ois-Xavier Briol.
\newblock Towards healing the blindness of score matching.
\newblock \emph{arXiv preprint arXiv:2209.07396}, 2022.

\bibitem[Zhou et~al.(2024)Zhou, Guan, Zhang, Peng, Wang, and Ma]{zhou2024reprogramming}
Xiangxin Zhou, Jiaqi Guan, Yijia Zhang, Xingang Peng, Liang Wang, and Jianzhu Ma.
\newblock Reprogramming pretrained target-specific diffusion models for dual-target drug design.
\newblock \emph{Advances in Neural Information Processing Systems}, 37:\penalty0 87255--87281, 2024.

\bibitem[Zwanzig(1954)]{zwanzig1954high}
Robert~W Zwanzig.
\newblock High-temperature equation of state by a perturbation method. i. nonpolar gases.
\newblock \emph{The Journal of Chemical Physics}, 22\penalty0 (8):\penalty0 1420--1426, 1954.

\end{thebibliography}
\bibliographystyle{iclr2026_conference.bst}

\appendix

\clearpage
\appendix
\vspace*{1pt}

\vbox{%
    {\LARGE\sc {Appendix} \par}
     \vskip 0.29in
  \vskip 0.09in%
  }
 {
\setlength{\cftbeforesecskip}{0pt}
\renewcommand{\baselinestretch}{1}\selectfont  %
\startcontents[sections]
\printcontents[sections]{l}{1}{\setcounter{tocdepth}{3}}
}

\newpage
\section{Implementation Outlines on Inference-time Control}\label{subsec:details}

In this section, we illustrate the key methods that need to be implemented when a user wants to implement different heuristic choices for $a_t$ and $b_t$ (the different sampling and target process choices).

We provide the RNC anneal weights as an example and illustrate how the $R$ factor computation in the  \texttt{rne} method does not need to be modified as we change from task to task. %
\definecolor{highlight}{RGB}{255,235,186}
\lstdefinestyle{pythonstyle}{
    language=Python,
    basicstyle=\ttfamily\small,
    keywordstyle=\color{blue},
    commentstyle=\color{gray},
    stringstyle=\color{green!50!black},
    showstringspaces=false,
    breaklines=true,
    escapeinside={(*@}{@*)} %
}

\newcommand{\highlightedline}[1]{\colorbox{highlight}{#1}}

\begin{figure}[h!]
\centering

\begin{subfigure}[t]{0.47\textwidth}
\tcbset{colframe=Blue!0!, colback=Blue!0!, boxrule=0.pt, height=6.8cm, arc=2pt, boxsep=2pt}
\tcolorbox
\begin{lstlisting}[style=pythonstyle]
class GuidedSDE(self): 

  # Model forward logprob
  def fwd_mu(self,xt,xtm1,t):
     ...
      
  # Model back logprob
  def fwd_nu(self,xt,xtm1,t):
     ...
      
  # Implement target logprob
  def fwd_b(self,xt,xtm1,t):
    (*@\highlightedline{...}@*)
       
  # Implement sample logprob
  def bwd_a(self,xt,xtm1,t):
    (*@\highlightedline{...}@*)
       
\end{lstlisting}
\endtcolorbox
\end{subfigure}\hspace{-12pt}
\begin{subfigure}[t]{0.55\textwidth}  %
\tcbset{colframe=Blue!0!, colback=Blue!0!, boxrule=0.0pt,height=6.8cm, arc=2pt, boxsep=2pt}
\tcolorbox
\begin{lstlisting}[style=pythonstyle]
def rne(fwd, bwd, t, s, ...): 
  # Discretise [t, s] with N steps
  ...
  for n in range(N):
    ...
    fn = fwd(..., tn)
    bn = bwd(..., tn)
    lnR += bn - fn
  return lnR

def rnc_ann(sde, beta, t, s, ...):
  fmu, bnu = sde.fwd_mu, sde.bwd_nu
  (*@\highlightedline{fb, ba = sde.fwd\_b, sde.bwd\_a}@*)
  lnRmunu = rne(fmu, bnu, t, s,...)
  lnRab = rne(fb, ba, t, s,...)
  lnw = -lnRab + lnRmunu * beta
  return lnw
\end{lstlisting}
\endtcolorbox
\end{subfigure}
\caption{Psuedocode illustrating development pipeline for RNC (exemplified with annealing). The user only needs to implement the sampling and target kernels; changing these does not change the rest of the downstream code nor require re-derivation, unlike FKC \citep{skreta2025feynman}.
In practice, we also add the analytical reference to the implementation of \texttt{rne} as described in \Cref{sec:ref}.
To do this, we simply add/substract the  forward/backward kernel and the corresponding Gaussian marginals to \texttt{lnR}, which also does not need to be rewritten from task to task.}\label{fig:pesudocode}
\end{figure}

\section{Limitations}\label{app:limitation}
Our proposed method still encounters the following limitations:
(1) RNC  suffers from the common limitations of SMC, including bias from self-normalised importance weight when the sample size is small, and low diversity in produced samples.
\rebuttal{Also, when the target is significantly far from the pretrained diffusion, SMC will not perform well.}
(2) One variation of RNC relies on the assumption that our pretrained diffusion model is perfect.
While this is also the assumption for other previous approaches \citep{skreta2025feynman}, it will lead to biased annealing/composition results.

\section{Supplementary Methods}
This section presents supplementary methods and technical details that are not covered in the main manuscript:
\begin{itemize}
    \item In \Cref{subsec:rnde}, we describe in detail how to apply RNE for density estimation, and outline its connection to previous methods.
    \item In \Cref{sec:is}, we extend the density estimation framework using importance sampling.
\end{itemize}

We then turn to the use of RNE for inference-time control. Recall that in \Cref{subsubsec:choice_of_fwd_bwd}, we discussed two scenarios: one assumes a perfect diffusion model, while the other relaxes this assumption at the cost of more restricted design choices. We now provide additional details on these cases:
\begin{itemize}
    \item In \Cref{app:heuristic_a_b}, we assume a perfect model and list several heuristic choices for the sampling and target processes (i.e., $a_t$, $b_t$ in \Cref{eq:sampling,eq:target_process}). 
    \item In \Cref{sec:fkc_in_rnc}, under the perfect-model assumption, we show how RNC recovers FKC as a special case.
    \item In \Cref{app:luhuan}, we consider imperfect diffusion models. We explain how the SMC weight in \Cref{prop:luhuan} is derived and why it is only applicable in the reward-tilting setting.
\end{itemize}

\vspace{-3pt}
\subsection{RNE for Diffusion Density Estimation}\label{subsec:rnde}
\vspace{-3pt}
As we discussed in main text, 
a direct application of \Cref{eq:R_RNE_apply} is for density estimation: when $\tau' = 1$, $p_{\tau'}$ is tractable, typically as a Gaussian distribution.
This leads to the following conclusion:
\begin{mdframed}[style=highlightedBox]
\textbf{RN Density Estimator (RNDE).} Consider a pair of forward and backward SDEs in \Cref{eq:fwd,eq:bwd} with drift $\mu_t$ and $\nu_t$  which are time-reversal of each other. 
Let $Y$ be the solution to an arbitrary process with the same diffusion coefficient, going either forward or backward.
With $R$  defined in \Cref{eq:R_definition}, the SDE's marginal density $p_t$ is given by
\begin{align}\label{eq:rn_de}
    p_t(Y_t)  =  p_1(Y_1) R^\nu_\mu (Y_{[t, 1]}) .
\end{align}
\end{mdframed}
Given a perfect diffusion model, the RHS of the estimator is tractable up to discretisation error:  $p_1$ is a Gaussian density, and  $R^\nu_\mu$ can be calculated by the noising and denoising kernel with \Cref{eq:R_definition}.
\par
\textbf{Connection to previous works.} The relation in \Cref{eq:rn_de} coincides with the density-augmented SDE  (\citealp{karczewski2024diffusion}, Theorem 1), and equivalently, in their concurrent work,  Itô density estimator (\citealp{skreta2024superposition}, Theorem 1).
More precisely, their density estimator states that, for a \emph{perfectly} pretrained diffusion model defined in \Cref{eq:dm_fwd,eq:dm_bwd} (e.g., $\sigma_t = \epsilon_t$, $  \mu_t= f_t $ and $\nu_t = f_t - \sigma_t^2 \nabla\log p_t$), letting $Y$ be the solution to \emph{any} backward process with the same diffusion coefficient as the diffusion model, $\log p_t(Y_t)$ follows the following SDE:
\vspace{-3pt}\begin{align}
    \mathrm{d}\log p_t (Y_t) = \text{\scalebox{0.87}{$-\Big(
     \nabla \cdot   f_t(Y_t) +  
\nabla \log p_t (Y_t)\cdot (f_t(Y_t) - \frac{\sigma_t^2}{2}\nabla \log p_t (Y_t))
    \Big){\dd t} +  
 \nabla \log p_t (Y_t)\cdot   \bwd{\mathrm{d} Y_t}$}} .\label{eq:ito}
\end{align}
Despite the theoretical equivalence, \Cref{eq:rn_de} is more flexible and practically applicable.
 \Cref{eq:ito} is only computationally feasible for diffusion models where $f_t$ is linear and its divergence term $\nabla \cdot  f_t$ is constant.  
 By contrast, \Cref{eq:rn_de}  can be applied efficiently to any bridge model whose marginal on one side is tractable, including stochastic interpolants \citep{albergo2023stochastic}, bridge-matching models \citep{shi2023diffusion,peluchetti2023diffusion}, and escorted AIS samplers \citep{vaikuntanathan2008escorted}.
\par

Moreover, \Cref{eq:rn_de} provides an alternative---more flexible---perspective on why the Itô density estimator remains valid even when any backward process generates $Y$: for a perfect diffusion, the Radon-Nikodym derivative between the noising forward and the denoising backward process is always one and \Cref{eq:rn_de} always holds.
From this viewpoint, there is even no need for $Y$ to satisfy a backward SDE---it can follow processes in any direction, and the estimator still applies.
Therefore, this estimator not only can be applied to estimate the density on samples $Y_t$ obtained from a backward SDE trajectory $Y_{[t, 1]}$, but it can also estimate the density on arbitrary values $Y_t=y_t$, such as samples on a hold-out test set, like the cases considered by \citet{kingma2021variational}.
To achieve this, we can simulate an SDE, forward in time, from $y_t$, and apply \Cref{eq:rn_de} on this forward trajectory.
 \par
We additionally highlight that \citet[][Appendix D]{skreta2024superposition} also wrote down the Gaussian‐based discrete‐time estimator to derive \Cref{eq:ito}.
However, differently, we advocate directly using the form in \Cref{eq:rn_de}---not only because it's more computationally accessible, but also because the RND perspective behind \Cref{eq:rn_de} naturally facilitates enhancements via reference processes and importance sampling, yielding a more stable and accurate estimator, as we will discuss in \Cref{sec:ref}.

\subsection{RNE for Diffusion Density Estimation with Importance Sampling}\label{sec:is}
In the above sections, we  make use of the fact that the Radon–Nikodym derivative between a process and its time-reversal is identically one.
This provides us with an intuitive algorithm for density estimation.
In this section, we provide an alternative approach for density estimation, without relying on the concept of time-reversal. 
Instead, it leverages the importance sampling perspective: 
\begin{proposition}\label{prop:rneis}
let $p_t(x_t)$ be the marginal density of $X_t=x_t$ satisfying the backwards SDE:
\begin{align}\label{eq:bwd_for_rneis}
    \mathrm{d} X_t = \nu_t(X_t) \mathrm{d}t + \epsilon_t \bwd{\mathrm{d}W_t},\ \ X_{1}\sim p_{1} .
\end{align}
Consider a forward process $Y_t$ with drift $u_t$, and define $R^\nu_u$ as \Cref{eq:R_definition}, we have
\begin{align}
  p_t(x_t)= \mathbb{E} \left[p_1(Y_1)  R^\nu_u(Y_{[t:1]}) \middle | Y_t = x_t\right], \label{eq:rneis}
\end{align}
where the expectation is taken over the forward process within the time horizon $[t, 1]$ conditional on $Y_t=x_t$.
When discretised with $t = t_1 < t_2 < \cdots < t_N = 1$:
\begin{align}
    p_{t_1}(x_{t_1}) \approx  \mathbb{E} \left[p_{t_N}(Y_{t_N}) \frac{\prod_{n=1}^{N-1}p_{n|n+1}^{\nu}(Y_{t_n}|Y_{t_{n+1}})}{\prod_{n=1}^{N}p^u_{n+1|n}(Y_{t_{n+1} }|Y_{t_n})} \middle| Y_{t_1} = x_{t_1}\right] .\label{eq:discreterneis}
\end{align}
\end{proposition}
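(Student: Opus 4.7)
The plan is to derive \cref{eq:rneis} by combining the pointwise RNDE identity of \cref{subsec:rnde} with a Girsanov-style change of measure on path space. First, let $\mu_t \coloneqq \nu_t + \epsilon_t^2 \nabla \log p_t$ be the true forward drift that makes \cref{eq:bwd_for_rneis} the time-reversal of $\mathrm{d}Y_t = \mu_t(Y_t)\,\mathrm{d}t + \epsilon_t\,\fwd{\mathrm{d}W_t}$. By \cref{eq:rn_de} (RNDE), the pointwise identity $p_t(Y_t) = p_1(Y_1)\, R^\nu_\mu(Y_{[t,1]})$ holds for almost every path. Conditioning both sides under the forward time-reversal path measure $\fwd{\P}^\mu$ on $Y_t = x_t$ is then a trivial operation on the left-hand side and yields
\begin{align*}
p_t(x_t) = \mathbb{E}_{\fwd{\P}^\mu}\!\bigl[p_1(Y_1)\, R^\nu_\mu(Y_{[t,1]}) \,\big|\, Y_t = x_t\bigr].
\end{align*}

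Next, I switch the reference measure from $\fwd{\P}^\mu$ to $\fwd{\P}^u$, the path measure of the forward process with drift $u_t$ and the same diffusion coefficient $\epsilon_t$, both conditioned on $Y_t = x_t$. Because the two processes share a starting point and diffusion coefficient, Girsanov's theorem (equivalently, the Euler--Maruyama Gaussian-kernel ratio) gives
\begin{align*}
\frac{\mathrm{d} \fwd{\P}^\mu_{|Y_t=x_t}}{\mathrm{d} \fwd{\P}^u_{|Y_t=x_t}}(Y_{[t,1]}) = \lim_{N\to\infty} \frac{\prod_{n=1}^{N-1} p^\mu_{n+1|n}(Y_{t_{n+1}}|Y_{t_n})}{\prod_{n=1}^{N-1} p^u_{n+1|n}(Y_{t_{n+1}}|Y_{t_n})},
\end{align*}
with no initial-marginal factor because both measures agree at the pinned endpoint.

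The computation then reduces to a one-step telescoping cancellation: the Girsanov factor carries $\prod p^\mu_{n+1|n}$ in its numerator, which precisely cancels the $\prod p^\mu_{n+1|n}$ in the denominator of $R^\nu_\mu$ (see \cref{eq:R_definition}), leaving
\begin{align*}
R^\nu_\mu(Y_{[t,1]}) \cdot \frac{\mathrm{d} \fwd{\P}^\mu_{|Y_t=x_t}}{\mathrm{d} \fwd{\P}^u_{|Y_t=x_t}}(Y_{[t,1]}) = \lim_{N\to\infty}\frac{\prod p^\nu_{n|n+1}}{\prod p^u_{n+1|n}} = R^\nu_u(Y_{[t,1]}).
\end{align*}
Substituting this into the importance-sampling identity produces \cref{eq:rneis}, and the discretised form \cref{eq:discreterneis} follows by keeping $N$ finite throughout and unfolding \cref{eq:R_definition} before any limit is taken.

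The main obstacle is making the change of measure rigorous in continuous time: one needs $\fwd{\P}^\mu$ and $\fwd{\P}^u$ to be mutually absolutely continuous (a Novikov-type condition on $\mu_t - u_t$) and must check that the conditional expectation operation commutes with the Radon--Nikodym derivative on pinned path measures. I would sidestep these subtleties by carrying out the argument in discrete time throughout---where every step reduces to exact finite-dimensional Gaussian manipulations and the cancellation above is literal---and only pass to the continuous-time limit at the very end, using the same convergence arguments already invoked to justify \cref{eq:continuous_rne}.
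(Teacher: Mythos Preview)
Your argument is correct, but it takes a different route from the paper's own proof. You introduce the true time-reversal drift $\mu_t = \nu_t + \epsilon_t^2 \nabla\log p_t$ as an intermediary: first invoke the pointwise RNDE identity $p_t(Y_t)=p_1(Y_1)R^\nu_\mu(Y_{[t,1]})$, then change the path measure from $\fwd{\P}^\mu$ to $\fwd{\P}^u$ via Girsanov, and observe that the Girsanov factor cancels the $\mu$-kernels in $R^\nu_\mu$ to produce $R^\nu_u$. The paper instead works directly with $\nu$ and $u$: it recognises that $R^\nu_u(Y_{[t,1]}) = \frac{\mathrm{d}\bwd{\P}}{\mathrm{d}\fwd{\Q}}(Y_{[t,1]})\cdot\frac{q_t(Y_t)}{p_1(Y_1)}$, where $\bwd{\P}$ is the backward $\nu$-process from $p_1$ and $\fwd{\Q}$ is the forward $u$-process from an arbitrary $q_t$. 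Decomposing the full-path RND as $\frac{p_t(Y_t)}{q_t(Y_t)}\cdot\frac{\mathrm{d}\bwd{\P}_{(t,1]|t}}{\mathrm{d}\fwd{\Q}_{(t,1]|t}}$ and using that the conditional RND integrates to one finishes the argument in a single stroke. Your route has the pedagogical virtue of making the link to RNDE explicit, but it carries the extra assumption that the score $\nabla\log p_t$ is well-defined so that $\mu$ exists; the paper's direct argument never needs this intermediate drift and is closer in spirit to the variational-inference ``marginalise, condition, re-weight'' macro sketched in the main text.
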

A detailed proof can be found in \cref{appdx:huang}.
However, to provide more intuition, we showcase its derivation from the standard variational inference perspective \citep{blei2017variational,kingma2013auto}:
\begin{minipage}[t]{0.45\textwidth}
\resizebox{1.22\textwidth}{!}{
\begin{minipage}[t]{280pt}
\vspace{2.5pt}
\textbf{Variational Inference Macros}
\vspace*{0.\baselineskip}
\begin{fleqn}[5pt]
\begin{align*}
   &\text{marginalise: } \;\; p(x) = \int p(x,z) dx, \\
   &\text{condition:     }\;\;\;\;\; p(x) = \int p(x|z) p(z) dx ,\\
   &\text{re-weight: } \;\;\;\; \; p(x) = \mathbb{E}_{q(z|x)}\Bigg[\underset{\color{ForestGreen}\textbf{RNE !}}{\tcboxmath[mygreenbox]{\frac{p(x|z)}{q(z|x)}}}p(z)\Bigg].
\end{align*}
\end{fleqn}
\end{minipage}%
}
\end{minipage}%
\begin{minipage}[t]{0.45\textwidth}
\resizebox{1.22\textwidth}{!}{
\begin{minipage}[t]{280pt}
\vspace{2.5pt}
{\centering \textbf{Pathwise Counterparts}}
\begin{fleqn}[5pt]
\begin{align*}
    p_{t_1}(x) &= \int p(Y_{t_1}=x, Y_{t_{2:N}}) dY_{2:N} ,\\
    p_{t_1}(x) &= \int p(Y_{t_1}=x, Y_{t_{2:N}} |Y_{t_N}) p_{t_N}(Y_{t_N}) dY_{2:N},\\
     p_{t_1}(x) &= \mathbb{E}_{q( Y_{t_{2:N}} |Y_{t_1}=x)}\Bigg[p_{t_N}(Y_{t_N}) \underset{\color{ForestGreen}\textbf{RNE !}}{\tcboxmath[mygreenbox]{\frac{ p(Y_{t_1}=x, Y_{t_{2:N}} |Y_{t_N})}{q( Y_{t_{2:N}} |Y_{t_1}=x)} }}\Bigg \vert Y_{t_1}\!=\!x\Bigg].
\end{align*}
\end{fleqn}
\end{minipage}
}
\end{minipage}\vspace{-2.5pt}
\par
We also note that \Cref{prop:rneis} generalises the RN Density estimator in \Cref{subsec:rnde}.
As in the setting of a perfect time-reversal, a single Monte Carlo sample suffices to recover $p_t$.
Hence, \Cref{eq:rneis} degrades to \Cref{eq:rn_de}.
\Cref{prop:rneis} also offers an intuitive derivation of the Feynman-Kac density relation proposed by \citet{huang2021variational}, as stated in the following Corollary:
\begin{corollary}\citep{huang2021variational}\label{col:huang}
The relation in \Cref{eq:rneis} can be simplified to
 \begin{align}
       p_t(x_t)= \mathbb{E}_{Z_{[t, 1]} \sim\fwd{\mathbb{P}}^\nu} \left[p_1(Z_1)\exp\left(\int_t^1\nabla \cdot \nu_{t'} (Z_{t'}) \mathrm{d}{t'}\right)\middle| Z_t =x_t\right],
 \end{align}
 where $\fwd{\mathbb{P}}^\nu$ represents a forward process with drift $\nu_t$\footnote{We emphasise the different between $\fwd{\mathbb{P}}^\nu$ and the time reversal of \Cref{eq:bwd_for_rneis}.
 The former directly runs in forward with drift $\nu_t$, inducing a new path measure, while the latter defines the same path measure as \Cref{eq:bwd_for_rneis}.}.
\end{corollary}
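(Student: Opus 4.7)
The corollary should follow as an immediate specialisation of Proposition 3.2 to the case where the forward reference process is taken to have drift equal to the backward drift of the diffusion, i.e.\ $u_t = \nu_t$. With this choice the path measure of the reference process in Proposition 3.2 coincides by construction with $\fwd{\P}^\nu$ appearing in the statement of the corollary, so all that remains is to evaluate $R^\nu_\nu(Z_{[t,1]})$ in closed form and show it reduces to the exponential of the divergence integral.

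\textbf{Main steps.} First, I would plug $\mu = \nu$ into the continuous-time formula for $R^\nu_\mu$ from Eq.~\ref{eq:continuous_rne}. The quadratic correction $\tfrac{1}{2}\int \epsilon_{t'}^{-2}(\|\mu_{t'}\|^2 - \|\nu_{t'}\|^2)\mathrm{d}t'$ vanishes identically, leaving
\[
R^\nu_\nu(Z_{[t,1]}) = \exp\!\left(\int_t^1 \frac{1}{\epsilon_{t'}^2}\,\nu_{t'}(Z_{t'}) \cdot \bwd{\mathrm{d}Z_{t'}} - \int_t^1 \frac{1}{\epsilon_{t'}^2}\,\nu_{t'}(Z_{t'}) \cdot \fwd{\mathrm{d}Z_{t'}}\right).
\]
Second, I would invoke the standard identity relating a backward and a forward It\^o integral of a vector field against a semimartingale with diffusion coefficient $\epsilon_t$: for sufficiently regular $f$,
\[
\int_t^1 f_{t'}(Z_{t'}) \cdot \bwd{\mathrm{d}Z_{t'}} \;-\; \int_t^1 f_{t'}(Z_{t'}) \cdot \fwd{\mathrm{d}Z_{t'}} \;=\; \int_t^1 \epsilon_{t'}^2\,\nabla\!\cdot\! f_{t'}(Z_{t'})\,\mathrm{d}t' .
\]
This is the componentwise It\^o rule combined with the quadratic-variation bracket $\mathrm{d}[Z^i, Z^j]_{t'} = \epsilon_{t'}^2\,\delta_{ij}\,\mathrm{d}t'$; on a discretised grid it is simply the telescoping identity $\sum_n (f(Z_{t_{n+1}}) - f(Z_{t_n})) \cdot (Z_{t_{n+1}} - Z_{t_n})$ taken to the mesh-zero limit. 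Applying this with $f_{t'} = \nu_{t'}$ cancels the $\epsilon_{t'}^{-2}$ prefactor inside the exponential, yielding $R^\nu_\nu(Z_{[t,1]}) = \exp\!\left(\int_t^1 \nabla\!\cdot\!\nu_{t'}(Z_{t'})\,\mathrm{d}t'\right)$.

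\textbf{Conclusion and obstacle.} Substituting this expression back into Proposition~\ref{prop:rneis} and using that the expectation there is taken under the forward process with drift $u = \nu$, i.e.\ under $\fwd{\P}^\nu$, gives exactly the claimed Feynman--Kac identity. The main technical obstacle is the rigorous justification of the backward-minus-forward It\^o identity: this requires enough smoothness and growth control on $\nu_t$ for the backward stochastic integral to be well-defined and for the quadratic-variation argument to go through. Once that regularity is granted---implicitly assumed in the setting of Eq.~\ref{eq:continuous_rne}, for which the paper defers to Vargas et al.\ (2023)---the derivation is purely algebraic, and it recovers the result of \citet{huang2021variational} without invoking their Feynman--Kac PDE machinery, thereby providing a conceptually simpler route via the variational/RND viewpoint of Proposition~\ref{prop:rneis}.
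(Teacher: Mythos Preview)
Your approach is correct and in fact slightly more direct than the paper's. The paper keeps the reference drift $u_t$ general throughout: it applies the backward-to-forward conversion to $\log R^\nu_u$, rewrites the result as a Girsanov density $\tfrac{\mathrm{d}\fwd{\mathbb{P}}^\nu}{\mathrm{d}\fwd{\mathbb{Q}}^u}$ times $\exp(\int \nabla\!\cdot\!\nu_{t'}\,\mathrm{d}t')$, and then performs a change of measure inside the expectation from $\fwd{\mathbb{Q}}^u$ to $\fwd{\mathbb{P}}^\nu$. You instead set $u=\nu$ at the outset, which makes the Girsanov step unnecessary: the expectation in Proposition~\ref{prop:rneis} is already taken under $\fwd{\mathbb{P}}^\nu$, and you only need to evaluate $R^\nu_\nu$. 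Both routes hinge on the same conversion identity, so the difference is organisational rather than conceptual; the paper's detour through Girsanov pays off only because it also yields the more general Corollary~\ref{col:prek} (the variational bound of \citet{huang2021variational,premkumar2024diffusion} for arbitrary $u$) along the way.

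One small slip: when you invoke the identity, you should take $f_{t'}=\epsilon_{t'}^{-2}\nu_{t'}$, not $f_{t'}=\nu_{t'}$. Since $\epsilon_{t'}$ is spatially constant, $\nabla\!\cdot\!(\epsilon_{t'}^{-2}\nu_{t'})=\epsilon_{t'}^{-2}\nabla\!\cdot\!\nu_{t'}$, and the $\epsilon_{t'}^2$ from the quadratic-variation correction then cancels the prefactor---which is clearly what you meant by ``cancels the $\epsilon_{t'}^{-2}$ prefactor'', so the conclusion is unaffected.
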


\subsection{Heuristic Choice of $a_t$ and $b_t$ for Inference-time Control}\label{app:heuristic_a_b}

After discussing RNE for density estimation, we now return to inference-time control.
In \Cref{subsubsec:choice_of_fwd_bwd}, we highlighted that we have the freedom to choose any of the sampling and target processes.
In this section, we list some heuristics that can be considered.
We note that these are by no means exhaustive: any suitable heuristics can be used without altering the core SMC algorithm. 
\par
Consider the diffusion model defined in \Cref{eq:dm_fwd,eq:dm_bwd} or its SI characterisation in \Cref{eq:general_dm_fwd,eq:general_dm_bwd}.
To align our notation with the standard diffusion model literature, recall that
\begin{align}
\begin{aligned}
    \mu_t &= \mathrm{v}_t + {\epsilon_t^2}/{2} \nabla \log p_t = f_t + {(\epsilon_t^2-\sigma_t^2)}/{2} \nabla \log p_t\\
    \nu_t &= \mathrm{v}_t - {\epsilon_t^2}/{2} \nabla \log p_t = f_t - {(\epsilon_t^2+\sigma_t^2)}/{2} \nabla \log p_t
\end{aligned}
\end{align}
Then, we may choose
\begin{align}\label{eq:heuristic_fwd_bwd_anneal_app}
&\text{Anneal:} 
   &&  a_t = f_t + \lambda_t^a \nabla \log p_t, \quad b_t = f_t + \lambda_t^b \nabla \log p_t \\
 &\text{Reward:}   &&    a_t = f_t  + \frac{\epsilon_t^2 -\sigma_t^2}{2}\nabla\log p_t + \lambda^a_t g_t ,\quad   b_t = f_t -\frac{\epsilon_t^2 +\sigma_t^2}{2}\nabla\log p_t + \lambda^b_t g_t\label{eq:heuristic_fwd_bwd_reward} \\
 & \text{CFG \& Product:}
       &&a_t = f_t + \lambda_{t}^{a,1}\,\nabla\log p_t^{(1)} + \lambda_{t}^{a,2}\,\nabla\log p_t^{(2)},\nonumber\\
     && &b_t = f_t + \lambda_{t}^{b,1}\,\nabla\log p_t^{(1)} + \lambda_{t}^{b,2}\,\nabla\log p_t^{(2)}\label{eq:heuristic_fwd_bwd_product}
\end{align}
where the hyperparameter $\lambda$ can be heuristically selected or tuned, and $g_t$ can be designed/learned to approximate the $h$-transform \citep{uehara2025inference,domingo2024adjoint,denker2024deft} or set heuristically \citep{chungdiffusion,wu2023practical,song2023loss,singhal2025general}.

\subsection{``$\mathrm{FKC}  \subseteq \mathrm{RNC}$"}\label{sec:fkc_in_rnc}
We now show the choices which recover FKC \citep{skreta2025feynman} as special cases:
\begin{mdframed}[style=highlightedBox]
\begin{proposition}[``$\mathrm{FKC}  \subseteq \mathrm{RNC}$"]\label{prop:fkcrnc}
RNC with the following $a_t$, $b_t$ and $\epsilon_t$ is equivalent to  FKC:
\begin{align}\label{eq:fkc=rnc}
\begin{aligned}
&\text{Anneal:} 
   &&  a_t = f_t - \eta \sigma_t^2 \nabla \log p_t, \quad b_t = f_t - (\eta \sigma_t^2 - \beta \epsilon_t^2)  \nabla \log p_t, \quad \epsilon_t = \zeta \sigma_t, \\
   &\text{Product:} 
   &&  a_t = f_t - \eta \sigma_t^2 \left(\nabla \log p_t^{(1)}  + \nabla \log p_t^{(2)} \right), \\
   &&&  b_t = f_t - \left(\eta \sigma_t^2  - \epsilon^2_t \beta\right)\left( \nabla \log p_t^{(1)} + \nabla \log p_t^{(2)}\right), \quad  \epsilon_t = \zeta \sigma_t, \\
 & \text{CFG:}
       &&a_t = f_t - \sigma_t^2 \left((1-\beta) \nabla \log p_t^{(1)}  + \beta \nabla \log p_t^{(2)} \right), \quad b_t = f_t,\quad \epsilon_t = \sigma_t,
   \end{aligned} 
\end{align}
where $\eta=\beta+ (1-\beta)c$ and $\zeta = \sqrt{1+(1-\beta)2c/\beta}$ for $c\in [0, 1/2]$, following the definition in FKC \citep[Propositions 3.1, 3.2, 3.3]{skreta2025feynman}. 
\end{proposition}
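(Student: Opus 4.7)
The plan is to verify each of the three equivalences in \cref{eq:fkc=rnc} by substituting the proposed sampling drift $a_t$, target drift $b_t$, and noise scale $\epsilon_t$ into the RNC weight formulas \cref{eq:anneal_w,eq:reward_w,eq:prod_w}, expanding via the continuous-time representation \cref{eq:continuous_rne}, and matching the result to the weight expressions in Propositions~3.1--3.3 of \citet{skreta2025feynman}.

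For the annealing case, since $\mu_t = f_t$ and $\nu_t = f_t - \sigma_t^2 \nabla\log p_t$ are a time-reversal pair for the pretrained diffusion, \cref{eq:R_RNE_apply} gives $R^\nu_\mu = p_\tau(X_\tau)/p_{\tau'}(X_{\tau'})$, so $[R^\nu_\mu]^\beta$ directly supplies the marginal ratio demanded by the annealed target. The substantive step is to expand $[R^a_b]^{-1}$ using \cref{eq:continuous_rne}. Writing $b_t - a_t = \beta \epsilon_t^2 \nabla\log p_t$ and combining the forward and backward Itô integrals in \cref{eq:continuous_rne} into a Stratonovich integral against $\nabla\log p_t \cdot \mathrm{d}X_t$ plus a quadratic-variation correction, the log-weight reduces to a linear combination of $\int \nabla\log p_t \cdot \mathrm{d}X_t$ and $\int \|\nabla\log p_t\|^2\,\mathrm{d}t$ terms whose coefficients, parameterised by $\eta$ and $\zeta$, are designed to reproduce the FKC annealing corrector after substituting $\eta = \beta + (1-\beta)c$ and $\zeta^2 = 1 + 2c(1-\beta)/\beta$.

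The product and CFG cases follow the same template. For the product case, the two time-reversal pairs contribute a factor $[R^{\nu^{(1)}}_{\mu^{(1)}}]^\alpha [R^{\nu^{(2)}}_{\mu^{(2)}}]^\beta$ equal to the ratio of the annealed joint marginal, and since the proposed $a_t, b_t$ depend linearly on $\nabla\log p_t^{(1)} + \nabla\log p_t^{(2)}$, the annealing computation carries over verbatim under the substitution $\nabla\log p_t \mapsto \nabla\log p_t^{(1)} + \nabla\log p_t^{(2)}$. For CFG the structure collapses further: the choice $b_t = f_t$ makes the target process equal to the pretrained noising SDE, so the $b$-integrals in $R^a_b$ cancel against matching terms in $R^{\nu^{(i)}}_{\mu^{(i)}}$, and with $\epsilon_t = \sigma_t$ the surviving quadratic pieces match the FKC CFG weight exactly.

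The main obstacle is bridging conventions: FKC derives its weights from the Feynman--Kac PDE associated with a drift-perturbed backward SDE plus a potential evaluated along the trajectory, whereas RNC produces a Radon--Nikodym form as in \cref{eq:continuous_rne}. A cleaner route that sidesteps the algebraic bookkeeping is to observe that both procedures construct SMC correctors for the same proposal and target path measures---the proposal is $\bwd{\Q}^a$ with $a$ as specified, and the target is the path measure whose marginals are $q_t$ and whose dynamics match FKC's target SDE---so by uniqueness of the Girsanov density between these measures, the per-trajectory importance weights must agree up to the shared normalising constant, which is exactly the content of Proposition~\ref{prop:fkcrnc}.
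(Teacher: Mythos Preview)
Your direct-expansion plan (first three paragraphs) is the paper's approach: expand both $d\log R^\nu_\mu$ and $d\log R^a_b$ via \cref{eq:continuous_rne} together with the forward/backward conversion rule, form $\beta\,d\log R^\nu_\mu - d\log R^a_b$ (and the product/CFG analogues), and check this equals the FKC weight ODE. Two points you gloss over matter for the calculation to close. First, the FKC weight is a pure ODE with no stochastic-integral term, so the $\nabla\log p_t\cdot\bwd{\mathrm{d}X_t}$ contributions from $R^\nu_\mu$ and $R^a_b$ must cancel exactly; this is precisely what $b_t-a_t=\beta\epsilon_t^2\nabla\log p_t$ is engineered to achieve, and you should say so rather than leave a $\int\nabla\log p_t\cdot\mathrm{d}X_t$ term in the answer. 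Second, the forward/backward conversion produces divergence terms $\nabla\cdot\mu_t$ and $\nabla\cdot b_t$, whose $\Delta\log p_t$ pieces must also cancel for the stated $(\eta,\zeta)$---that cancellation is the content of the parameter relations, not a side remark. For the product case, your ``verbatim substitution'' is too quick: $\beta\sum_i d\log R^{\nu^{(i)}}_{\mu^{(i)}}$ carries quadratic pieces $\sum_i\|\nabla\log p^{(i)}_t\|^2$, not $\|\sum_i\nabla\log p^{(i)}_t\|^2$, and the surviving cross term is exactly the $\nabla\log p^{(1)}_t\cdot\nabla\log p^{(2)}_t$ that appears in FKC's product weight.

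Your ``cleaner route'' has a genuine gap. FKC does not specify a forward target SDE; it obtains its weight from a Feynman--Kac PDE and writes it as an ODE in $t$ along the sampled path. Two SMC schemes sharing the same proposal $\bwd{\Q}^a$ and the same sequence of marginal targets $q_t$ can have different per-trajectory weights, because their extended path-space targets need not coincide. Uniqueness of the Girsanov density only yields equality of weights once you have shown that FKC's implicit path-space target equals $\fwd{\Q}^b$ for the particular $b_t$ in \cref{eq:fkc=rnc}---and establishing that identification is exactly the algebraic computation you were hoping to avoid, so the argument is circular as stated.
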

\end{mdframed}
FKC derives its weights via the Feynman-Kac PDE and then designs the sampling process to cancel the costly divergence term. 
Therefore, FKC has very restrictive design choices.
By contrast, our RNC features higher flexibility in selecting these processes, yet still incurs no extra computational overhead, allowing us to heuristically select a process pair that may reduce variance \citep{jarzynski1997nonequilibrium,neal2001annealed}. 
Moreover, FKC requires deriving the weight formula for each task (anneal, product, etc), while RNC provides a macro-style ``\textit{plug-and-play}'' recipe for computing importance weights.

\subsection{Exact SMC weight for Imperfect diffusion model}\label{app:luhuan}

We now consider the SMC weight for the imperfect diffusion model we discussed in \Cref{prop:luhuan}.

Before discussing the results, we distinguish two sources of error: (1) the pretrained diffusion model will not perfectly reproduce the training data distribution due to imperfect score and discretisation errors; (2) for RNC, when calculating the SMC weight using the relation in  \Cref{eq:R_RNE_apply}, this equation does not exactly hold due to imperfect time-reversal and discretisation.
The first error is intrinsic to the diffusion model and beyond our control; we aim to address the latter here.
More concretely, we define $p_{t}$ as the Law of samples under the \emph{imperfect} diffusion model with \emph{discretisation error}, and our aim is to generate samples from $q_t$ defined in terms of this $p_t$, following \Cref{eq:examples}.

We still consider the time horizon $[\tau, \tau']$ as an example.
To account for the error arising from both model imperfection and discretisation, we will conduct our discussion in discrete time with $N$ steps $\tau = t_1 <  t_2 < \cdots < t_N = \tau'$.
Let
$
p^{\nu}_{n|n+1}(X_{t_n}|X_{t_{n+1}})
$ and $
p^{a}_{n|n+1}(X_{t_n}|X_{t_{n+1}})
$
be the denoising kernel for the imperfect 
 diffusion model and our chosen sampling kernel, respectively.
 The SMC weight in \Cref{prop:luhuan} is exact. 
 We repeat the result here for easier reference:
\begin{align}
 w_{[\tau, \tau']}  \propto  \frac{\exp(r_\tau(X_{t_1=\tau}))}{\exp(r_{\tau'}(X_{t_N=\tau'}))}  \frac{\prod_{n=1}^{N-1} p^{\nu}_{n|n+1} (X_{t_n}|X_{t_{n+1}})}{\prod_{n=1}^{N-1} p^{a}_{n|n+1} (X_{t_n}|X_{t_{n+1}})}
\end{align}
We now answer the following questions: 
\emph{Why is this SMC weight exact, and why does this only apply to reward-tilting?}

First, analogous to the concept of time-reversal, we denote the posterior density for the diffusion denoising kernel as $p_{{2:N}|1} (X_{t_{2:N}}|X_1)$.
According to Bayes's rule, we have
\begin{align}\label{eq:bayesian_imperfect}
    {\color{myred}p_{t_1}(X_{t_1})} {p_{{2:N}|1} (X_{t_{2:N}}|X_{t_1})}= {\color{myred}p_{t_N}(X_{t_N}) } {   \prod_{n=1}^{N-1} p^{ \nu}_{n|n+1}(X_{t_{n}}|X_{t_{n+1}}) }
\end{align}
Note that we do not know the tractable form of $p_{{2:N}|1}$.
However, as we will immediately see, we can cancel this term with our chosen target process and hence eliminate the need to calculate it.
Concretely, similar to the target process in \Cref{eq:target_process}, here we can also choose an arbitrary target conditional density $ { q^{\text{target}} } ( X_{t_{2:N}}| X_{t_1})$, and the SMC weight is defined as 
\begin{align}\label{eq:imperfect_w_anneal_example}
    w_{[\tau, \tau']}
    \propto \frac{\exp(r_\tau(X_{t_1=\tau}))}{\exp(r_{\tau'}(X_{t_N=\tau'}))}  {\color{myred}\frac{p_{t_1}(X_{t_1}) }{ p_{t_N}(X_{t_N})}} \frac{{ q^{\text{target}} } ( X_{t_{2:N}}| X_{t_1})}{ \prod_{n=1}^{N-1} p^{a}_{n|n+1} (X_{t_n}|X_{t_{n+1}})}      
    \end{align}
    By \Cref{eq:bayesian_imperfect,eq:imperfect_w_anneal_example}, we have
    \begin{align}\label{eq:weight_exact_before_cancel}
w_{[\tau, \tau']} \propto \frac{\exp(r_\tau(X_{t_1=\tau}))}{\exp(r_{\tau'}(X_{t_N=\tau'}))}   \frac{\prod_{n=1}^{N-1} p^{\nu}_{n|n+1} (X_{t_n}|X_{t_{n+1}})} {   p_{{2:N}|1} (X_{t_{2:N}}|X_{t_1}) } \frac {   { q^{\text{target}} } ( X_{t_{2:N}}| X_{t_1})} {  \prod_{n=1}^{N-1} p^{a}_{n|n+1} (X_{t_n}|X_{t_{n+1}})} ,
\end{align}
The term $p_{{2:N}|1} (X_{t_{2:N}}|X_1)$ is intractable, and $q^{\text{target}}$ we can freely choose without affecting the correctness of SMC.
Therefore, if we set $q^{\text{target}} =  p_{{2:N}|1}$, these two terms will cancel and all terms left in the importance weight will be tractable. 
In continuous time, $q^{\text{target}} = p_{{2:N}|1}$ will converge to the time-reversed denoising SDE of the imperfect diffusion model.
\par
It is important to note that the same cancellation cannot be applied to the annealing or the product case.
In fact, the derivation is correct until the step of \Cref{eq:weight_exact_before_cancel}.
Taking the annealing case as an example, this step gives us the following SMC weight:
\begin{align}
w_{[\tau, \tau']} \propto    \frac{(\prod_{n=1}^{N-1} p^{\nu}_{n|n+1} (X_{t_n}|X_{t_{n+1}}))^\beta} {  ( p_{{2:N}|1} (X_{t_{2:N}}|X_1))^\beta } \frac {   { q^{\text{target}} } ( X_{t_{2:N}}| X_{t_1})} {  \prod_{n=1}^{N-1} p^{a}_{n|n+1} (X_{t_n}|X_{t_{n+1}})},
\end{align}
However, we then cannot set $q^{\text{target}}( X_{t_{2:N}}| X_{t_1}) =  p_{{2:N}|1}^\beta ( X_{t_{2:N}}| X_{t_1})$ to cancel the intractable term.
This is because $p_{{2:N}|1}^\beta$ is \emph{normalised}.
We may set $q^{\text{target}}( X_{t_{2:N}}| X_{t_1}) =  p_{{2:N}|1}^\beta ( X_{t_{2:N}}| X_{t_1}) / Z$.
But $Z = \int p_{{2:N}|1}^\beta ( X_{t_{2:N}}| X_{t_1}) \dd X_{t_{2:N}}$, which is NOT a constant but a function of $ X_{t_1}$.

\section{RNE for Discrete Diffusion}\label{app:discrete_diffusion}

As we discussed in \Cref{sec:ctmc_rne_main}, RNE can be seamlessly adapted to discrete diffusion with Continuous Time Markov Chains (CTMC) \citep{campbell2022continuous,lou2023discrete,shi2024simplified}. To do so we first define the $R$ quantity for CTMC.

\begin{definition}(CTMC $R$) Given a CTMC $Y$ in the time horizon $[\tau,\tau']$ and rate matrices $Q_t, Q'_t$ corresponding to CTMC's evolving in different time directions in the time interval $[\tau,\tau']$,  we define 
\begin{align}
    R^{Q'}_{Q}(Y_{[\tau,\tau']}) =&\exp\Bigg(\int_{\tau}^{\tau'} Q_s^{\prime}\left(Y_s, Y_s\right)-Q_s\left(Y_s, Y_s\right) \mathrm{d} s  +\sum_{s, Y_s^{-} \neq Y_s} \log \left(\frac{Q_s^{\prime}\left(Y_s^{-}, Y_s\right)}{Q_s\left(Y_s, Y_s^{-}\right)}\right)\Bigg),
\end{align}
where $\sum_{s, Y_s^{-} \neq Y_s}$ sums over all points where $Y_s$ switches (``\textit{jumps}") between states.
\end{definition}

It is easy to construct the marginal density estimator:

\begin{mdframed}[style=highlightedBox]
\begin{proposition}\textbf{RN Density Estimator (RNDE).} Consider a pair of forward and backward CTMCs which are time-reversal of each other and with rate matrices $Q_t, Q'_t$. 

Let $Y$ be the solution to an arbitrary CTMC, going either forward or backward.
Then the marginal density $p_t$ of the CTMC with rate matrix $Q$ is given by
\begin{align}\label{eq:rn_dectmc}
    p_t(Y_t)  =  p_1(Y_1) \exp\Bigg(\int_{t}^1 Q_s^{\prime}\left(Y_s, Y_s\right)-Q_s\left(Y_s, Y_s\right) \mathrm{d} s  +\sum_{s, Y_s^{-} \neq Y_s} \log \left(\frac{Q_s^{\prime}\left(Y_s^{-}, Y_s\right)}{Q_s\left(Y_s, Y_s^{-}\right)}\right)\Bigg) .
\end{align}
\end{proposition}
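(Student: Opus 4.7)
The plan is to replicate the proof structure used for the SDE density estimator in \Cref{subsec:rnde}, adapting each step to the CTMC setting. First, I would discretise $[t,1]$ with a uniform grid $t=t_1<t_2<\cdots<t_N=1$ of mesh $\Delta s$ and express the joint law of the grid values $Y_{t_{1:N}}$ in two ways: conditioning forward from $Y_t$ via the rate matrix $Q$ gives $p_t(Y_t)\prod_{n=1}^{N-1} p^Q_{n+1|n}(Y_{t_{n+1}}\mid Y_{t_n})$, while conditioning backward from $Y_1$ via $Q'$ gives $p_1(Y_1)\prod_{n=1}^{N-1} p^{Q'}_{n|n+1}(Y_{t_n}\mid Y_{t_{n+1}})$. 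The time-reversal hypothesis asserts these coincide on the product $\sigma$-algebra, so equating and rearranging yields the discrete Bayes identity $p_t(Y_t)/p_1(Y_1) = \prod_{n=1}^{N-1} p^{Q'}_{n|n+1}(Y_{t_n}\mid Y_{t_{n+1}}) / p^{Q}_{n+1|n}(Y_{t_{n+1}}\mid Y_{t_n})$, which is the CTMC analogue of \Cref{eq:estimator}.

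Next I would pass to $N\to\infty$ and identify the right-hand side with the $R^{Q'}_Q$ functional defined above. Using the standard short-time expansion of a CTMC transition kernel, $p^{Q}_{n+1|n}(y'\mid y) = Q_{t_n}(y,y')\Delta s + o(\Delta s)$ for $y\neq y'$ and $p^{Q}_{n+1|n}(y\mid y) = 1 + Q_{t_n}(y,y)\Delta s + o(\Delta s)$ on the diagonal, and recalling that almost every CTMC path is piecewise constant with only finitely many jumps on a bounded interval, for sufficiently small $\Delta s$ each subinterval contains at most one jump. Splitting the log of the discrete ratio by whether the subinterval contains a jump, the ``no-jump'' subintervals contribute $\sum_n [Q'_{t_n}(Y_{t_n},Y_{t_n})-Q_{t_n}(Y_{t_n},Y_{t_n})]\Delta s$, converging to $\int_t^1 [Q'_s(Y_s,Y_s)-Q_s(Y_s,Y_s)]\,ds$, while the ``jump'' subintervals contribute $\sum_{s:\,Y_{s^-}\neq Y_s}\log[Q'_s(Y_{s^-},Y_s)/Q_s(Y_s,Y_{s^-})]$. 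The transposition of arguments in the denominator arises because the backward kernel $Q$ acts from the later state $Y_{t_{n+1}}=Y_s$ to the earlier state $Y_{t_n}=Y_{s^-}$, so the source/target are swapped relative to the forward kernel built from $Q'$. Summing both contributions reproduces exactly $\log R^{Q'}_Q(Y_{[t,1]})$ as defined, and exponentiating gives \Cref{eq:rn_dectmc}.

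The main obstacle will be making the discrete-to-continuum passage rigorous: the $o(\Delta s)$ remainders must accumulate to something vanishing in probability, and one must justify that the contribution of subintervals containing two or more jumps disappears in the limit. This is the standard content of the Girsanov-type density theorem for counting (marked point) processes, and in the spirit of the paper it can be invoked as a black box, paralleling the appeal to \citet{berner2025discrete} in the SDE proof. A secondary but important subtlety, exactly mirroring the ``arbitrary process with the same diffusion coefficient'' condition in the SDE case, is that the identity is purely pathwise: we nowhere use that $Y$ solves either of the two CTMCs defined by $Q,Q'$, so the estimator applies to any CTMC $Y$ (in either time direction) whose set of admissible jump transitions is dominated by those of $Q$ and $Q'$, which is why the statement allows $Y$ to be an ``arbitrary CTMC.''
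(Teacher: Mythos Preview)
Your proposal is correct but takes a genuinely different route from the paper. The paper's proof is essentially two lines: it cites \citet[Proposition 5.1]{holderrieth2025leaps} to obtain the Radon--Nikodym derivative between the forward and backward CTMC path measures in the form $\frac{\mathrm{d}\fwd{\P}^\mu}{\mathrm{d}\bwd{\P}^\nu}(Y_{[t,1]}) = \frac{p_t(Y_t)}{p_1(Y_1)}\,R^{Q'}_Q(Y_{[t,1]})^{-1}$, and then invokes the time-reversal identity $\mathrm{d}\fwd{\P}^\mu/\mathrm{d}\bwd{\P}^\nu = 1$ to rearrange. Your approach instead mirrors the SDE development in \Cref{sec:RNE}: you set up the discrete Bayes identity for the grid values, expand the one-step CTMC kernels to first order in $\Delta s$, split the log-ratio by jump/no-jump subintervals, and pass to the limit to recover the integral-plus-jump-sum form of $R^{Q'}_Q$. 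What you gain is a self-contained, elementary derivation that does not require the external RND result as a black box (and that in fact re-derives it); what the paper's approach buys is brevity and a cleaner separation of concerns, since the technical limit passage is outsourced entirely to the cited reference. Both arrive at the same identity, and your observation that the estimator is pathwise---and hence applies to any CTMC $Y$ dominated by $Q,Q'$---is exactly the right analogue of the ``arbitrary process'' remark in the SDE case.
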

\end{mdframed}
\begin{proof}
From \citep[Proposition 5.1.]{holderrieth2025leaps} via reciprocating the RND we have that:
\begin{align}
   \frac{ \mathrm{d} \fwd{\P}^\mu}{ \mathrm{d}\bwd{\P}^\nu}(Y_{[t, 1]}) = \frac{p_t(Y_t)} {p_1(Y_1)}    R^{Q'}_{Q}(Y_{[t,1]})^{-1}
\end{align}
Then since ${ \mathrm{d} \fwd{\P}^\mu}/{ \mathrm{d}\bwd{\P}^\nu}(Y_{[t, 1]}) \!= \! 1 $, rearranging gives
\begin{align}
  p_t(Y_t) =  p_1(Y_1) R^{Q'}_{Q}(Y_{[t,1]}).
\end{align}
\end{proof}
Notice that in discrete diffusion, one can integrate the Kolmogorov forward equation in backward time numerically
\begin{align}
    \partial_t p_t = - Q{'}_t^\top p_t
\end{align}
and then index into the vector $p_t$ with $Y_t$ to obtain $p(Y_t)$, however this has the computational cost of $\gO(\text{Number of Steps} \times (\text{Vocabulary Size})^2)$. 
Instead, our estimator can be run online whilst generating samples at a cost of $\gO(\text{Number of Steps}  \times \text{Vocabulary Size})$, making  our RNDE likelihood computation for general CTMCs much more tractable. We highlight that our speed gain holds even in settings where the concrete score is time-independent, and we are able to obtain a matrix exponential solution to the Kolmogorov equation \citep{ou2024your}. This is because the closed-form solution does not scale for large vocabularies, as it requires diagonalising $Q'_t$.

Unlike the Kolmogorov equation solvers, our RNDE introduces bias in practice due to the time reversal being approximate. To mitigate this, we can similarly derive an RNDE-IS-based estimator, which should coincide with the marginal density relation in \citep[Section C.1.1., Page 23]{campbell2024generative} used to derive the ELBO objective in discrete diffusions from a continuous time setting.

\subsection{RNC for Discrete Diffusion}\label{app:discrete_diffusion_rnc}

Now that we have defined $R$ for CTMC, our RN Corrector can be readily applied, yielding similar estimators to \cite{lee2025debiasing}, but generalising to more tasks such as annealing and reward-tilting.

As with RNC for SDEs, we have the same setup,

\begin{enumerate}[leftmargin=*]
  \item 
    \refstepcounter{equation}\label{eq:sampling_ctmc}%
    \textbf{a backward sampling process:} 
    $\partial_t \rho_t =  - A^\top_t \rho_t$
    \item  \refstepcounter{equation}\label{eq:target_process_ctmc}%
    \textbf{a forward target process:}
    $\partial_t h_t =  B^\top_t h_t$
    \item  
  \textbf{intermediate target marginal densities: }
    $q_t$,  which are a function of $p_t$ satisfying $\partial_t p_t = Q^\top_t p_t$.
\end{enumerate}

To give a concrete example let us write the weight for product: 
\begin{align}
    w_{[\tau,\tau']}   \propto \left[R^{Q^{(1)}}_{Q'^{(1)}}(X_{[\tau, \tau']})\right]^\alpha \left[R^{Q^{(2)}}_{Q'^{(2)}}(X_{[\tau, \tau']})\right]^\beta  \left[R^A_B(X_{[\tau, \tau']})\right]^{-1} 
\end{align}
Where $Q'^{(i)},Q^{(i)}$ are the rate matrices of a CTMC and its reversal and $B$ is the rate matrix corresponding to a target process CTMC, and $A$ is the rate matrix of sampling process/proposal.

\subsection{Discretisation of $R$ for CTMC}

As in the continuous state case, we can approximate $R$ as a product of discrete-time kernels. 
\begin{align}
\label{eq:R_definition_ctmc}
        R^{Q'}_{Q}(Y_{[\tau, \tau']})  \approx \frac{  \prod_{n=1}^{N-1} p^{Q'}_{n|n+1} (Y_{t_n}|Y_{t_{n+1}})} {   \prod_{n=1}^{N-1} p^{Q}_{n+1|n}(Y_{t_{n+1}}|Y_{t_n}) }.
\end{align}
this can be seen formalised in \citet[Appendix A]{holderrieth2025leaps}, and the discrete kernels can be approximated using Eulers method \citep{campbell2022continuous}
\begin{align}
    p^{Q}_{n+1|n}\left(Y_{t+\Delta t } \mid Y_t\right) &=\delta_{Y_{t+\Delta t}, Y_t}+Q_t\left(Y_t, Y_{t+\Delta t}\right) \Delta t+o(\Delta t) \\
     p^{Q'}_{n|n+1}\left( Y_t \mid Y_{t+\Delta t } \right) &=\delta_{Y_t, Y_{t+\Delta t}} + Q_{t+\Delta t}'\left(Y_{t+\Delta t}, Y_t \right) \Delta t+o(\Delta t)
\end{align}

\section{Connection Between RNE and Other Approaches}\label{app:connections}
\begin{figure}[H]
    \centering
    \includegraphics[width=0.8\linewidth]{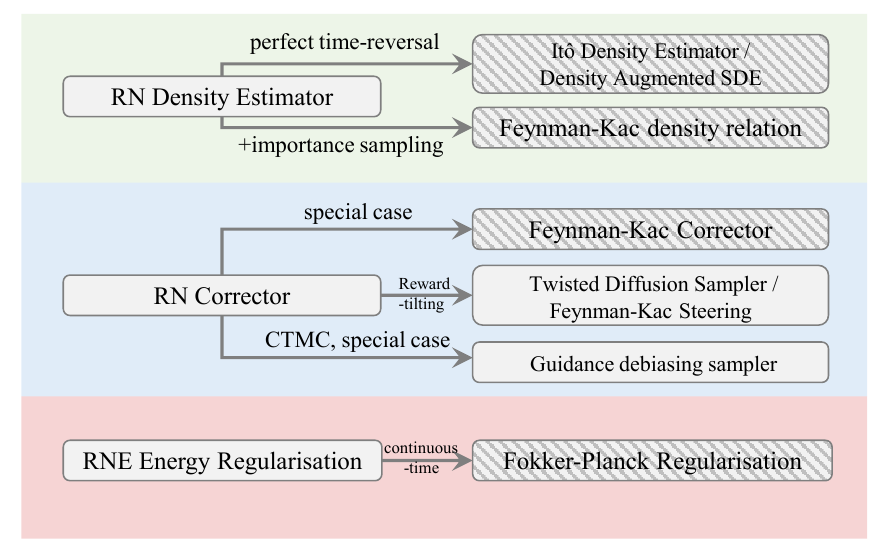}
    \caption{Connection between RNE and other density estimation \& inference-time control \& energy regularisation approaches. Methods with a grey striped background generally require divergence computation/estimation, and divergence-free options are available only in specific cases. }
    \label{fig:connection}
\end{figure}
One of the key contributions of RNE is that it provides a unifying perspective, connecting several previously proposed approaches within a single framework.
While we have highlighted these connections throughout the method description, in this section, we present a concise summary:
\begin{itemize}
    \item For inference-time control, RNE recovers the Feynman–Kac corrector \citep{skreta2025feynman} as a special case in continuous-time, the Twisted Diffusion Sampler \citep{wu2023practical} and Feynman–Kac steering \citep{singhal2025general} for reward tilting, as well as the debiasing method by \citet{lee2025debiasing} for guidance in CTMC.
    \item For density estimation, RNE is equivalent to the density-augmented SDE approach \citep{karczewski2024diffusion} and its concurrent work, It\^o's density estimator \citep{skreta2024superposition} in continuous-time.
When coupled with importance sampling, RNE further recovers the Feynman–Kac density relation as well as the diffusion density estimators proposed by \citet{huang2021variational} and \citet{premkumar2024diffusion}.
    \item For energy-based training, RNE  recovers the Fokker-Planck regularisation \citep{plainer2025consistent}, with a simpler interpretation and cheaper calculation.
\end{itemize}
We summarise these connections in \Cref{fig:connection}.

\vspace{-5pt}
\section{Additional Experiments and Analysis}
\vspace{-5pt}

\subsection{RNDE and Ablation on Reference Process}
\begin{figure}[t]
    \centering
    \includegraphics[width=0.96\linewidth,trim={0 0 0 0}, clip]{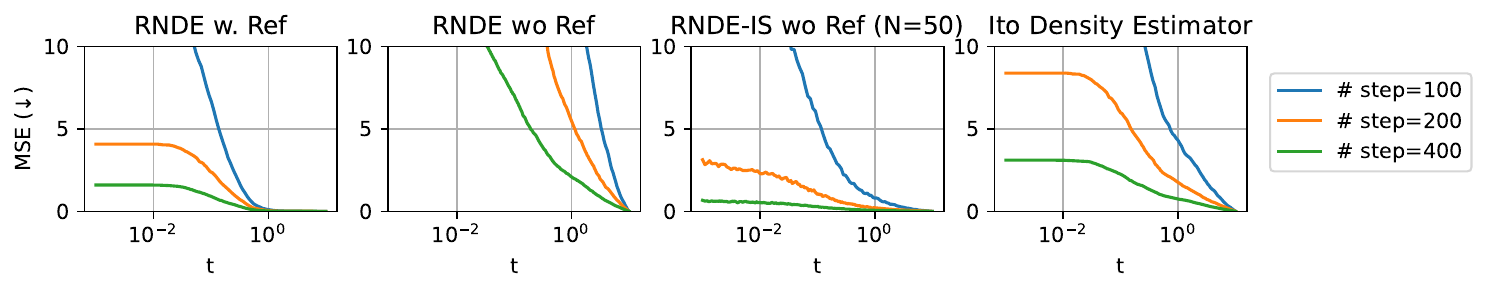}
    \vspace{-12pt}
    \caption{MSE of the log 
 diffusion density $\log p_t$, against diffusion time $t$.
 Different colour represents different number of discretisation steps.
 We compare 4 different approaches: RNDE with reference, RNDE without reference, RNDE-IS with 50 samples, and RNE Itô density estimator \citep{skreta2024superposition}.
    We use a VE process following \citet{karras2022elucidating} where $t\in [0, 10]$, $p_{t=10} \rightarrow N(0, {10}^2 I)$ and  $p_0 = p_\text{data}$.
    Hence, the error increases when $t$ gets closer to $0$.\vspace{-5pt}
    }
    \label{fig:RNDE_exp}
\end{figure}
\vspace{-5pt}
To assess the effectiveness of RN Density Estimator (RNDE) proposed in \Cref{subsec:rnde,sec:is}, we choose a 10-D Mixture-of-Gaussian target with 40 modes, which was initially used by \citet{midgley2022flow} to evaluate the performance of Boltzmann generators.
Since we can access the analytical marginal density at any diffusion time step $t$, it is ideally suited for comparing different density estimation methods.
In \Cref{fig:RNDE_exp}, we compare four estimators: RNDE with reference, RNDE without reference, importance-sampling-based RNDE, and It\^o density estimator \citep{skreta2024superposition}.
We use the variance-exploding (VE) diffusion with the exact score function, and we follow the discretisation schedule of \citet{karras2022elucidating}. 
For the reference process, we adopt the same VE process starting from a standard Gaussian.
 The vanilla RNDE underperforms the Itô estimator; however, incorporating the reference process leads to substantially better density estimates.
Incorporating IS further improves performance, albeit at the expense of increased computational cost.
A more detailed analysis of importance sampling is provided in the next section.

\subsection{Analysis on RNDE-IS}\label{app:rnde-is-ref}
\begin{figure}[H]
    \centering
    \includegraphics[width=\linewidth]{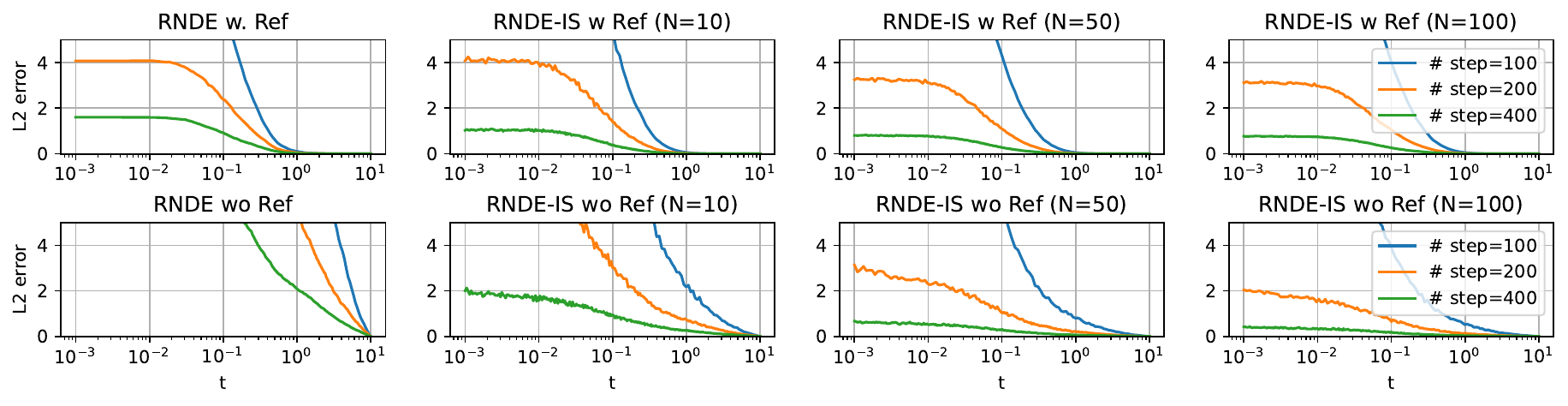}
    \caption{MSE of density estimation by RNDE and RNDE-IS across varying sample sizes, shown with and without a reference process.
    Different colour represents different number of discretisation steps.}
    \label{fig:is_ref}
\end{figure}
In this section, we provide a more comprehensive analysis of the performance of RNDE-IS and the influence of the reference process.
In \Cref{fig:is_ref}, we show the MSE of density estimation by RNDE and RNDE-IS across varying sample sizes, both with and without a reference process.
As we can see, 
\begin{itemize}[leftmargin=*]
    \item when sample size is small (or without IS), using reference can significantly boost the performance;
    \item when the sample size is large enough, the reference will negatively influence the performance.
\end{itemize}
This behaviour is as expected:
when the sample size is small, as we motivated in \Cref{sec:ref}, the reference is used to address the instability of RNE.
This instability is eliminated when the sample size is large enough.
On the other hand, this reference will bring in its own discretisation error.
This error is negligible compared to its benefits when the sample size is small, while becoming significant when the sample size is large enough.
However, we stress that in the application of RNDE and RNC, we usually rely on estimation using just one sample, and hence reference is always favourable there.

\subsection{Inference-time Annealing: more analysis}\label{app:anneal_analysis}
\begin{figure}[H]
    \centering
    \begin{subfigure}{0.24\textwidth}
       \includegraphics[height=70pt]{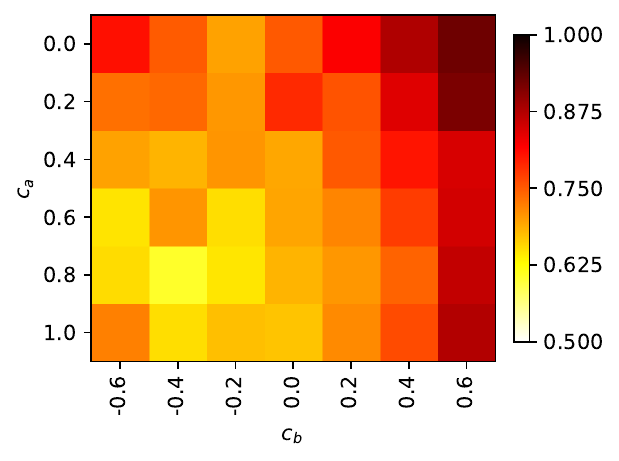}
    \caption{Energy TVD}
    \end{subfigure}
    \begin{subfigure}{0.24\textwidth}
       \includegraphics[height=70pt]{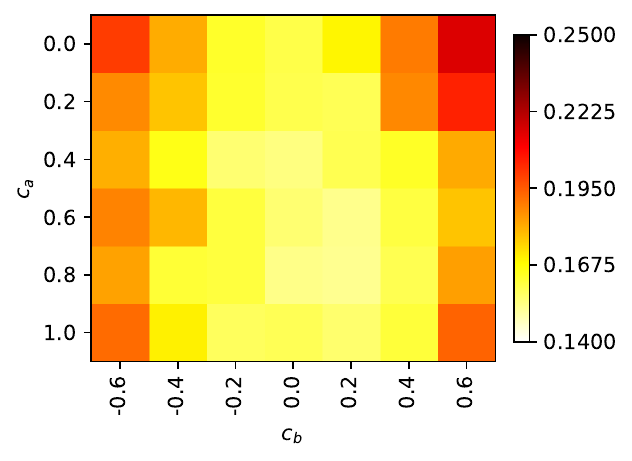}
    \caption{Distance TVD}
    \end{subfigure}
    \begin{subfigure}{0.24\textwidth}
       \includegraphics[height=70pt]{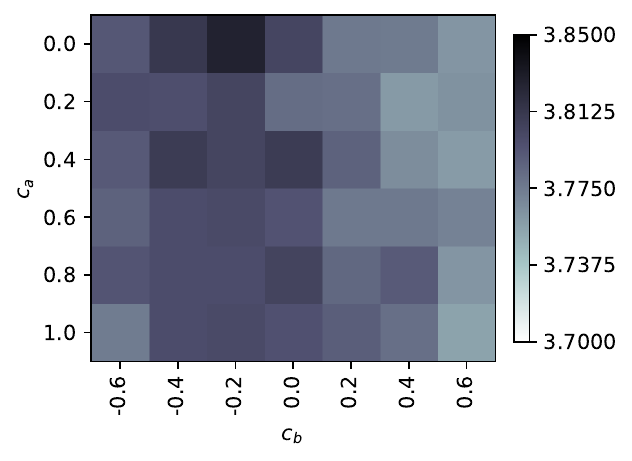}
    \caption{$W_2$}
    \end{subfigure}
    \begin{subfigure}{0.24\textwidth}
       \includegraphics[height=70pt]{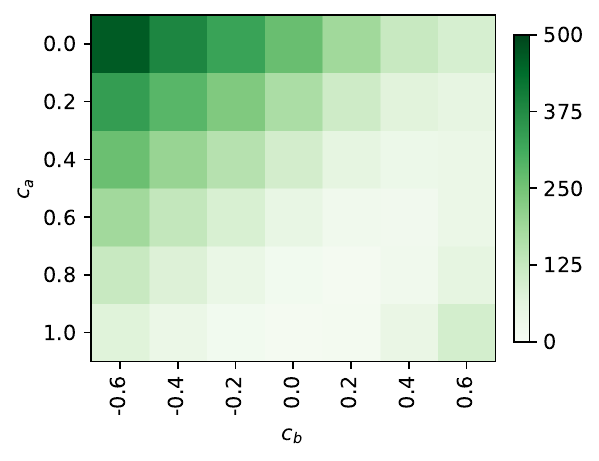}
    \caption{Variance of weights}
    \end{subfigure}
   \caption{Inference-time annealing for LJ-13.}
    \label{fig:lj13}  
\end{figure}
\begin{figure}[H]
    \centering
    \begin{subfigure}{0.48\textwidth}
    \includegraphics[width=\linewidth]{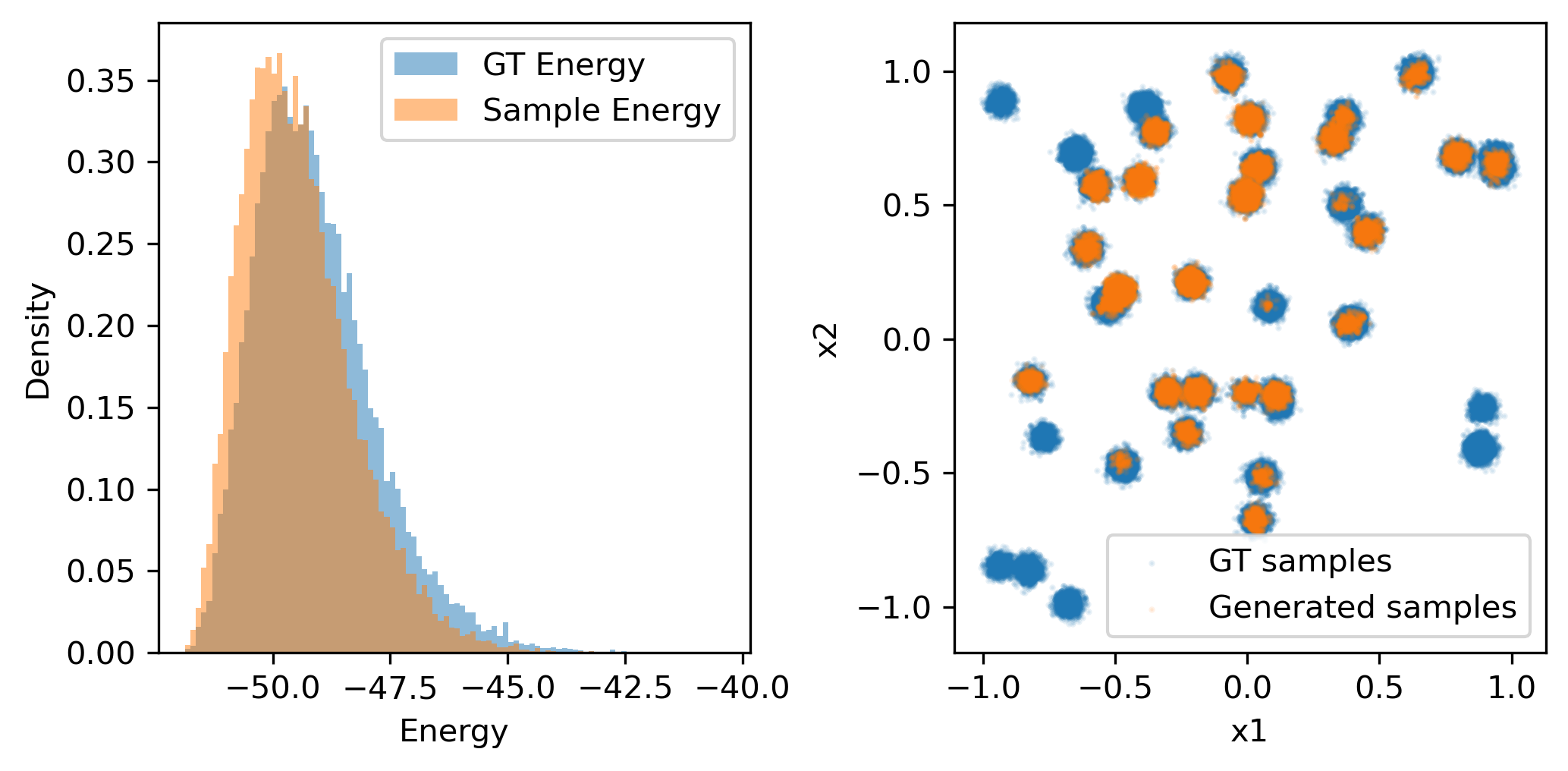}
    \caption{Energy and samples for $c_a=1.0, c_b=-0.6$.}
    \end{subfigure}
 \begin{subfigure}{0.48\textwidth}
    \includegraphics[width=\linewidth]{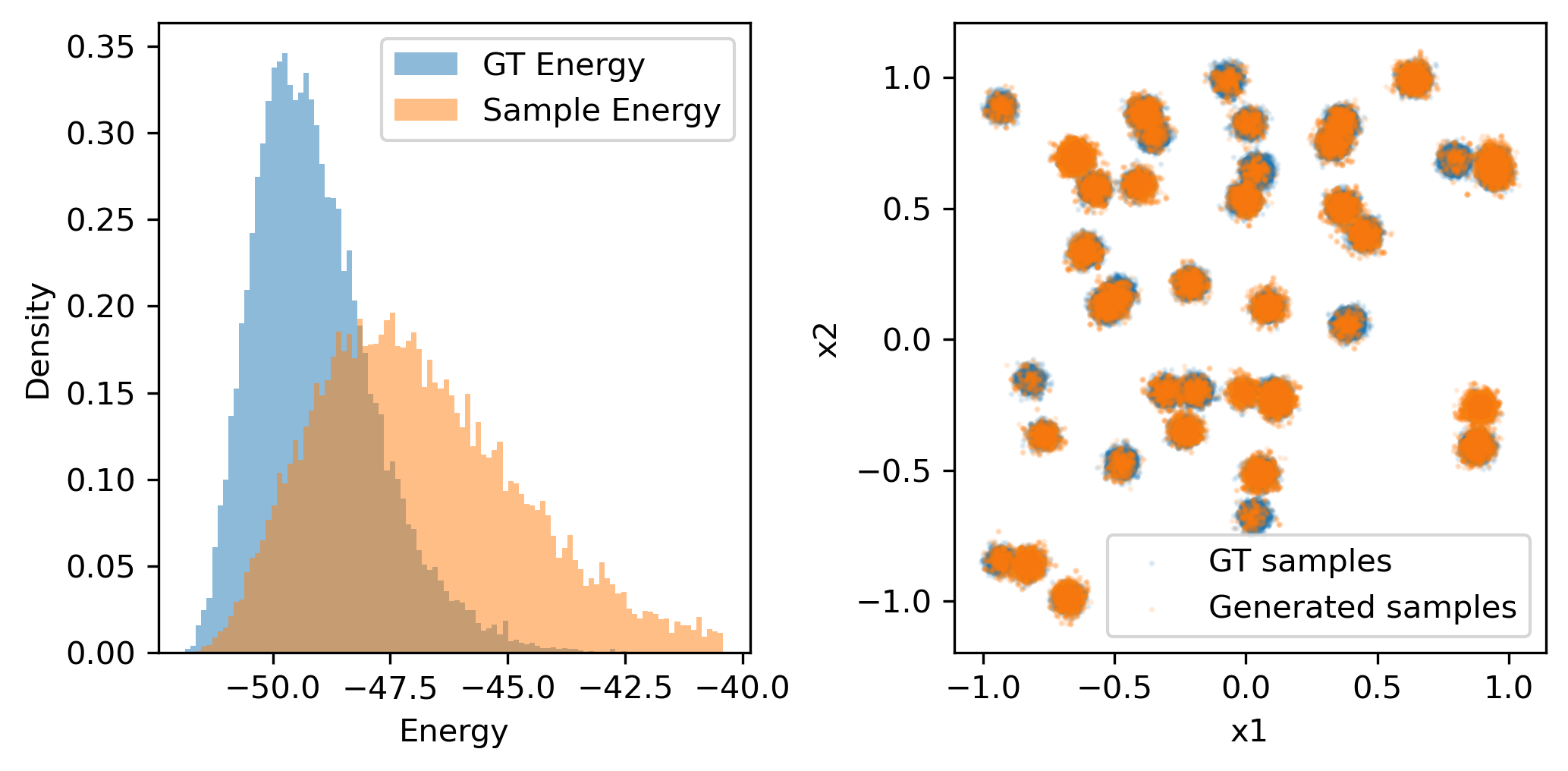}
    \caption{Energy and samples for $c_a=0.8, c_b=0.2$.}
    \end{subfigure}
    \caption{Visualisation of inference-time annealing on 10D Mixture of Gaussian target.}\label{fig:gmm_anneal}
\end{figure}
In this section, we provide a more comprehensive analysis of RNC with different $c_a$ and $c_b$.
\par
We first consider the Lennard-Jones (LJ) system with 13 particles.
We first train a diffusion model for temperature $T_\text{high}=2.0$ and anneal it down to $T_{\text{low}}=1.0$.
Similar to ALDP, we sample using a batch size of 500, and repeat this 50 times to collect all data.
We chose this system as it only has one peaked mode, showing different properties compared to the ALDP in the main text.
\par
In \Cref{fig:lj13}, we show the TVD between the energy histogram, the interatomic distance histogram, the $W_2$ distance and the variance of accumulated weights.
As we can see, the variance of accumulated weights features the same pattern as ALDP in \Cref{fig:combined}, showing that $c_a+c_b$ within $1 \pm 0.2$ typically achieves a better effective sample size (ESS) compared to other choices.
However, energy TVD exhibits the opposite trend: it actually improves in regions where the weight variance is large.
This can be explained by the property of the LJ-potential.
As the distribution has a very peaked mode, we generally do not need high diversity in the samples---even though ESS is so low that all 500 draws in a batch collapse to the same particle, the SMC weights will still single out the sample that best aligns with the mode.
By contrast, when ESS is high, the algorithm may occasionally select a suboptimal configuration, potentially due to the intrinsic SMC bias associated with a finite sample size, and can be amplified by discretisation error and model imperfections.
\par
This can also be observed in a toy Gaussian Mixture target.
Specifically, we consider anneal a 10D GMM target from $T_\text{high}=1$ to $T_\text{low}=1/3$. 
We run RNC with the analytical score from the GMM target, using a batch size of 500, and collect 100 batches in total.
In \Cref{fig:gmm_anneal}, we visualise two settings (a) $c_a=1.0, c_b=-0.6$, resulting in a low ESS; and (b) $c_a=0.8, c_b=0.2$, resulting in a higher ESS.
 In case (a), it exhibits significant mode collapse, with the samples being closer to the centre of each mode.
 However, the energy histogram shows a good alignment with the ground truth energy.
 By contrast, in case (b) the samples almost cover all the modes, but a few particles diffuse slightly, and hence the energy histogram deviates more from the ground truth.
 \par
 \textbf{In summary}, SMC intrinsically struggles with sample diversity. 
 In terms of RNC, different choices of $c_a, c_b$ yield different ESS values.
 This ESS (weight's variance) pattern is highly consistent across different targets.
In general, a larger ESS is preferable to ensure greater diversity. However, for certain targets, a lower ESS can also be advantageous. 
Although there is no universally optimal choice, we argue that if preserving the entire distribution is the goal, one should aim for the highest possible ESS; conversely, if the objective is merely to select the single best sample, it may be beneficial to pick $c_a, c_b$ that leads to a slightly smaller ESS.

\subsection{Empirical Analysis on Imperfect Score and Discretisation Error}

\rebuttal{In
\Cref{prop:non_asy_discrete_score}, we discussed the theoretical guarantee of the SMC weight calculated by RNE when the discretisation error and score network error are present.
In this section, we evaluate the influence of these two errors empirically, taking inference-time annealing on ALDP, for example.
To analyse the influence of discretisation error, we choose to run RNC with 20, 100, 200, 400 discretisation steps in the diffusion generation process; to analyse the influence of imperfect score, we choose to early stop the diffusion network training stage after different numbers of iterations.
For comparison, we report the heuristic choice by simply annealing the score without SMC.
As we can see from \Cref{fig:discrete_error,fig:score_error}, the performance of RNC increases w.r.t. the number of steps and the number of training iterations, as expected from \Cref{prop:non_asy_discrete_score}.
Also, while RNC performs worse when using a small number of steps and when the score has a larger error, it still presents empirical performance gain compared to the heuristic choice.
}
\begin{figure}[H]
    \begin{minipage}{0.49\textwidth}
        \includegraphics[width=\linewidth]{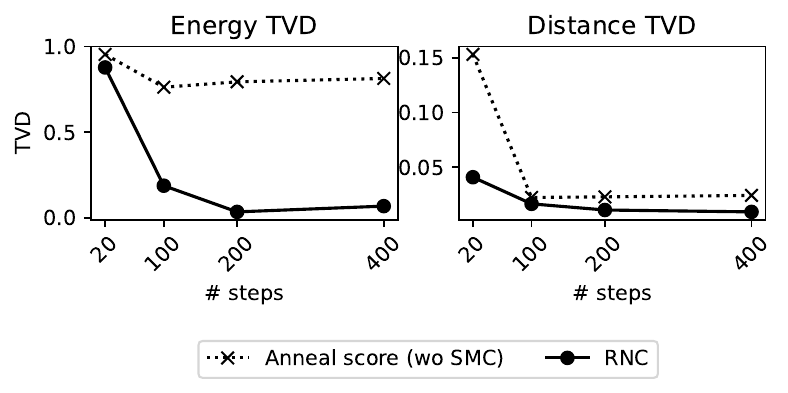}
        \caption{\rebuttal{Influence of discretisation error.}}\label{fig:discrete_error}
    \end{minipage}
    \begin{minipage}{0.49\textwidth}
        \includegraphics[width=\linewidth]{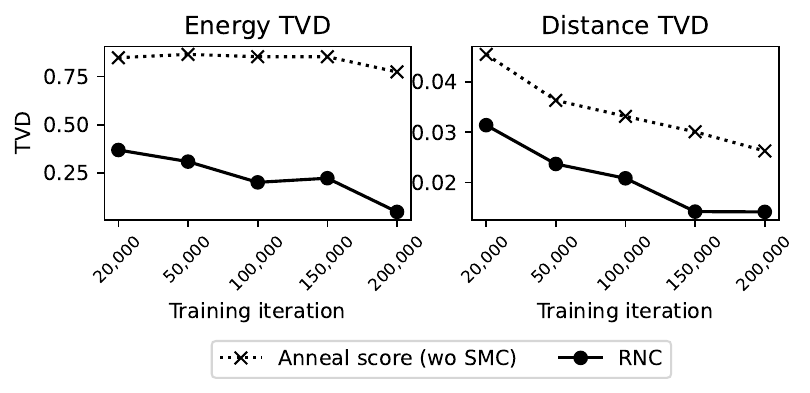}
         \caption{\rebuttal{Influence of score error.}}\label{fig:score_error}
    \end{minipage}
\end{figure}

\subsection{Ablation on Reference Process for Better Denoising Kernels}

\rebuttal{
One line of work \citep{bao2022analytic,ou2024improving} aims to estimate the variance of the denoising kernel, resulting in a denoising kernel that is more accurate than the one obtained from simple EM discretisation. A natural question is whether our proposed reference process still provides gains when used together with such improved denoising kernels. In this section, we investigate this empirically. Specifically, we evaluate the performance of RNE for density estimation (RNDE) on a 10D GMM with 40 modes, for which the marginal density is available in closed form, allowing us to directly assess the accuracy of our RNE estimator.
}

\rebuttal{
We visualise the results in \Cref{eq:ABLATIONKERNELS}.
In \Cref{eq:ABLATIONKERNELS}(a), we show the results obtained using EM discretisation, while in \Cref{eq:ABLATIONKERNELS}(b), we estimate the variance $\Var[x_{t_{n-1}} \mid x_{t_n}]$ following \citet[][Theorem~1]{ou2024improving}.
We can see that using the estimated variance substantially improves the accuracy of the RNE estimator.
Furthermore, even in this setting, incorporating our reference process still provides an additional boost, further reducing the error and yielding highly accurate estimates.
}
\begin{figure}[H]
    \begin{subfigure}{0.49\linewidth}
        \includegraphics[width=\linewidth]{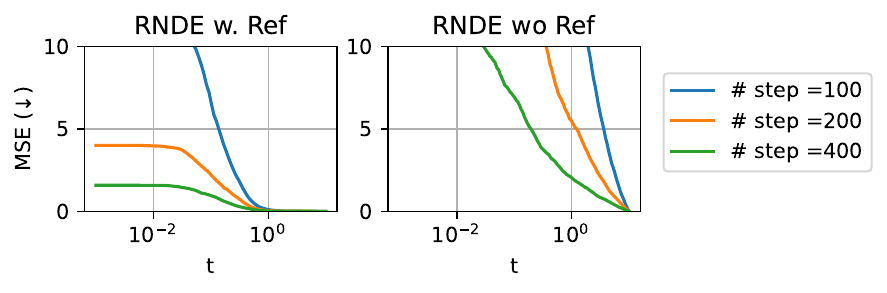}\caption{\rebuttal{EM discretisation.}}
    \end{subfigure}
     \begin{subfigure}{0.49\linewidth}
        \includegraphics[width=\linewidth]{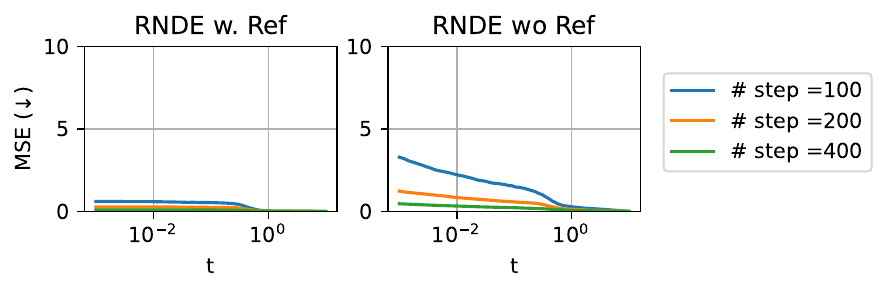}
        \caption{\rebuttal{With estimated variance.}}
    \end{subfigure}
    \caption{\rebuttal{MSE of the RNE-estimated log 
 diffusion density $\log p_t$, against diffusion time $t$. }}\label{eq:ABLATIONKERNELS}
\end{figure}

\section{Discretisation, Stability and Convergence Guarantees}
In this section, we provide additional details and discussion on the discretisation error and convergence guarantees for RNE with reference.

\subsection{Intuition for Instability without Reference}\label{sec:intuition_ref}

To understand the instability without reference process, let's consider the following example: given a diffusion model with a forward VE-SDE $\dd X_t = \epsilon_t\fwd{\dd W_t}$, and a backward score SDE $\dd X_t = -\epsilon_t^2 s_t(X_t) \dd t+ \epsilon_t \bwd{\dd W_t}$, the contribution to $R$ from the final denoising step (from $\Delta t$ to 0) is 
\begin{align}
\frac{\mathcal{N}(X_0| X_{\Delta t} + \epsilon_{\Delta t}^2 s_{\Delta t}(X_1) \Delta  t, \epsilon_{\Delta t}^2 \Delta t)} { \mathcal{N}(X_{\Delta t}| X_0, \epsilon_0 \Delta t)}\! \approx \exp\left(\xi^2 \left(\frac{\epsilon_{\Delta t}^2}{2 \epsilon_{0}^2}-\frac{1}{2 }  \right)\right) ,\;\; \xi \sim \gN(0,I).\label{eq:instability}
\end{align}
If $\epsilon_t$ decreases quickly as with many noise schedulers \citep{nichol2021improved,karras2022elucidating} and if $\epsilon_0$ is small, the term ${\epsilon_{\Delta t}^2}/{ \epsilon_{0}^2}$  becomes large and unstable.
At a high level, this issue arises because, at each discretisation step, the variances of the forward and backward kernels are misaligned.
In fact, this misalignment also introduces accumulated error, as discussed in \Cref{app:todo_discretisations}.

\subsection{Continuous formulation vs Discrete Gaussian kernels for $R$}\label{app:todo_discretisations}
In the main text, we introduced RNE in the form of a limiting ratio between sequences of Gaussian kernels, as described in \Cref{eq:R_definition}.
Another equivalent formulation, as we discussed in \Cref{eq:continuous_rne}, directly expresses RNE in continuous time in terms of stochastic integrals.

In practice, for a finite number of steps $N$, these two formulations have very different behaviours. The Gaussian kernel formulation (without reference) typically suffers from higher accumulated error when the diffusion coefficient $\epsilon_t$ is not constant; while applying Euler-Maruyama to \Cref{eq:continuous_rne} will not have this issue.
Specifically, we have the following conclusion:
\begin{mdframed}[style=highlightedBox]
    Denote the result obtained by applying Euler-Maruyama to \Cref{eq:continuous_rne} with $N$ steps as $R_N$, and the result obtained by Gaussian kernel formulation (without reference) as $G_N$, then
    \begin{align}
         \Delta_N = \log R_N - \log G_N \approx  \sum_n d\frac{ \epsilon_{t_{n}}^2 - \epsilon_{t_{n+1}}^2}{ 2\epsilon_{t_{n}}^2}   -  d\log \frac{\epsilon_{{1}}}{\epsilon_{0}}  ,
    \end{align}
    where $d$ is the dimensionality.
\end{mdframed}

\begin{figure}[H]
    \centering
    \includegraphics[width=0.9\linewidth]{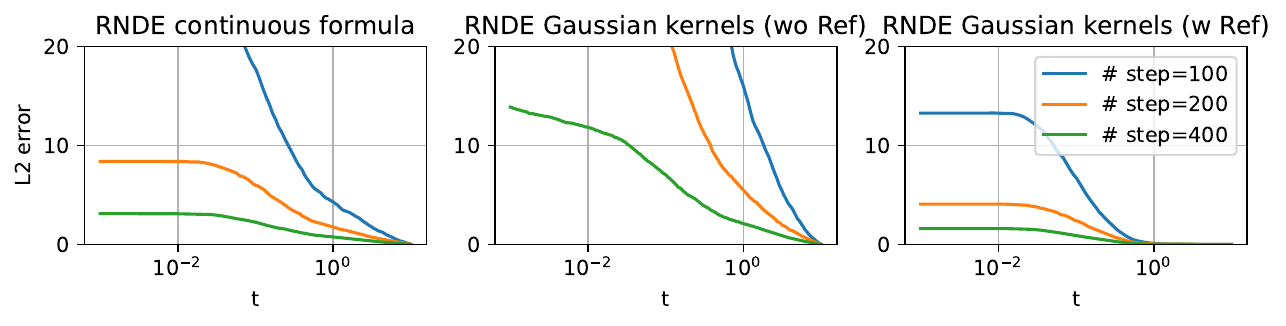}
    \caption{RN Density estimator calculated by three different ways:
    Left:
    Euler-Maruyama to continuous path integral (i.e., 
 \Cref{eq:continuous_rne});
 Middle: Gaussian kernels (i.e., \Cref{eq:R_definition}); 
 Right: Gaussian kernels with reference (i.e., \Cref{eq:R_with_ref}).
  Different colour represents different number of discretisation steps.}
    \label{fig:path_int}
\end{figure}
We empirically verify this in \Cref{fig:path_int}.
As we can see, directly using Gaussian kernels will result in a significant error, echoing our earlier discussion.
However, fortunately, by adding the reference process as described in \Cref{sec:ref}, we successfully address this issue and achieve the best performance out of the different estimators and discretisations we consider. 
\par

In what follows, we will analyse and discuss this error in more detail:
\par
We first consider \Cref{eq:continuous_rne}:
\begin{align}
  \log R^\nu_\mu(Y_{[\tau, \tau']}) = 
\int_{\tau}^{\tau'}  \frac{1}{\epsilon_t^2} \nu_t \cdot \bwd{\mathrm{d}Y_t }  -  \int_{\tau}^{\tau'}  \frac{1}{\epsilon_t^{2}}\mu_t \cdot \fwd{\mathrm{d}Y_t } + \frac{1}{2} \int_{\tau}^{\tau'} \frac{1}{\epsilon_t^2}(||\mu_t ||^2 -\!\! ||\nu_t||^2)\mathrm{d}t,
\end{align}
For simplicity, we consider discretising $R$ using an equidistant step size ($\Delta t$ is constant).
Discretise using Euler-Maruyama, and denote its discrete version as $R_N$, we obtain
\begin{align}
\log R_N =\sum_n&  \frac{1}{\epsilon_{t_{n+1}}^2} \nu_{t_{n+1}} \cdot (Y_{{t_{n+1}}} - Y_{t_n})  -  \sum_n  \frac{1}{\epsilon_{t_n}^{2}}\mu_{t_n} \cdot (Y_{{t_{n+1}}} - Y_{t_n}) \nonumber  \\&- \underbrace{\sum_n \frac{1}{2\epsilon_{t_{n+1}}^2}||\nu_{t_{n+1}}||^2 \Delta t_n}_{(1)}  + \sum_n \frac{1}{2\epsilon_{t_n}^2}||\mu_{t_{n}}||^2 \Delta t_n  ,\label{eq:direct_discretisation}
\end{align}
Note that since $(1)$ is a standard Riemann integral without a stochastic integrator, we can discretise it by evaluating the integrand anywhere in the interval i.e.  
$\sum_n \frac{1}{2\epsilon_{\tau^*_n}^2}||\nu_{\tau^*_n}||^2 \Delta t_n, \, \forall \tau^*_n \in [t_n, t_{n+1}]$.
This can be done since the upper and lower Darboux integrals are equal to each other.
Following \citet{vargas2023transport} we choose $\sum_n \frac{1}{2\epsilon_{t_{n+1}}^2}||\nu_{t_{n+1}}||^2 \Delta t_n$ to be more consistent with how we discretised the backwards integral and to better align with the Gaussian density ratio below.
\par

Now, let's consider the discrete Gaussian estimator of the RND (which we denote by $G_N)$: 
\begin{align}
\log G_N = \log \frac{\prod_{n}\mathcal{N}\left(Y_{t_{n+1}}; Y_{t_n} + \mu_{t_n}( Y_{t_n}) \Delta t  ,\; \epsilon_{t_n}^2 \Delta t I\right)}{\prod_{n}\mathcal{N}\left(Y_{t_{n}}; Y_{t_{n+1}}- {\nu_{t_{n+1}}( Y_{t_{n+1}})} \Delta t  ,\; \epsilon_{t_{n+1}}^2 \Delta t I\right)} ,
\end{align}
expanding (where $d$ is the dimensionality):
\begin{align}
   -\sum_n\frac{ || Y_{t_{n+1}}- Y_{t_n} - \mu_{t_k}( Y_{t_n}) \Delta t  ||^2}{2 \Delta t \epsilon_{t_n}^2} + \frac{ || Y_{t_{n+1}}- Y_{t_n} + {\nu_{t_{n+1}}( Y_{t_{n+1}})} \Delta t ||^2}{2  \Delta t\epsilon_{t}^2} + d \log \frac{\epsilon_{t_{n+1}}}{ \epsilon_{t_k}},
\end{align}
expanding again:
\begin{align}
   &\sum_n- \frac{ || Y_{t_{n+1}}- Y_{t_n}    ||^2}{2  \Delta t \epsilon_{t_n}}    + \frac{( Y_{t_{n+1}}- Y_{t_n}) \cdot  \mu_{t_n}  }{  \epsilon_{t_n}^2} - \frac{ || \mu_{t_n}   ||^2  \Delta t }{ 2 \epsilon_{t_n}^2 }  \\
   &+ \sum_n\frac{ || Y_{t_{n+1}}- Y_{t_n}    ||^2}{2  \Delta t \epsilon_{t_{n+1}}^2} -  \frac{( Y_{t_{n+1}}- Y_{t_n}) \cdot  \nu_{t_{n+1}}  }{  \epsilon_{t_{n+1}}^2} + \frac{ || \nu_{t_{n+1}}   ||^2  \Delta t }{ 2 \epsilon_{t_{n+1}}^2 } +d \log \frac{\epsilon_{t_{n+1}}}{\epsilon_{t_n}},
\end{align}
we can then see that:
\begin{align}
  \Delta_N = \log R_N-  \log  G_N = \underbrace{\sum_n \frac{ || Y_{t_{n+1}}- Y_{t_n}    ||^2}{2 \Delta t \epsilon_{t_{n+1}}^2} }_{(a)} - \underbrace{\frac{ || Y_{t_{n+1}}- Y_{t_n}    ||^2}{2  \Delta t \epsilon_{t_n}}}_{(b)} -  d\log \frac{\epsilon_{t_{n+1}}}{\epsilon_{t_n}},
\end{align}

From the convergence rates of the total variation and the Euler-Maruyama discretisation, we know that  (up to an error in $\sqrt{\Delta t}$),
\begin{align}
    ( Y_{t_{n+1},i}- Y_{t_n,i})^2  \approx \epsilon_{t_{n+1}}^2\Delta t,
\end{align}
where $i$ is the i-th dimension index, and the approximation can be  understood as a.s. or in $L_2$ and can be formalised as an upper bound, but for ease of presentation, we chose $\approx$.
Substituting this back into $(a)$, we see: 
\begin{align}
 \sum_n \frac{ || Y_{t_{n+1}}- Y_{t_n}    ||^2}{2 \Delta t \epsilon_{t_{n+1}}^2}  \approx   \sum_k \frac{ d \epsilon_{t_{n+1}}^2\Delta t}{2 \Delta t \epsilon_{t_{n+1}}^2} = \frac{d N}{2},
\end{align}
and for $(b)$, we have
\begin{align}
 \sum_n \frac{ || Y_{t_{n+1}}- Y_{t_n}    ||^2}{2 \Delta t \epsilon_{t_{n}}^2}  \approx  \frac{d}{2} \sum_n \frac{ \epsilon_{t_{n+1}}^2}{ \epsilon_{t_{n}}^2} ,
\end{align}
combining the two
\begin{align}
  \Delta_N \approx  \sum_n d\frac{ \epsilon_{t_{n}}^2 - \epsilon_{t_{n+1}}^2}{ 2\epsilon_{t_{n}}^2}  -  d\log \frac{\epsilon_{{1}}}{\epsilon_{0}}   \label{eq:errororder},
\end{align}
Indicating that the deviation between \Cref{eq:continuous_rne} and \Cref{eq:R_definition} can be large in practice when $N$ is not large.

In the limit \citep[Lemma B.7]{berner2025discrete} , we do note that this vanishes as 
\begin{align}
  \Delta_N \approx - \frac{d}{2} \int_0^1 \epsilon_t^{-2} \dd \epsilon_t^2   -  d\log \frac{\epsilon_{{1}}}{\epsilon_{0}}    =d(\ln \epsilon_1 - \ln \epsilon_0)   -  d\log \frac{\epsilon_{{1}}}{\epsilon_{0}} =0   ,
\end{align}
\par
To have more intuition on the magnitude of this error, we consider a VE-SDE with $\epsilon_t = a^{t}$ where $a^2=\text{var}_{\text{max}} / \text{var}_{\text{min}}$ into \cref{eq:errororder}.
\begin{align}
      \sum_n d\frac{ \epsilon_{t_{n}}^2 - \epsilon_{t_{n+1}}^2}{ 2\epsilon_{t_{n}}^2}  =   \sum_n d\frac{ a^{2t_n} - a^{2(t_n + \Delta t) }}{ 2a^{2t_n}}
      =\frac{d}{2} N - \frac{d N}{2}\left(\frac{\text{var}_{\text{max}}}{\text{var}_{\text{min}}}\right)^{1/N}  ,
\end{align}
and thus 
\begin{align}
    \Delta_N \approx \frac{d}{2} N - \frac{d N}{2}\left(\frac{\text{var}_{\text{max}}}{\text{var}_{\text{min}}}\right)^{1/N}  + \frac{d}{2} \log \frac{\text{var}_{\text{max}}}{\text{var}_{\text{min}}},
\end{align}
If we choose $\text{var}_{\text{max}}=10^2, \text{var}_{\text{min}}=0.001^2$ and $N=100$, then $|\Delta_N| \approx 0.87d$ leading to large deviations from $\log R_N$ when in high dimensions.

\subsection{RNE with Reference - Convergence Rate}\label{app:non-asy_res}

\subsubsection{Convergence Rate for RNE}
We now show our reference-based RNE's convergence rate. Furthermore the analytic reference based estimator we use allows us proof such a rate without  needing to bound the error of estimating forward integrals with backwards samples or vice versa, as the reference cancels the mismatch in integral directions. 

Whilst a proof for the non-asymptotic guarantees of the continuous-time RNE estimator without reference should be possible, it will require a lot more technical effort, which we will leave for future discussion.
\par
Let's consider the case where we use the diffusion model defined in \Cref{eq:dm_fwd,eq:dm_bwd}, where  $\mu_t = f_t$ is a linear function, and $\nu_t = \mu_t - \sigma_t^2 \nabla \log p_t$ is the backward drift.
We now choose a reference process whose forward drift $\phi_t=\mu_t$, and the initial marginal to be Gaussian.
In this case, we can analytically solve the marginal $\pi_t$ at any time step $t$ and hence can access the analytical time-reversal of the reference process.
Let's denote the drift of this time-reversal as $\psi_t = \mu_t - \sigma_t^2 \nabla \log \pi_t$.

\begin{proposition}
Let us consider the case where $\mu_t$ corresponds to a linear forward drift, as is the case in diffusion models and half-sided interpolants. 
Then we chose an analytic reference by setting its drift as $\mu_t$ and $\pi_1$ to be Gaussian, and denote its time-reversal drift as $\psi_t$.
Assuming $Y$ has bounded $L^p$ moments, and calculating  $R$ with reference, it follows that:
\begin{align}
     ||  \log R^\nu_\mu(Y_{[\tau, \tau']}) - \log R^N(\hat{Y}_{[\tau, \tau']})  ||_{L^2} \leq \mathcal{O}(\sqrt{\Delta t})
\end{align}
Where $|| .||_{L^2}$ denotes the $L^2$ norm, $\hat{Y}$ is the Euler-Maruyama discretisation of $Y$, and $\log R^N(\hat{Y}_{[\tau, \tau']})$ is our discretised RNE estimator
\begin{align}
 R^N(\hat{Y}_{[\tau, \tau']}) = \frac{ \pi_{\tau} (\hat{Y}_{\tau})}{ \pi_{\tau'}(\hat{Y}_{\tau'})}  \frac{  \prod_{n=1}^{N-1} p^\nu_{n|n+1} (\hat{Y}_{t_n}|\hat{Y}_{t_{n+1}})} {   \prod_{n=1}^{N-1} p^\psi_{n|n+1}(\hat{Y}_{t_n}|\hat{Y}_{t_{n+1}})} \frac{   \prod_{n=1}^{N-1} p^\phi_{n+1|n}(\hat{Y}_{t_{n+1}}|\hat{Y}_{t_n}) } {   \prod_{n=1}^{N-1} p^\mu_{n+1|n}(\hat{Y}_{t_{n+1}}|\hat{Y}_{t_n}) } .
\end{align}
\end{proposition}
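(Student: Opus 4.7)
The strategy is to exploit the choice $\phi_t = \mu_t$ to eliminate the forward discretization contribution entirely, reducing the problem to bounding (a) a marginal error on analytic Gaussian densities and (b) a single backward Girsanov-type discretization error. First I would expand the continuous estimator using the decomposition in \Cref{eq:R_with_ref}. Since $\phi_t = \mu_t$, the path measures $\fwd{\mathbb{\Pi}}^\phi$ and $\fwd{\mathbb{P}}^\mu$ differ only in their initial distributions, giving $d\fwd{\mathbb{\Pi}}^\phi/d\fwd{\mathbb{P}}^\mu = \pi_\tau(Y_\tau)/p_\tau(Y_\tau)$. Similarly, $d\bwd{\mathbb{P}}^\nu/d\bwd{\mathbb{\Pi}}^\psi = [p_{\tau'}(Y_{\tau'})/\pi_{\tau'}(Y_{\tau'})]\,e^{G(Y)}$, where $G$ is a backward Girsanov exponent driven by $\nu_t - \psi_t$ and both processes share diffusion $\sigma_t$. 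Combined with the prefactor $p_\tau/p_{\tau'}$ in $R^\nu_\mu$, the intractable $p$-marginals cancel to give
\begin{equation*}
\log R^\nu_\mu(Y_{[\tau,\tau']}) = \log \frac{\pi_\tau(Y_\tau)}{\pi_{\tau'}(Y_{\tau'})} + G(Y), \qquad G(Y) = \int_\tau^{\tau'} \tfrac{1}{\sigma_t^2}(\nu_t - \psi_t) \cdot \bwd{dY_t} + \tfrac{1}{2}\int_\tau^{\tau'} \tfrac{\|\psi_t\|^2 - \|\nu_t\|^2}{\sigma_t^2}\,dt.
\end{equation*}
A direct Gaussian-ratio expansion (the same algebra as in \Cref{app:todo_discretisations}) shows that $\log R^N(\hat Y_{[\tau,\tau']})$ has the same form with $Y$ replaced by $\hat Y$ and $G$ replaced by its natural Riemann--It\^o sum $G^N(\hat Y)$; the forward Gaussian ratio $\prod p^\phi_{n+1|n}/p^\mu_{n+1|n}$ is identically $1$ because $\phi=\mu$ yields identical kernels.

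Next I would split the total error as $\|\log R^\nu_\mu(Y) - \log R^N(\hat Y)\|_{L^2} \leq (a) + (b)$, where $(a) = \|\log(\pi_\tau(Y_\tau)/\pi_{\tau'}(Y_{\tau'})) - \log(\pi_\tau(\hat Y_\tau)/\pi_{\tau'}(\hat Y_{\tau'}))\|_{L^2}$ and $(b) = \|G(Y) - G^N(\hat Y)\|_{L^2}$. Since $\pi_t$ is Gaussian, $\log \pi_t$ is a smooth quadratic with affine gradient, so by a one-step Taylor bound, the bounded-moment assumption on $Y$, and strong convergence of Euler--Maruyama $\|Y_t - \hat Y_t\|_{L^2} \leq C\sqrt{\Delta t}$, term (a) is $\mathcal{O}(\sqrt{\Delta t})$. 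Term (b) I would further split into $\|G(Y) - G(\hat Y)\|_{L^2}$ and $\|G(\hat Y) - G^N(\hat Y)\|_{L^2}$: the former is bounded via the Burkholder--Davis--Gundy inequality plus Lipschitz regularity of $(\nu_t-\psi_t)/\sigma_t^2$, while the latter is the classical $L^2$ error for Euler--Maruyama applied to a stochastic integral with a Lipschitz integrand (e.g.\ Kloeden--Platen), each again of order $\sqrt{\Delta t}$. Summing gives the claimed rate.

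The main obstacle will be the regularity assumptions that are not made explicit in the proposition. Only bounded $L^p$ moments of $Y$ are assumed; strong convergence of $\hat Y$ and the BDG-based control of $G(Y) - G(\hat Y)$ both require Lipschitz and linear-growth conditions on the drift of $Y$ and on $\nu_t,\psi_t$, which must be read in as standard regularity. A subtler technical point is the backward It\^o convention: the kernel $p^\nu_{n|n+1}$ evaluates $\nu$ at $t_{n+1}$ and uses the increment $\hat Y_{t_{n+1}} - \hat Y_{t_n}$, and one must verify that this end-point sampling is exactly what yields the backward It\^o integral $\int (\nu-\psi)/\sigma^2 \cdot \bwd{dY}$ to the correct $\sqrt{\Delta t}$ order, rather than a Stratonovich-like object (this is where the mismatched variance phenomenon of \Cref{eq:instability} would re-enter without the reference). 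The analytic reference is what makes the argument clean: because $\phi = \mu$ one never has to compare a stochastic integral against a path generated by a different forward drift, and the variance-misalignment term $d\log(\epsilon_{t_{n+1}}/\epsilon_{t_n})$ identified in \Cref{app:todo_discretisations} is eliminated by design.
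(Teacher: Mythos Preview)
Your proposal is correct and follows essentially the same route as the paper: exploit $\phi=\mu$ to cancel the forward kernels entirely, reduce both the continuous and discrete estimators to the form $\log(\pi_\tau/\pi_{\tau'})$ plus a backward Girsanov exponent in $\nu-\psi$, and bound the difference via EM strong convergence combined with It\^o isometry (the paper uses this rather than BDG) and the quadratic structure of $\log\pi_t$. The one technical device the paper makes explicit that your two-step split $\|G(Y)-G(\hat Y)\|+\|G(\hat Y)-G^N(\hat Y)\|$ needs implicitly is a piecewise-constant continuous-time embedding $\bar Y$ of $\hat Y$, which makes ``$G(\hat Y)$'' well-defined and in fact equal to $G^N(\hat Y)$, collapsing your second term; the paper then splits the remaining error into three pieces---the stochastic integral (It\^o isometry plus Lipschitz plus EM), the Riemann integral of $\|\nu\|^2-\|\psi\|^2$ (Cauchy--Schwarz plus $L^4$ EM convergence, which you bundle into $G$ without singling out), and the Gaussian marginals (log-concavity plus Cauchy--Schwarz).
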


\begin{proof}

In this setting, the reference has the same forward drift as our diffusion and hence will be cancelled, leaving only two backward processes: one for diffusion, one for the reference.
Therefore:
\begin{align}\label{eq:continuous_rne_ref}
  \log R^\nu_\mu(Y_{[\tau, \tau']}) &= \log \frac{\pi_\tau(Y_\tau)}{\pi_{\tau'}(Y_{\tau'})} +   
\int_{\tau}^{\tau'}  \frac{1}{\sigma_t^2} (\nu_t -\psi_t) \cdot \bwd{\mathrm{d}Y_t }  - \frac{1}{2} \int_{\tau}^{\tau'} \frac{1}{\sigma_t^2} (||\nu_t||^2-||\psi_t ||^2)\mathrm{d}t, \\
&= \log \frac{\pi_\tau(Y_\tau)}{\pi_{\tau'}(Y_{\tau'})}+   \frac{1}{2} \int_{\tau}^{\tau'} \frac{1}{\sigma_t^2} ||\nu_t-\psi_t||^2\mathrm{d}t+ 
\int_{\tau}^{\tau'}  \frac{1}{\sigma_t} (\nu_t - \psi_t) \cdot \bwd{\mathrm{d}W_t }  ,
\end{align}
First, let us consider the EM discretisation of $Y_t$,
\begin{align}
    \hat{Y}_{t_{n}} &=    \hat{Y}_{t_{n+1}} - \nu_{t_{n+1}}(    \hat{Y}_{t_{n+1}})\Delta t - \sigma_{t_{n+1}} (W_{t_{n+1}}- W_{t_n}),\\
    \hat{Y}_{t_{n}} &=    \hat{Y}_{t_{n+1}} - \int_{t_{k}}^{t_{k+1}}\nu_{t_{n+1}}(    \hat{Y}_{t_{n+1}})\dd t - \int_{t_{k}}^{t_{k+1}} \sigma_{t_{n+1}} \bwd{ \dd W}_{t} ,
\end{align}

Now we can embed this discretisation into continuous time $\bar{Y}_{t} = \sum_{n=1}^N \mathbb{1}_{s \in [t_{n}, t_{n+1}] } \hat{Y}_{t_n}$, which we can express in integral form as,
\begin{align}
    \bar{Y}_t =   \bar{Y}_T -\int_{\tau}^{\tau'} \bar{\nu}_s( \bar{Y}_s )\dd s -\int_{\tau}^{\tau'} \bar{\sigma}_s\bwd{\dd W_s } 
\end{align}
where $\bar{\nu}_s( \bar{Y}_s ) = \sum_{n=1}^N \mathbb{1}_{s \in [t_{n}, t_{n+1}] } \nu_{t_{n+1}}( \bar{Y}_{t_{n+1}} )$ and  $\bar{\sigma}_s = \sum_{n=1}^N \mathbb{1}_{s \in [t_{n}, t_{n+1}] } \sigma_{t_{n+1}}$.
We also define $\bar\psi$ in the same way, now by  \Cref{lem:rndeform} we have that 
\begin{align}
 \log R^N(\hat{Y}_{[\tau, \tau']}) &= \log \frac{\pi_{\tau}(\hat{Y}_{\tau})}{\pi_{\tau'}(\hat{Y}_{\tau'})}+ \sum_n  \frac{1}{\sigma_{t_{n+1}}}(\nu_{t_{n+1}}- {\psi}_{t_{n+1}})( \hat{Y}_s ) ({W}_{t_{n+1}}-{W}_{t_{n}}) \nonumber \\
 &-\frac{1}{2}\sum_n \frac{1}{\sigma_{t_{n+1}}^2} ||\nu_{t_n}-\psi_{t_n}||^2( \hat{Y}_{t_{n+1}} ) \Delta t, 
\end{align}
Then, via \Cref{rem:embed} we have
\begin{align}
    ||  \log R^\nu_\mu(Y_{[\tau, \tau']}) - \log R^N(\hat{Y}_{[\tau, \tau']})  ||^2_{L^2} = ||  \log R^\nu_\mu(Y_{[\tau, \tau']}) - \log R^N(\bar{Y}_{[\tau, \tau']})  ||^2_{L^2}  \nonumber
\end{align}
where via Jensens inequality (i.e. $||\frac{3a}{3} +\frac{3b}{3} + \frac{3c}{3} ||^2 \leq \frac{1}{3} (9||a||^2 + 9||b||^2 + 9||c||^2)$) we have that 
\begin{align}
 ||  \log R^\nu_\mu(Y_{[\tau, \tau']}) - \log R^N(\bar{Y}_{[\tau, \tau']})  ||^2_{L^2} \leq  3(A + B + C), 
\end{align}
where (for brevity in this step we have assumed a time homogenous $\sigma_s$ however its easy to see how this extends to the time dependent setting)
\begin{align}
    A &= \mathbb E \left[ \left|\left|\int_\tau^{\tau'} \frac{1}{\sigma_s}((\bar{\nu}_s-\bar{\psi}_s)( \bar{Y}_s ) - {(\nu_s-\psi_s)}( {Y}_s ) ) \bwd{\dd W}_s \right|\right|^2 \right] \\
    B &=  K_0 \mathbb E \left[\int_\tau^{\tau'}\frac{1}{\sigma_s^2}|| ( \bar{\nu}_s-\bar{\psi}_s)^2( \bar{Y}_s ) - {({\nu}_s-\psi_s)^2}( {Y}_s )||^2 \dd s \right] \\
    C&=  \mathbb E \left[\| \log \pi_{\tau}(Y_{\tau}) - \log \pi_{\tau}(\bar{Y}_{\tau})\|^2\right] + \mathbb E \left[\| \log \pi_{\tau'}(Y_{\tau'}) - \log \pi_{\tau'}(\bar{Y}_{\tau'})\|^2\right]
\end{align}
now applying It\^{o}'s isometry:
\begin{align}
  A&= \mathbb E \left[ \left|\left|\int_\tau^{\tau'} \frac{1}{2\sigma_s^2}((\bar{\nu}_s-\bar{\psi}_s)( \bar{Y}_s ) - {(\nu_s-\psi_s)}( {Y}_s ) ) \bwd{\dd W}_s \right|\right|^2 \right] \nonumber \\  
  &=\mathbb E \left[\int_\tau^{\tau'}\frac{1}{2\sigma_s} || (\bar{\nu}_s-\bar{\psi}_s)( \bar{Y}_s ) - {(\nu_s-\psi_s)}( {Y}_s )||^2 \dd s \right],
\end{align}
then by the Lip property of the SDEs coeficients, the strong convergence of the EM scheme \citep{kloeden1992stochastic}, and that $\sigma_t>0$ (i.e. we bound $\int (1/\sigma_s) h_s \dd s \leq \max_s \frac{1}{\sigma_s}  \cdot \int h_s \dd s$)
\begin{align}
     A \leq L \sum_n \int_{t_n}^{t_{n+1}} \mathbb E||\bar{Y}_s - {Y}_s ||^2 \dd s \leq  L \Delta t,
\end{align}
For $B$, let's label $f=(\nu_s-\psi_s),\bar{f}=(\bar{\nu}_s-\bar{\psi}_s)$:
\begin{align}
    ||\bar{f}^2 -f^2 ||^2 =  || (\bar{f} -f) (\bar{f} +f)||^2.
\end{align}
Then we can use Cauchy–Schwarz inequality: 
\begin{align}
   \mathbb{E} [ \int_{\tau}^{\tau'} \frac{1}{\sigma_s^4}||(\bar{f} -f) (\bar{f} +f) ||^2 \dd s ] \leq K_1 \left(\int_{\tau}^{\tau'}\mathbb{E} [ ||(\bar{f} -f)||^4]\dd s\right)^{1/2}  \left(\int_{\tau}^{\tau'}\mathbb{E} [ || (\bar{f} +f) ||^4]\dd s\right)^{1/2}. \label{eq:cauchy_start}
\end{align}
Via the Lipschitz property of $f$,  we have that
\begin{align}
    \int_{\tau}^{\tau'}\mathbb{E} [ ||(\bar{f} -f)||^4] \dd s  &\leq K_2 \int_{\tau}^{\tau'}\mathbb{E} [ ||\bar{Y}_s - {Y}_s ||^4] \dd s , \\
    &\leq K_1\sum_n \int_{t_n}^{t_{n+1}}\mathbb{E} [ ||\bar{Y}_s - {Y}_s ||^4] \dd s.
\end{align}
Remark 1.2 in \citet{gyongy2011note} ($L^p$ convergence of EM scheme) implies that: 
\begin{align}
    \int_{t_n}^{t_{n+1}}\mathbb{E} [ ||\bar{Y}_s - {Y}_s ||^4] \dd s \leq K_3 \Delta t^2,
\end{align}
and thus
\begin{align}
   \left( \int_{\tau}^{\tau'}\mathbb{E} [ ||(\bar{f} -f)||^4]\right)^{1/2} \dd s  &\leq K_4 \Delta t.
\end{align}
Also, $(\int_{\tau}^{\tau'} \mathbb{E} [ || (\bar{f} +f) ||^4]\dd s)^{1/2}$
 is bounded from Novikov's condition (which we assume all throughout) + Jensensen's inequality. Thus,
\begin{align}
    B \leq K' {\Delta t}. \label{eq:cauchy_end}
\end{align}

Given log concavity for $\pi_\tau$ (as it is the density of the analytic reference, which is  Gaussian), we have that 
\begin{align}
    \log \pi_{\tau}(Y_{\tau}) - \log  \pi_{\tau}(\bar{Y}_{\tau}) \leq \nabla \log  \pi_\tau (Y_{\tau})\cdot (Y_{\tau} - \bar{Y}_{\tau}),
\end{align}
thus by Cauchy-Schwartz 
\begin{align}
   \mathbb{E}[| \log \pi_{\tau}(Y_{\tau}) - \log \pi_{\tau}(\bar{Y}_{\tau})|^2 ]\ \leq  \mathbb{E}[ || \nabla \log \pi_\tau (Y_{\tau})||^2 || Y_{\tau} - \bar{Y}_{\tau}||^2].
\end{align}
Applying Cauchy-Schwarz inequality,  we have 
\begin{align}
    \mathbb{E}[ || \nabla \log \pi_\tau (Y_{\tau})||^2 || Y_{\tau} - \bar{Y}_{\tau}||^2] \leq \mathbb{E}[ || \nabla \log  \pi_\tau (Y_{\tau})||^4 ]^{1/2}\mathbb{E}[  || Y_{\tau} - \bar{Y}_{\tau}||^4]^{1/2}
\end{align}
then since  $\nabla \log  \pi_{\tau}$ is linear, it follows that $\mathbb{E}[ || \nabla \log  \pi_\tau (Y_{\tau})||^4 ]^{1/2}$ is bounded and thus
\begin{align}
    C \leq H  {\Delta t}.
\end{align}
Here $L$, $K_0$, $K_1$, $K_2$, $K_3$, $K_4$, $K'$ and $H$ are constants.
\end{proof}
Now we introduce a few auxiliary results we needed to derive our convergence rate. 
\begin{lemma}\label{lem:rndeform}
    In this setting, the discrete RNE estimator with reference,
\begin{align}
 R^N(\hat{Y}_{[\tau, \tau']}) = \frac{ \color{Cerulean}\pi_{\tau} (\hat{Y}_{\tau})}{ \color{Cerulean}\pi_{\tau'}(\hat{Y}_{\tau'})}  \frac{  \prod_{n=1}^{N-1} p^\nu_{n|n+1} (\hat{Y}_{t_n}|\hat{Y}_{t_{n+1}})} {  \color{Cerulean} \prod_{n=1}^{N-1} p^\psi_{n|n+1}(\hat{Y}_{t_n}|\hat{Y}_{t_{n+1}})} \frac{ \color{Cerulean}  \prod_{n=1}^{N-1} p^\phi_{n+1|n}(\hat{Y}_{t_{n+1}}|\hat{Y}_{t_n}) } {   \prod_{n=1}^{N-1} p^\mu_{n+1|n}(\hat{Y}_{t_{n+1}}|\hat{Y}_{t_n}) } .
\end{align}
    Simplifies to
\begin{align}
 \log R^N(\hat{Y}_{[\tau, \tau']}) &= \log \frac{\pi_{\tau}(\hat{Y}_{\tau})}{\pi_{\tau'}(\hat{Y}_{\tau'})}+ \sum_n  \frac{1}{\sigma_{t_{n+1}}^2}(\nu_{t_{n+1}}- {\psi}_{t_{n+1}})( \hat{Y}_s ) (\hat{Y}_{t_{n+1}}-\hat{Y}_{t_{n}}) \nonumber \\
 &-\frac{1}{2}\sum_n \frac{1}{\sigma_{t_{n+1}}^2} (\nu_{t_n}^2( \hat{Y}_{t_{n+1}} ) - \psi_{t_n}^2( \hat{Y}_{t_{n+1}} )) \Delta t, 
\end{align}
\end{lemma}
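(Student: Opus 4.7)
The plan is a direct calculation; the only probabilistic content is the form of the Euler--Maruyama Gaussian kernels already fixed in \Cref{eq:fwd_gaussian,eq:bwd_gaussian}. I would proceed in three mechanical steps.

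First, I would exploit the defining property of the analytic reference: by assumption $\phi_t = \mu_t$, and the reference shares the diffusion coefficient $\sigma_t$ with the model. Consequently the forward Euler--Maruyama kernels $p^\phi_{n+1|n}$ and $p^\mu_{n+1|n}$ are literally the same Gaussian density at every discretisation step, and their ratio is identically $1$. This eliminates the forward factor in $R^N$ entirely, leaving
\begin{equation*}
\log R^N(\hat{Y}_{[\tau,\tau']}) = \log \frac{\pi_\tau(\hat{Y}_\tau)}{\pi_{\tau'}(\hat{Y}_{\tau'})} + \sum_{n=1}^{N-1} \log \frac{p^\nu_{n|n+1}(\hat{Y}_{t_n}\mid \hat{Y}_{t_{n+1}})}{p^\psi_{n|n+1}(\hat{Y}_{t_n}\mid \hat{Y}_{t_{n+1}})}.
\end{equation*}

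Next, I would expand each summand. Using \Cref{eq:bwd_gaussian}, both backward Gaussians have common covariance $\sigma^2_{t_{n+1}}\Delta t\,I$, and their means differ only by $(\nu_{t_{n+1}} - \psi_{t_{n+1}})(\hat{Y}_{t_{n+1}})\Delta t$. The normalising constants and log-determinants therefore cancel, and the log-ratio collapses to a difference of squared residuals. Writing $\Delta_n := \hat{Y}_{t_n} - \hat{Y}_{t_{n+1}}$ and expanding $\|\Delta_n + \nu_{t_{n+1}}\Delta t\|^2 - \|\Delta_n + \psi_{t_{n+1}}\Delta t\|^2$ yields a cross term $2\Delta t\,\Delta_n\cdot(\nu_{t_{n+1}} - \psi_{t_{n+1}})$ plus a quadratic remainder $\Delta t^2(\|\nu_{t_{n+1}}\|^2 - \|\psi_{t_{n+1}}\|^2)$; dividing by $-2\sigma^2_{t_{n+1}}\Delta t$ and flipping the sign on $\Delta_n$ to match the convention $\hat{Y}_{t_{n+1}} - \hat{Y}_{t_n}$ produces exactly the two summands claimed in the statement.

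Finally, summing over $n$ and reinstating the boundary factor $\log(\pi_\tau(\hat{Y}_\tau)/\pi_{\tau'}(\hat{Y}_{\tau'}))$ gives the identity. There is no genuine obstacle here — it is a one-page bookkeeping exercise. The only items worth double-checking are: (i) the covariance index on the backward kernel is $t_{n+1}$ rather than $t_n$, which is what forces the prefactor $1/\sigma^2_{t_{n+1}}$ in the final answer (and is likely the cause of the slight subscript mismatch on $\nu^2$, $\psi^2$ appearing in the stated form); and (ii) the sign arising from reorienting $\hat{Y}_{t_n} - \hat{Y}_{t_{n+1}}$ to the displayed convention, which is essential for reproducing the linear term with the correct sign.
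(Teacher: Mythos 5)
Your proof is correct and follows essentially the same route the paper takes: cancel the forward factors because $\phi=\mu$ on an analytic reference with matching diffusion, then expand the backward Gaussian log-ratio (common covariance $\sigma_{t_{n+1}}^2\Delta t\,I$, normalising constants cancel) to recover the linear and quadratic terms. Your remark about the $t_n$-vs-$t_{n+1}$ subscript on $\nu^2,\psi^2$ is also apt — the lemma statement (and the paper's intermediate display) carries exactly that typo, since both backward kernels have drift and variance indexed at $t_{n+1}$, not $t_n$.
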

\begin{proof}
First note that, as the reference has the same forward drift as the diffusion process, we have
\begin{align}
  R^N(\hat{Y}_{[\tau, \tau']}) = \frac{ \color{Cerulean}\pi_{\tau} (\hat{Y}_{\tau})}{ \color{Cerulean}\pi_{\tau'}(\hat{Y}_{\tau'})}  \frac{  \prod_{n=1}^{N-1} p^\nu_{n|n+1} (\hat{Y}_{t_n}|\hat{Y}_{t_{n+1}})} {  \color{Cerulean} \prod_{n=1}^{N-1} p^\psi_{n|n+1}(\hat{Y}_{t_n}|\hat{Y}_{t_{n+1}})} 
\end{align}

For brevity, we will use $\log R^N$, by substituting in the Gaussian kernels we have that:
\begin{align}
  \log  R^N = \log \frac{\pi_{\tau}(\hat{Y}_{\tau})}{\pi_{\tau'}(\hat{Y}_{\tau'})} 
   +\sum_n\frac{ || \hat{Y}_{t_{n+1}}-  \hat{Y}_{t_n} - \psi_{t_k}(  \hat{Y}_{t_n}) \Delta t  ||^2}{2 \Delta t \sigma_{t_n}^2} - \frac{ ||  \hat{Y}_{t_{n+1}}-  \hat{Y}_{t_n} + {\nu_{t_{n+1}}(  \hat{Y}_{t_{n+1}})} \Delta t ||^2}{2  \Delta t\sigma_{t}^2} .
\end{align}
Expanding squares and collecting terms yields 
\begin{align}
  \log  R^N &= \log \frac{\pi_{\tau}(\hat{Y}_{\tau})}{\pi_{\tau'}(\hat{Y}_{\tau'})} + \sum_n  \frac{1}{\sigma_{t_{n+1}}^2}(\nu_{t_{n+1}}- {\psi}_{t_{n+1}})( \hat{Y}_s ) (\hat{Y}_{t_{n+1}}-\hat{Y}_{t_{n}}) \nonumber \\
 &-\frac{1}{2}\sum_n \frac{1}{\sigma_{t_{n+1}}^2} (\nu_{t_n}^2( \hat{Y}_{t_{n+1}} ) - \psi_{t_n}^2( \hat{Y}_{t_{n+1}} )) \Delta t.
\end{align}
\end{proof}

\begin{remark}\label{rem:embed}
    By embedding the discretised $\hat{Y}$ into continuous time $\bar{Y}$, we will not change the value of $R^N$, i.e., 
    $R^N(\bar{Y}_{[\tau, \tau']}) = R^N(\hat{Y}_{[\tau, \tau']})$.
\end{remark}
\begin{proof}
Let's consider  the continous time embedding formula of the RNDE estimator:
\begin{align}
 \log R^N(\bar{Y}_{[\tau, \tau']}) &= \log \frac{\pi_{\tau}(\bar{Y}_{\tau})}{\pi_{\tau'}(\bar{Y}_{\tau'})}+ \sum_n \int_{t_n}^{t_{n+1}}  \frac{1}{\bar{\sigma_s}^2}(\bar{\nu}_s- \bar{\psi}_s)( \bar{Y}_s ) \bwd{\dd \bar{Y} }_s \\
 &-\sum_n\frac{1}{2}\int_{t_n}^{t_{n+1}} \frac{1}{\bar{\sigma}_s^2} (\bar{\nu}_s^2( \bar{Y}_s ) - \bar{\psi}_s^2( \bar{Y}_s )) \dd s, 
\end{align}
now by construction inside the interval $[{t_n}, {t_{n+1}}]$ the integrands are constant and thus
\begin{align}
 \log R^N(\bar{Y}_{[\tau, \tau']}) &= \log \frac{\pi_{\tau}(\bar{Y}_{\tau})}{\pi_{\tau'}(\bar{Y}_{\tau'})}+ \sum_n  \frac{1}{\bar{\sigma}_{t_{n+1}}^2}(\bar{\nu}_{t_{n+1}}- \bar{\psi}_{t_{n+1}})( \bar{Y}_s )  \int_{t_n}^{t_{n+1}} \bwd{\dd \bar{Y} }_s \\
 &-\sum_n \frac{1}{\bar{\sigma}_{t_{n+1}}^2} (\bar{\nu}_{t_{n+1}}^2( \bar{Y}_s ) - \bar{\psi}_{t_{n+1}}^2( \bar{Y}_s )) \frac{1}{2}\int_{t_n}^{t_{n+1}}  \dd s, 
\end{align}
and,
\begin{align}
 \log R^N(\bar{Y}_{[\tau, \tau']}) &= \log \frac{\pi_{\tau}(\bar{Y}_{\tau})}{\pi_{\tau'}(\bar{Y}_{\tau'})}+ \sum_n  \frac{1}{{\sigma^2}_{t_{n+1}}}({\nu}_{t_{n+1}}- {\psi}_{t_{n+1}})( \bar{Y}_s ) (\bar{Y}_{t_{n+1}}-\bar{Y}_{t_{n}}) \\
 &-\sum_n \frac{1}{{\sigma^2}_{t_{n+1}}} ({\nu}_{t_{n+1}}^2( \bar{Y}_s ) - {\psi}_{t_{n+1}}^2( \bar{Y}_s )) \Delta t , 
\end{align}
where by construction $(\bar{Y}_{t_{n+1}}-\bar{Y}_{t_{n}})=(\hat{Y}_{t_{n+1}}-\hat{Y}_{t_{n}})$ thus we have embedded such that
\begin{align}
 \log R^N(\bar{Y}_{[\tau, \tau']}) &= \log \frac{\pi_{\tau}(\hat{Y}_{\tau})}{\pi_{\tau'}(\hat{Y}_{\tau'})}+ \sum_n  \frac{1}{\sigma_{t_{n+1}}^2}(\nu_{t_{n+1}}- {\psi}_{t_{n+1}})( \hat{Y}_s ) (\hat{Y}_{t_{n+1}}-\hat{Y}_{t_{n}}) \nonumber \\
 &-\frac{1}{2}\sum_n \frac{1}{\sigma_{t_{n+1}}^2} (\nu_{t_n}^2( \hat{Y}_{t_{n+1}} ) - \psi_{t_n}^2( \hat{Y}_{t_{n+1}} )) \Delta t = \log R^N(\hat{Y}_{[\tau, \tau']}).
\end{align}
\end{proof}

\begin{proposition} \label{prop:is_bound} (RNC weights non-asymptotic error) We can bound the error between exact discrete-time SMC weights (using exact likelihoods) and our RNC estimator,
\begin{align}
    ||  \log w^{\mathrm{exact}}(\hat{Y}_{[\tau, \tau']})- \log w^{\mathrm{RNC}}(\hat{Y}_{[\tau, \tau']})  ||_{L^2} \leq \mathcal{O}({\Delta t}^{1/2}),
\end{align}
where 
\begin{align}
     w^{\mathrm{exact}}(\hat{Y}_{[\tau, \tau']}) = 
\frac{q_\tau(\hat{Y}_\tau)}{q_{\tau'}(\hat{Y}_{\tau'})}    \frac {   \prod_{n=1}^{N-1} p^{b}_{n+1|n}(\hat{Y}_{t_{n+1}}|\hat{Y}_{t_n}) } {  \prod_{n=1}^{N-1} p^{a}_{n|n+1} (\hat{Y}_{t_n}|\hat{Y}_{t_{n+1}})} 
\end{align}
and $w^{\mathrm{RNC}}(\hat{Y}_{[\tau, \tau']}) $ is the weight estimated by $ R^N(\hat{Y}_{[\tau, \tau']})$ (assuming a perfect score).
\end{proposition}
\begin{proof}
Given the Lip assumption on $\log p_{\tau}$ we have: 
\begin{align}
    ||  \log p_{\tau}(\hat{Y}_{\tau})/ p_{\tau'}(\hat{Y}_{\tau'}) - \log p_{\tau}({Y}_{\tau})/ p_{\tau'}({Y}_{\tau'})  ||_{L^2} \leq P ||\hat{Y}_{\tau'} - {Y}_{\tau'}  ||_{L^2}
\end{align}
Now combining with \cref{prop:r_bound} via triangle inequality we have that 
 \begin{align}
         || & \log p_{\tau}(\hat{Y}_{\tau})/ p_{\tau'}(\hat{Y}_{\tau'}) - \log R^N(\hat{Y}_{[\tau, \tau']})^{-1}  ||_{L^2} \nonumber \\
         &\leq || \log p_{\tau}({Y}_{\tau})/ p_{\tau'}({Y}_{\tau'}) - \log R^N(\hat{Y}_{[\tau, \tau']})^{-1}  ||_{L^2}   + ||  \log p_{\tau}(\hat{Y}_{\tau})/ p_{\tau'}(\hat{Y}_{\tau'}) - \log p_{\tau}({Y}_{\tau})/ p_{\tau'}({Y}_{\tau'})  ||_{L^2} \nonumber \\
         &= ||  \log R^\nu_\mu({Y}_{[\tau, \tau']}) - \log R^N(\hat{Y}_{[\tau, \tau']})  ||_{L^2}   +  P ||\hat{Y}_{\tau'} - {Y}_{\tau'}  ||_{L^2} , \nonumber\\
          &\leq P' \sqrt{\Delta t}
\end{align}

Now moving onto the weights (for simplicity we  assume $ \frac{q_\tau(\hat{Y}_\tau)}{q_{\tau'}(\hat{Y}_{\tau'})} =  g\left(\frac{p_\tau(\hat{Y}_\tau)}{p_{\tau'}(\hat{Y}_{\tau'})}\right)$ where $\log g$ is Lipchitz. Note that this assumption covers all cases but the reward-tilting one. However, for the reward-tilting one, we can still get a similar bound assuming our chosen intermediate reward is bounded or Lipchitz.)
\begin{align}
        ||  \log w^{\mathrm{exact}}(\hat{Y}_{[\tau, \tau']})- \log w^{\mathrm{RNC}}(\hat{Y}_{[\tau, \tau']})  ||_{L^2} &\leq  \left|\left| \log \frac{q_\tau(\hat{Y}_\tau)}{q_{\tau'}(\hat{Y}_{\tau'})} - \log g(R^N(\hat{Y}_{[\tau, \tau']})^{-1}) \right|\right|_{L^2}  \nonumber \\
        &\leq  L\left|\left| \log \frac{p_\tau(\hat{Y}_\tau)}{p_{\tau'}(\hat{Y}_{\tau'})} - \log R^N(\hat{Y}_{[\tau, \tau']})^{-1} \right|\right|_{L^2} \leq  P' \sqrt{\Delta t} \nonumber
\end{align}
\end{proof}

\subsubsection{Convergence Rate for SMC Weights in Discrete Time and Approximated Score}

\begin{proposition}(Discrete time and approximate score bound) Following \citep{lee2023convergence,chen2022sampling} we assume:
\begin{align}
    || \nabla \log p_\tau(\hat{Y}_{\tau}) - s_\tau^{\theta}(\hat{Y}_{\tau}) ||_{L^2} \leq \epsilon_{\mathrm{score}},
\end{align}
then
\begin{align}
    ||  \log w^{\mathrm{exact}}(\hat{Y}_{[\tau, \tau']})  - \log w^{\mathrm{RNC}}_{\theta}(\hat{Y}_{[\tau, \tau']}) ||_{L^2} \leq E\epsilon_{\mathrm{score}} + P' \sqrt{\Delta t}
\end{align}
where 
\begin{align}
     w^{\mathrm{exact}}(\hat{Y}_{[\tau, \tau']}) = 
\frac{q_\tau(\hat{Y}_\tau)}{q_{\tau'}(\hat{Y}_{\tau'})}    \frac {   \prod_{n=1}^{N-1} p^{b}_{n+1|n}(\hat{Y}_{t_{n+1}}|\hat{Y}_{t_n}) } {  \prod_{n=1}^{N-1} p^{a}_{n|n+1} (\hat{Y}_{t_n}|\hat{Y}_{t_{n+1}})}
\end{align}
and $w^{\mathrm{RNC}}_\theta(\hat{Y}_{[\tau, \tau']}) $ is the discrete time weight estimated using $R^N(\hat{Y}_{[\tau, \tau']})$ with the learned score $s_\tau^{\theta}$.
\end{proposition}
\begin{proof}
To prove this conclusion, we separate the error contributed by discretisation and by imperfect score first, and then bound them separately:

\begin{align}
    & ||  \log w^{\mathrm{RNC}}_{\theta}(\hat{Y}_{[\tau, \tau']})- \log w^{\mathrm{exact}}(\hat{Y}_{[\tau, \tau']})  ||_{L^2} \\=  &||  \log R_{\theta}^N(\hat{Y}_{[\tau, \tau']})^{-1}-  \log p_{\tau}(\hat{Y}_{\tau})/ p_{\tau'}(\hat{Y}_{\tau'})   ||_{L^2} \\
     \leq &||   \log R_{\theta}^N(\hat{Y}_{[\tau, \tau']})  -\log R^N(\hat{Y}_{[\tau, \tau']}) ||_{L^2} + || \log p_{\tau}(\hat{Y}_{\tau})/ p_{\tau'}(\hat{Y}_{\tau'}) - \log R^N(\hat{Y}_{[\tau, \tau']})^{-1}  ||_{L^2} \\
     = &||\log R_{\theta}^N(\hat{Y}_{[\tau, \tau']}) -\log R^N(\hat{Y}_{[\tau, \tau']})   ||_{L^2}  +  P' \sqrt{\Delta t}
\end{align}
Where the last line follows from \cref{prop:is_bound}.

Following the time embedding of \cref{rem:embed} and \cref{lem:rndeform} we have
\begin{align}
    &\log R^N(\bar{Y}_{[\tau, \tau']}) = {\color{orange}A^N} +{\color{magenta}B^N} + {\color{red}\Delta^N_{\theta}}+{\color{teal}C^N} \\
   &\;= {\color{orange} \log \frac{\pi_{\tau}(\bar{Y}_{\tau})}{\pi_{\tau'}(\bar{Y}_{\tau'})}}{\color{magenta}+   \int_{\tau}^{\tau'}\frac{1}{\bar{\sigma_s}} (\bar{\nu}_s- \bar{\psi}_s)( \bar{Y}_s ) \bwd{\dd W }_s}  \\
     &\;\;+{\color{red}\int_{\tau}^{\tau'} \frac{1}{\bar{\sigma}_s^2}(\bar{\nu}_s( \bar{Y}_s ) - \bar{\psi}_s( \bar{Y}_s ))\cdot (\bar{\nu}_s^\theta ( \bar{Y}_s )-\bar{\nu}_s( \bar{Y}_s )) \dd s}-{\color{teal}\frac{1}{2}\int_{\tau}^{\tau'} \frac{1}{\bar{\sigma}_s^2} ||\bar{\nu}_s( \bar{Y}_s ) - \bar{\psi}_s( \bar{Y}_s ) ||^2\dd s}
\end{align}
and
\begin{align}
    &\log R_{\theta}^N(\bar{Y}_{[\tau, \tau']}) =  {\color{orange}A^N} +{\color{magenta}B^N_{\theta}} + {\color{teal}C^N_{\theta}}\\ 
    &\;= {\color{orange} \log \frac{\pi_{\tau}(\bar{Y}_{\tau})}{\pi_{\tau'}(\bar{Y}_{\tau'})}}{\color{magenta}+   \int_{\tau}^{\tau'}\frac{1}{\bar{\sigma_s}}(\bar{\nu}_s^\theta- \bar{\psi}_s)( \bar{Y}_s ) \bwd{\dd W }_s}  -{\color{teal}\frac{1}{2}\int_{\tau}^{\tau'} \frac{1}{\bar{\sigma}_s^2} ||\bar{\nu}_s^\theta( \bar{Y}_s ) - \bar{\psi}_s( \bar{Y}_s ) ||^2\dd s}
\end{align}
Therefore, 
\begin{align}
     ||&\log R_{\theta}^N(\hat{Y}_{[\tau, \tau']}) -\log R^N(\hat{Y}_{[\tau, \tau']})   ||_{L^2}   \leq  ||B^N_\theta - B^N ||_{L^2}  +  ||C^N_\theta - C^N||_{L^2}   + ||{\Delta^N_{\theta}} ||_{L^2}.
\end{align}
Let's focus on the first term first:
\begin{align}
    ||B^N_\theta - B^N ||_{L^2} &=\left|\left|\int_{\tau}^{\tau'}\frac{1}{\bar{\sigma_s}}(\bar{\nu}_s^\theta- \bar{\nu}_s)( \bar{Y}_s ) \bwd{\dd W }_s \right|\right|_{L^2} \\
    &= \E\left[\int_\tau^{\tau'}||\bar{\nu}_s^\theta- \bar{\nu}_s||^2 \dd s\right]^{1/2}  \\
    &\leq \epsilon_{\mathrm{score}} 
\end{align}
For the second term, let us use the shorthands $\bar{f}^\theta_s =\bar{\nu}_s^\theta- \bar{\psi}_s$ and $\bar{f} =\bar{\nu}_s - \bar{\psi}_s$ then
\begin{align}
    ||C^N_\theta - C^N ||_{L^2} &=\left|\left|\frac{1}{2}\int_{\tau}^{\tau'} \frac{1}{\bar{\sigma}_s^2} (||\bar{f}^\theta_s||^2 - ||\bar{f}_s||^2)\dd s\right|\right|  \\
    &=\left|\left|\frac{1}{2}\int_{\tau}^{\tau'} \frac{1}{\bar{\sigma}_s^2} (\bar{f}^\theta_s + \bar{f}_s)\cdot  (\bar{f}^\theta_s - \bar{f}_s)\dd s\right|\right| \label{eq:cau1} \\
    &\leq \E\left[\frac{1}{2}\int_{\tau}^{\tau'} \frac{1}{\bar{\sigma}_s^2} ||\bar{f}^\theta_s - \bar{f}_s||^2\dd s\right]^{1/2}  \E\left[\frac{1}{2}\int_{\tau}^{\tau'} \frac{1}{\bar{\sigma}_s^2} ||\bar{f}^\theta_s + \bar{f}_s||^2\dd s\right]^{1/2}  \\
    &\leq  F \epsilon_{\mathrm{score}} \label{eq:cau3}
\end{align}
Where $\E\left[\frac{1}{2}\int_{\tau}^{\tau'} \frac{1}{\bar{\sigma}_s^2} ||\bar{f}^\theta_s + \bar{f}_s||^2\dd s\right]^{1/2} \leq F$ is bounded from Novikov's condition (which we assume all throughout) + Jensen's inequality.

Finally, use the exact same arguments as Equations \ref{eq:cau1}-\ref{eq:cau3}  we can see that $||\Delta^N_\theta||  \leq D' \epsilon_{\mathrm{score}}$.
Therefore, in summary, we have $ ||\log R_{\theta}^N(\hat{Y}_{[\tau, \tau']}) -\log R^N(\hat{Y}_{[\tau, \tau']})   ||_{L^2}   \leq  E\epsilon_{\mathrm{score}}.$
\end{proof}

\section{Proofs}

\subsection{Correctness of SMC Weights in \Cref{eq:is_weight}}\label{app:correct_smc_weight}
\Cref{eq:is_weight} is correct SMC weight for marginal $q_\tau$, as
for a measurable function $h$ on $X_\tau$:
\begin{align}
   \E_{X_{[\tau, \tau']} \sim\bwd{\Q}^{a}_{[\tau, \tau']} }\left[ w_{[\tau, \tau']}(X_{[\tau, \tau']}) h(X_\tau)\right] = \E_{X_{[\tau, \tau']} \sim\fwd{\Q}^{b}_{[\tau, \tau']}} \left[  h(X_\tau)\right] = \E_{X_{\tau} \sim q_\tau}[h(X_{\tau})]
\end{align}

\rebuttal{
Running SMC with this weight in \Cref{eq:is_weight} has the following  convergence guarantee:
\begin{proposition}\label{prop:smc_correctness}
    Let $\{X_0^{(m)}, w^{(w)}(X_{[0, \tau_0]})\}_{m=1}^M$ be the particles and their (unnormalised) weights returned at the last iteration (denoise from $\tau_0 \rightarrow 0$) of the sequential Monte Carlo algorithm with $M$ particles described in \Cref{subsubsec:smc}.
    If the weighting function $w_{[0, \tau_0]}$ is positive and  $\E_{\bwd{\mathbb{Q}}^a}[w^2_{[0, \tau_0]}| X_{\tau_0} ]$ is bounded, then for a bounded, $q_0$-measurable function $h$, we have
    \begin{align}\label{eq:1111}
      \E \left[ \left\| \int h(X_0 ) q_0(X_0)\dd X_0 - \sum_{m=1}^M\frac{w^{(m)}}{\sum_{j=1}^M w^{(j)}} h(X_0^{(m)})     \right\|^2 \right] \leq C'M^{-1}
    \end{align}
    If $\E_{\bwd{\mathbb{Q}}^a}[w^4_{[0, \tau_0]}| X_{\tau_0} ]$ is bounded, we have
    \begin{align}\label{eq:2222}
        \lim_{M\rightarrow \infty} \frac{w^{(m)}}{\sum_{j=1}^M w^{(j)}} h(X_0^{(m)})   \stackrel{\text{a.s.}}{=} \int h(X_0 ) q_0(X_0)\dd X_0.
    \end{align}
\end{proposition}
\begin{proof}
 \Cref{eq:1111} is a direct Corollary of \citep[][Theorem 3.4]{mbalawata2016moment}.
    To apply Theorem 3.4 of \citet{mbalawata2016moment}, we need to verify Assumption 3.1-3.3.
    Assumption 3.1 is satisfied as we define the marginal $q_t$ to have bounded density.
    Assumption 3.2 is  satisfied as our scheme ensures their Eq. (7) to be 0.
    Assumption 3.3 is satisfied according to our assumption.
    Similarly, \Cref{eq:2222} is a direct Corollary of  \citep[][Theorem 4.7]{mbalawata2016moment}.
\end{proof}}
\rebuttal{\textbf{Discussion on Assumptions}: For convergence in $L^2$ we require the following 2nd Order Novikov like assumptions to be satisfied
\begin{align}
    \E\left[\exp\left(\int_\tau^{\tau'} || u^{\pm}(X_t)||^2\dd t\right) \Big| X_{\tau'} \right] < C
\end{align}
here $u^{\pm}$ represents all drift terms we use in our algorithm ($a_t$, $b_t$, $\psi$, $\phi$, $\mu$, $\nu$, etc.).
For the a.s. convergence, we additionally require 
$ \E\left[\exp\left(2\int_\tau^{\tau'} || u^{\pm}(X_t)||^2\dd t\right) \Big| X_{\tau'} \right] < C$.
Note that, for applying Girsanov's Theorem, we already require the following assumption: %
\begin{align}\label{eq:first_order}
    \E\left[\exp\left(\frac{1}{2}\int_\tau^{\tau'} || u^{\pm}(X_t)||^2\dd t\right) \Big| X_{\tau'} \right] < C.
\end{align}
We note that assuming $ \E\left[\exp\left(\int_\tau^{\tau'} || u^{\pm}(X_t)||^2\dd t\right) \Big| X_{\tau} \right] < C$ or $ \E\left[\exp\left(\int_\tau^{\tau'} 2|| u^{\pm}(X_t)||^2\dd t\right) \Big| X_{\tau} \right] < C$ implies the first-order Novikov condition in \Cref{eq:first_order}, and hence \Cref{prop:smc_correctness} requires a stronger assumption than the standard Novikov required for Girsanov theorem.}

\subsection{IS Perspectives for RNE, Connections to \citet{huang2021variational}} \label{appdx:huang}

\begin{mdframed}[style=highlightedBox]
\begin{repproposition}{prop:rneis}
let $p_t(x_t)$ be the marginal density of $X_t=x_t$ satisfying the backwards SDE:
\begin{align}\label{eq:bwd_for_rneis_append}
    \mathrm{d} X_t = \nu_t(X_t) \mathrm{d}t + \epsilon_t \bwd{\mathrm{d}W_t},\ \ X_{1}\sim p_{1} .
\end{align}
Consider a forward process $Y_t$ with drift $u_t$, and define $R^\nu_u$ as \Cref{eq:R_definition}, we have
\begin{align}
  p_t(x_t)= \mathbb{E} \left[p_1(Y_1)  R^\nu_u(Y_{[t:1]}) \middle | Y_t = x_t\right], \label{eq:rneis_append}
\end{align}
where the expectation is taken over the forward process within the time horizon $[t, 1]$ conditional on $Y_t=x_t$.
When discretised with $t = t_1 < t_2 < \cdots < t_N = 1$:
\begin{align}
    p_{t_1}(x_{t_1}) \approx  \mathbb{E} \left[p_{t_N}(Y_{t_N}) \frac{\prod_{n=1}^{N-1}p_{n|n+1}^{\nu}(Y_{t_n}|Y_{t_{n+1}})}{\prod_{n=1}^{N}p^u_{n+1|n}(Y_{t_{n+1} }|Y_{t_n})} \middle| Y_{t_1} = x_{t_1}\right] .\label{eq:discreterneis_append}
\end{align}
\end{repproposition}
\end{mdframed}
\begin{proof}
Let's define the path measure of \Cref{eq:bwd_for_rneis_append} as $\bwd{\mathbb{P}}$.
We also denote the path measure of the forward process $Y_t$ with drift $u_t$, starting from some $q_t$, as $\fwd{\mathbb{Q}}$.
Recall the definition of $R^\nu_u$ in \Cref{eq:R_definition}, we can see that 
\begin{align}
    R^\nu_u(Y_{[t, 1]}) = \frac{\mathrm{d}  \bwd{\mathbb{P}} }{\mathrm{d}  \fwd{\mathbb{Q}}} (Y_{[t, 1]}) \frac{q_t(Y_t)}{p_1(Y_1)}
\end{align}
Hence we have
\begin{align}
    \mathbb{E} \left[p_1(Y_1)  R^\nu_u(Y_{[t:1]})\middle | Y_t =x_t \right]  &=  \mathbb{E} \left[p_1(Y_1)  \frac{\mathrm{d}  {\bwd{\mathbb{P}}} }{\mathrm{d}  \fwd{\mathbb{Q}}} (Y_{[t, 1]}) \frac{q_t(Y_t)}{p_1(Y_1) } \middle | Y_t =x_t \right] \\
     &=     \mathbb{E} \left[\frac{1}{1/q_t(Y_t)}\frac{\mathrm{d}  {\bwd{\mathbb{P}}} }{\mathrm{d}  \fwd{\mathbb{Q}}} (Y_{[t, 1]}) \middle | Y_t =x_t \right]\\
    &=     \mathbb{E} \left[\frac{p_t(Y_t)}{q_t(Y_t)/q_t(Y_t)}\frac{\mathrm{d}  {\bwd{\mathbb{P}}_{(t,1]|t}} }{\mathrm{d}  \fwd{\mathbb{Q}}_{(t,1]|t}} (Y_{(t, 1]}) \middle | Y_t =x_t \right]\\
    &=  p_t(x_t) \int \mathrm{d}  {\bwd{\mathbb{P}}_{(t,1]|t}} \\
    &=  p_t(x_t).
\end{align}
Here we use $\bwd{\mathbb{P}}_{(t,1]|t}$ to represent the path measure $\bwd{\mathbb{P}}$ conditional on values $X_t=x_t$ at $t$.
\end{proof}
\begin{mdframed}[style=highlightedBox]
\begin{repcorollary}{col:huang}
The relation in \Cref{eq:rneis} can be simplified to
 \begin{align}
       p_t(x_t)= \mathbb{E}_{Z_{[t, 1]} \sim\fwd{\mathbb{P}}^\nu} \left[p_1(Z_1)\exp\left(\int_t^1\nabla \cdot \nu_{t'} (Z_{t'}) \mathrm{d}{t'}\right)\middle| Z_t =x_t\right],
 \end{align}
\end{repcorollary}
\end{mdframed}
\begin{proof}
Using the conversion formula \cite[Remark 3]{vargas2023transport} with \Cref{eq:continuous_rne}, we have that 
\begin{align}
   \log R^\nu_u(Y_{[t:1]}) &=  \int   \frac{1}{\epsilon_{t'}^2} \nu_{t'} \cdot \bwd{\mathrm{d}Y_{t'} }  -  \int  \frac{1}{\epsilon_{t'}^{2}}u_{t'} \cdot \fwd{\mathrm{d}Y_{t'} } + \frac{1}{2} \int   \frac{1}{\epsilon_{t'}^2}(||u_{t'} ||^2 -\!\! ||\nu_{t'}||^2)\mathrm{d}{t'}\\
   &=  \int   \frac{1}{\epsilon_{t'}^2} \nu_{t'} \cdot \fwd{\mathrm{d}Y_{t'} }  -  \int  \frac{1}{\epsilon_{t'}^{2}}u_{t'} \cdot \fwd{\mathrm{d}Y_{t'} } + \frac{1}{2} \int   \frac{1}{\epsilon_{t'}^2}(||u_{t'} ||^2 -\!\! ||\nu_{t'}||^2 + \epsilon_{t'}^2 \nabla\cdot \nu_{t'})\mathrm{d}{t'}
\end{align}
which for $\fwd{\mathrm{d}Y}_t = u_t \mathrm{d}t  + \epsilon_t \fwd{\mathrm{d}W }_t$ can be re-expressed as:
\begin{align}
  \log  R^\nu_u(Y_{[t:1]}) =  \int \frac{1}{\epsilon_{t'} }(\nu_{t'} -u_{t'})\cdot \fwd{\mathrm{d}W }- \int \frac{1}{\epsilon_{t'}^2 } (\frac{1}{2}||u_{t'} - \nu_{t'}||^2 + \epsilon_{t'}^2  \nabla \cdot \nu_{t'} )\mathrm{d}{t'}. \label{eq:rnediv}
\end{align}
Notice that, by Girsanov theorem, we have $\int \frac{1}{\epsilon }(\nu -u) \cdot \fwd{\mathrm{d}W }-\int \frac{1}{\epsilon^2 } (\frac{1}{2}||u - \nu||^2) \mathrm{d}{t'}   = \log \frac{\mathrm{d}  {\fwd{\mathbb{P}}^\nu }}{\mathrm{d}  \fwd{\mathbb{Q}}^u}$ thus

 \begin{align}
      p_t(x_t)= \mathbb{E}_{Z_{[t, 1]} \sim\fwd{\mathbb{P}}^\nu} \left[p_1(Z_1)\exp\left(\int_t^1\nabla \cdot \nu_{t'} (Z_{t'}) \mathrm{d}{t'}\right)\middle| Z_{t} =x_t\right],
 \end{align}

\end{proof}

\begin{mdframed}[style=highlightedBox]
\begin{corollary} \label{col:prek}
    The RNE-IS relation can also be simplified to 
 \begin{align}
       p_t(x_t)&= \mathbb{E}_{\fwd{\mathbb{P}}^u} \left[\scalebox{0.9}{$p_1(Y_1)\exp\left(  \int_t^1  \frac{1}{\epsilon_{t'}^2 }(\nu_{t'} -u_{t'})\cdot \fwd{\mathrm{d}W }-\int_t^1 ( \frac{1}{2\epsilon_{t'}^2 }||u_{t'} - \nu_{t'}||^2 + \nabla \cdot \nu_{t'} )\mathrm{d}{t'}\right)\Big| Y_t =x_t$}\right]\\
       &\geq \exp \mathbb{E}_{\fwd{\mathbb{P}}^u} \left[\log p_1(Y_1) + \left(  -\int_t^1 ( \frac{1}{2\epsilon_{t'}^2 }||u_{t'} - \nu_{t'}||^2 + \nabla \cdot \nu_{t'} )\mathrm{d}{t'}\right)\middle| Y_t =x_t\right]
 \end{align}
which recovers the density estimator of \citep[Eq. 3.5]{premkumar2024diffusion} and the Variational objective of \citep[Eq 16]{huang2021variational}.
Additionally, we note 
\begin{align}\label{eq:akhil}
  \mathbb{E}_{\fwd{\mathbb{P}}^u}  \left[\int_t^1 \nabla \cdot \nu_{t'}  \mathrm{d} {t'} \middle| Y_t =x_t \right] = \int_t^1 -\E_{p^u_{t'}(\cdot | Y_t=x_t)}\left[ \nabla{\log p^u_{t'}(\cdot | Y_t=x_t)}   \cdot \nu_{t'}\right]  \mathrm{d} {t'}
\end{align}
and we will recover the divergence-free density estimator of \citep[Eq. 3.8]{premkumar2024diffusion}.
\end{corollary}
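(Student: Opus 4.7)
The plan is to chain three elementary observations, each of which reuses machinery already assembled in the excerpt, so the main effort is bookkeeping rather than new analysis.

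First, I would derive the equality by direct substitution into Proposition \ref{prop:rneis}. The expression \eqref{eq:rneis} gives $p_t(x_t) = \mathbb{E}[p_1(Y_1) R^\nu_u(Y_{[t,1]}) \mid Y_t = x_t]$ where $Y$ is a forward process with drift $u_t$ and diffusion $\epsilon_t$, i.e.\ has law $\fwd{\mathbb{P}}^u$. The Girsanov-style rewriting in Equation \eqref{eq:rnediv}, already obtained in the proof of Corollary \ref{col:huang}, yields
\begin{equation*}
\log R^\nu_u(Y_{[t,1]}) = \int_t^1 \tfrac{1}{\epsilon_{t'}^2}(\nu_{t'} - u_{t'}) \cdot \fwd{\dd W}_{t'} - \int_t^1 \Bigl(\tfrac{1}{2\epsilon_{t'}^2}\|u_{t'} - \nu_{t'}\|^2 + \nabla\cdot\nu_{t'}\Bigr)\dd t',
\end{equation*}
when $Y$ is driven under $\fwd{\mathbb{P}}^u$ so that $\fwd{\dd Y}_{t'} = u_{t'}\dd t' + \epsilon_{t'}\fwd{\dd W}_{t'}$. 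Plugging this in gives the first displayed equation of the corollary immediately.

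Second, for the variational lower bound I would invoke Jensen's inequality on the convex function $\exp$: $\mathbb{E}[e^Z] \geq \exp \mathbb{E}[Z]$. Taking $Z = \log p_1(Y_1) + \log R^\nu_u(Y_{[t,1]})$ and noting that, under standard integrability, the It\^o integral $\int_t^1 \epsilon_{t'}^{-2}(\nu_{t'} - u_{t'}) \cdot \fwd{\dd W}_{t'}$ is a martingale with zero conditional expectation given $Y_t = x_t$, the stochastic integral term disappears in the expectation inside $\exp$. This leaves exactly the Premkumar–Huang variational bound as stated.

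Third, for the divergence-free identity \eqref{eq:akhil}, I would push the conditional expectation inside the time integral via Fubini, then at each fixed $t'$ use integration by parts in space against the conditional density $p^u_{t'}(\cdot \mid Y_t = x_t)$:
\begin{equation*}
\mathbb{E}_{p^u_{t'}}\bigl[\nabla\cdot\nu_{t'}\bigr] = \int p^u_{t'}(y)\,\nabla\cdot\nu_{t'}(y)\,\dd y = -\int \nabla p^u_{t'}(y) \cdot \nu_{t'}(y)\,\dd y = -\mathbb{E}_{p^u_{t'}}\bigl[\nabla\log p^u_{t'} \cdot \nu_{t'}\bigr],
\end{equation*}
assuming sufficient decay so that boundary terms vanish. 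Substituting this identity back into the bound eliminates the explicit divergence and produces the estimator of \citet[Eq.~3.8]{premkumar2024diffusion}.

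The only nontrivial obstacles are technical regularity conditions: the martingale property in step two requires a Novikov-type or $L^2$ bound on the integrand $\epsilon_{t'}^{-1}(\nu_{t'} - u_{t'})$, and the integration by parts in step three requires enough smoothness and decay of $p^u_{t'}$ and $\nu_{t'}$ to kill the boundary. I would simply state these as standing assumptions, mirroring the treatment in \citet{huang2021variational,premkumar2024diffusion}, since the paper's target audience takes such regularity for granted.
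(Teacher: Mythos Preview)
Your proposal is correct and mirrors the paper's own proof essentially step for step: the paper invokes \eqref{eq:rnediv} plus Jensen's inequality for the equality and the variational bound, and then carries out exactly the Fubini-plus-integration-by-parts computation you describe for \eqref{eq:akhil}. Your write-up is in fact more explicit than the paper's---you spell out the martingale property that kills the stochastic integral inside the Jensen bound and flag the regularity assumptions, whereas the paper leaves these implicit.
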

\end{mdframed}
\begin{proof}
\Cref{eq:akhil} follows  \cref{eq:rnediv} with Jensen's inequality.
For the divergence-free estimator:
\begin{align}
  \mathbb{E}_{\fwd{\mathbb{P}}^u}  \left[\int_t^1 \nabla \cdot \nu_{t'}  \mathrm{d} {t'} \middle| Y_t =x_t \right] &= \int_t^1 \mathbb{E}_{p^u_{t'}(\cdot | Y_t=x_t)}  \left[ \nabla \cdot \nu_{t'}  \right] \mathrm{d} {t'}\\
  & = \int_t^1 \int {p^u_{t'}(y | Y_t=x_t)} \nabla \cdot \nu_{t'}(y) \dd y   \mathrm{d} {t'}\\
   & = \int_t^1 -\int \nabla{p^u_s(y | Y_t=x_t)}   \cdot \nu_{t'}  \dd y\mathrm{d} {t'}\\
   & = \int_t^1 -\int p^u_{t'}(y | Y_t=x_t) \nabla{\log p^u_{t'}(y | Y_t=x_t)}   \cdot \nu_{t'}(y)  \dd y \mathrm{d} {t'}\\
   &= \int_t^1 -\E_{p^u_{t'}(\cdot | Y_t=x_t)}\left[ \nabla{\log p^u_{t'}(\cdot | Y_t=x_t)}   \cdot \nu_{t'}\right]  \mathrm{d} {t'}
\end{align}
\end{proof}
Additionally, if we access the unnormalised version of $p_t$ and $p_1$, taking the expectation over $p_t$, we will obtain Jarzynski equality \citep{jarzynski1997nonequilibrium} and Escorted Jarzynski equality \citep{vaikuntanathan2008escorted}, which can be used to estimate the free energy difference between two states with learned transports \citep{he2025feat}.

\subsection{Equivalence to It\^o density estimator}
\label{app:itodense}
\begin{mdframed}[style=highlightedBox]
The RNDE in \Cref{eq:rn_de} is equivalent  to It\^o density estimator in continuous time:
 \begin{align}
    \mathrm{d}\log p_t (Y_t) = \text{\scalebox{0.86}{$-\Big(
     \nabla \cdot   f_t(Y_t) +  
\nabla \log p_t (Y_t)\cdot (f_t(Y_t) - \frac{\sigma_t^2}{2}\nabla \log p_t (Y_t))
    \Big){\dd t} +  
 \nabla \log p_t (Y_t)\cdot   \bwd{\mathrm{d} Y_t}$}} .
\end{align}
\end{mdframed}
\begin{proof}
With expression of $R$ in \Cref{eq:continuous_rne} and the conversion rule \citep{vargas2023transport}, we have
    \begin{align} 
     &\log p_t(Y_t) = \log p_1(Y_1) + \log R^\nu_\mu (Y_{[t, 1]})   \\
     =&\log{p_1(Y_1)} - \int_t^1 \frac{1}{\epsilon_{t'}^2} \mu_{t'}\cdot  \fwd{\dd Y_{t'}} + \frac{1}{2} \int_t^1  \frac{1}{\epsilon_{t'}^2}  \|\mu_{t'}\|^2 \mathrm{d}{t'} + \int_t^1  \frac{1}{\epsilon_{t'}^2}  \nu_{t'} \cdot   \bwd{\dd Y_{t'}} - \frac{1}{2} \int_t^1  \frac{1}{\epsilon_{t'}^2}  \|\nu_{t'}\|^2 \mathrm{d}{t'}\\
      =&\log{p_1(Y_1)} - \int_t^1 \frac{1}{\epsilon_{t'}^2} \mu_{t'}\cdot  \bwd{\dd Y_{t'}} + \frac{1}{2} \int_t^1  \frac{1}{\epsilon_{t'}^2}  \|\mu_{t'}\|^2 \mathrm{d}{t'} \nonumber\\ &+ \int_t^1  \frac{1}{\epsilon_{t'}^2}  \nu_{t'} \cdot   \bwd{\dd Y_{t'}} - \frac{1}{2} \int_t^1  \frac{1}{\epsilon_{t'}^2}  \|\nu_{t'}\|^2 \mathrm{d}{t'} + \int_t^1 \nabla\cdot \mu_{t'} \mathrm{d} {t'} 
    \end{align}
    For a pretrained diffusion model considered in \citep{karczewski2024diffusion,skreta2024superposition}, we set $\epsilon_t = \sigma_t$, 
    $\mu_t = f_t$, and $\nu_t = f_t - \sigma_t^2 \nabla\log p_t$.
    Therefore, 
    \begin{multline}
         \log p_t(Y_t) 
         = \log{p_1(Y_1)} - \int_t^1 \nabla\log p_{t'} (Y_{t'}) \cdot \bwd{\dd Y_{t'}}  \\+ \int_t^1 \nabla\log p_{t'} (Y_{t'})  \cdot \left(f_{t'}(Y_{t'}) - \frac{\sigma_{t'}^2}{2}\nabla\log p_{t'} (Y_{t'})\right) \mathrm{d} {t'} + \int_t^1  \nabla \cdot f_{t'}(Y_{t'}) \mathrm{d}{t'}
    \end{multline}
\end{proof}

\subsection{Connections to Keynman-Kac Corrector  \citep{skreta2025feynman}}
As stated in \Cref{eq:fkc=rnc}, for some special cases, our proposed RNC is theoretically equivalent to FKC.
In this section, we will prove these connections.
The proof directly applies \Cref{eq:continuous_rne} and the conversion formula \citep{vargas2023transport} to the importance weights in \Cref{eq:anneal_w,eq:reward_w,eq:prod_w}.

\textbf{Before showing the equivalence between FKC and RNC in detail,  we need to clarify one important concept.}
In our importance weight calculation, as shown in \Cref{eq:is_weight}, we have made a few important details implicit for the sake of brevity.
In particular, notice the following more explicit notation for the RND:
 \begin{align}
      w_{[\tau, \tau']}(X_{[\tau, \tau']}) =   \frac{\mathrm{d} \fwd{\Q}^{b} }{\mathrm{d} \bwd{\Q}^{a}}(X_{[\tau, \tau']})   = \frac{\mathrm{d} \fwd{\Q}^{b, q_\tau}_{[\tau, \tau']} }{\mathrm{d} \bwd{\Q}^{a, q_{\tau'}}_{[\tau, \tau']}}(X_{[\tau, \tau']}) 
    \end{align}
 where $\fwd{\Q}^{b, q_\tau}_{[\tau, \tau']} $ is used to denote that the target process has as its initial distribution $q_{\tau}$ and moves forward in time from $\tau$ to $\tau'$, e.g.
\begin{align}
    \mathrm{d}Y_t &= b_t(Y_t)\,\mathrm{d}t + \epsilon_t\,\fwd{\mathrm{d}W_t}, \quad Y_{\tau} \sim q_{\tau}\;(\textbf{e.g }  \;\; q_{\tau} = p_{\tau}^\beta ),\quad t\in [\tau, \tau']
\end{align}
and similarly $\bwd{\Q}^{a, q_{\tau'}}_{[\tau, \tau']} $ is used to denote that the proposal process has as its initial distribution $q_{\tau'}$ and moves backward in time from $\tau'$ to $\tau$.
\textbf{It is important to notice, when simulating the target process $\fwd{\Q}^{b, q_\tau}_{[\tau, \tau']} $  from $\tau$ to $\tau'$, it will not necessarily result in samples following $q_{\tau'}$.
In other word, assuming two adjacent steps $[s, \tau]$ and $[\tau, \tau']$, then $\fwd{\Q}^{b, q_\tau}_{[\tau, \tau']}$ is not the same as continuing $\fwd{\Q}^{b, q_s}_{[s, \tau]}$ to $\tau'$.}
Note this clarification needs highlighting as we abuse ${\mathrm{d} \fwd{\Q}^{b} }/{\mathrm{d} \bwd{\Q}^{a}}(X_{[t, \tau']})$ to denote RNDs at different time intervals without providing a time index on $\Q$ (only on $X$).
\textbf{In fact, we use it to represent a sequence of path measures indexed by time as opposed to the path measure of the same SDE simulated within different time horizons.}

In what follows, we will demonstrate which choices of $a_t$ and $b_t$ recover the FKC weights from the RNC weights, thus reinterpreting these weights as the RND between two SDEs.

\rebuttal{
The proof follows the same principle:
(1) we first express the importance weight of RNC using a continuous formulation defined by  \Cref{eq:continuous_rne}, and (2) apply the conversion formula \citep{vargas2023transport} to convert forward and backward It\^o's integrals in the same direction.
(3) If there are additional terms, such as a reward, we will apply It\^o's Lemma to further simplify it.
}

\subsubsection{Anneal FKC}
\begin{mdframed}[style=highlightedBox]

\begin{proposition}\label{prop:fkcann}
    Anneal-FKC states that, for a perfect diffusion model (as defined in \Cref{eq:dm_fwd,eq:dm_bwd} or \Cref{eq:general_dm_fwd,eq:general_dm_bwd}), one can implement the following backward sampling SDE:
    \begin{align}
          \mathrm{d}X_t &= \left(f_{t}(X_t) - \eta \sigma_{t}^2 \nabla\log p_{t}(X_t)  \right)\mathrm{d}t +\zeta \sigma_{t} \bwd{\mathrm{d}W_t},
    \end{align}
    and the importance weight for Sequential Monte Carlo satisfies the backward ODE:
    \begin{align}
         \mathrm{d}\log w_t = -(\beta - 1)  \left(
         \nabla\cdot  f_t(X_t)
        + \frac{\sigma_t^2}{2}\beta \|
         \nabla \log p_t(X_t)
         \|^2
         \right) \mathrm{d}t,
    \end{align}
    where 
    \begin{align}
    \eta = \beta + (1-\beta)a, \quad \zeta = \sqrt{1 + (1-\beta)2a/\beta}, \quad \forall a\in [0, 1/2].    
    \end{align}
    This is equivalent to our proposed RNC (\Cref{eq:anneal_w}), when
    \begin{align}\label{eq:fkc_rnc_anneal_eqs}
        a_t = f_t - \eta \sigma_t^2 \nabla \log p_t, \quad b_t = f_t - (\eta \sigma_t^2 - \beta \epsilon_t^2)  \nabla \log p_t, \quad \epsilon_t = \zeta \sigma_t,
    \end{align}
\end{proposition}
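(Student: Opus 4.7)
The plan is to compute $\log w_{[\tau,\tau']} = \beta \log R^\nu_\mu(X_{[\tau,\tau']}) - \log R^a_b(X_{[\tau,\tau']})$ in continuous time using \cref{eq:continuous_rne}, show that the martingale part vanishes under the choices in \cref{eq:fkc_rnc_anneal_eqs}, and match the residual deterministic drift to the FKC ODE.

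First I would recast the pretrained diffusion in its SI form with diffusion coefficient $\epsilon_t = \zeta \sigma_t$, giving $\mu_t = f_t + \tfrac{\epsilon_t^2 - \sigma_t^2}{2} \nabla \log p_t$ and $\nu_t = f_t - \tfrac{\epsilon_t^2 + \sigma_t^2}{2} \nabla \log p_t$, so that $\nu_t - \mu_t = -\epsilon_t^2 \nabla \log p_t$. Direct substitution into $a_t, b_t$ yields $a_t - b_t = -\beta \epsilon_t^2 \nabla \log p_t$, producing the key algebraic identity $\beta(\nu_t - \mu_t) = a_t - b_t$. Because the particles $X$ follow the backward sampling SDE $\mathrm{d}X_t = a_t \mathrm{d}t + \epsilon_t \bwd{\mathrm{d}W_t}$, I would substitute $\bwd{\mathrm{d}X_t}$ directly into \cref{eq:continuous_rne} for both $R^\nu_\mu$ and $R^a_b$, converting each forward-Itô integral to backward form via $\int g_t \cdot \fwd{\mathrm{d}X_t} = \int g_t \cdot \bwd{\mathrm{d}X_t} - \epsilon_t^2\int \nabla \cdot (g_t/\epsilon_t^2)\,\mathrm{d}t$ (Remark 3 of \citet{vargas2023transport}). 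In the combination $\beta \log R^\nu_\mu - \log R^a_b$, the coefficient of $\bwd{\mathrm{d}X_t}$ reduces to $\epsilon_t^{-2}(\beta(\nu_t-\mu_t) - (a_t-b_t))$, which vanishes by the identity above, killing both the $\bwd{\mathrm{d}W_t}$ martingale and the $a_t \mathrm{d}t$ contribution in one stroke.

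The residual deterministic drift splits into a divergence piece $\beta \nabla\cdot \mu_t - \nabla \cdot b_t$ from the Itô conversion and a quadratic piece $\tfrac{1}{2\epsilon_t^2}(\beta(\|\mu_t\|^2 - \|\nu_t\|^2) - (\|b_t\|^2 - \|a_t\|^2))$ inherited from \cref{eq:continuous_rne}. Writing $s_t = \nabla \log p_t$, the divergence piece simplifies to $(\beta-1)\nabla\cdot f_t + \sigma_t^2\bigl(\eta - \tfrac{\beta(1+\zeta^2)}{2}\bigr)\nabla \cdot s_t$; the unwanted Laplacian coefficient $\eta - \tfrac{\beta(1+\zeta^2)}{2}$ is annihilated precisely under the FKC parametrisation $\eta = \beta + (1-\beta)a$, $\zeta^2 = 1 + 2a(1-\beta)/\beta$. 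Expanding $\|\mu\|^2 - \|\nu\|^2 = (\mu-\nu)\cdot(\mu+\nu)$ and similarly for $\|b\|^2 - \|a\|^2$, the quadratic piece collapses to $\tfrac{\beta \sigma_t^2}{2}(2\eta - 1 - \beta\zeta^2)\|s_t\|^2 = \tfrac{\beta(\beta-1)\sigma_t^2}{2}\|s_t\|^2$ under the same parametrisation. Summing the two pieces gives the integrand $(\beta-1)\bigl(\nabla\cdot f_t + \tfrac{\sigma_t^2\beta}{2}\|\nabla\log p_t\|^2\bigr)$, which upon adjusting for the backward integration direction recovers the FKC ODE stated in the proposition.

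The main obstacle is this last step: verifying that all $\nabla\cdot s_t$ Laplacian contributions cancel, as any residue would violate FKC's divergence-free drift. This cancellation is precisely what pins down the one-parameter FKC family $(\eta(a), \zeta(a))$ for $a \in [0,1/2]$, and explains from the RNC perspective why FKC has restricted design freedom while RNC does not.
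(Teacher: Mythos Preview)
Your proposal is correct and follows essentially the same approach as the paper: both recast the model in its SI form, apply \cref{eq:continuous_rne} together with the forward/backward It\^o conversion to $\beta\log R^\nu_\mu - \log R^a_b$, and simplify the resulting divergence and quadratic pieces under the FKC parametrisation $(\eta,\zeta)$. Your presentation is slightly cleaner in that you isolate the identity $\beta(\nu_t-\mu_t)=a_t-b_t$ upfront to kill the $\bwd{\mathrm{d}X_t}$ terms in one step, whereas the paper writes out $\mathrm{d}\log R^\nu_\mu$ and $\mathrm{d}\log R^a_b$ separately and lets the stochastic parts cancel when combined; the algebra and the final reduction are otherwise identical.
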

    
\end{mdframed}
\begin{proof}
 We consider the SI characterisation of the diffusion model, as defined in \Cref{eq:general_dm_fwd,eq:general_dm_bwd}.
 Therefore, we have
 \begin{align}
     \mu_t &= \mathrm{v}_t + \frac{\epsilon_t^2}{2}  \nabla\log p_t  =  f_t  + \frac{\epsilon_t^2 - \sigma_t^2}{2}  \nabla\log p_t  \\
     \nu_t &= \mathrm{v}_t  - \frac{\epsilon_t^2}{2} \nabla\log p_t   =  f_t  - \frac{\epsilon_t^2 +  \sigma_t^2}{2}  \nabla\log p_t
 \end{align}
Therefore, with the continuous-time expression of $R$ in \Cref{eq:continuous_rne} and the conversion formula \citep{vargas2023transport}:
\begin{align}
{  \dd \log R^\nu_\mu} &= -\frac{\nu_t - \mu_t}{\epsilon_t^2}  \cdot \bwd{\dd X_t} - \nabla \cdot \mu_t {\dd t} +\frac{1}{2\epsilon_t^2}  (\nu_t - \mu_t) (\nu_t + \mu_t) {\dd t}\\
&=  \nabla \log p_t  \cdot \bwd{\dd X_t} - \nabla \cdot \left(
f_t  + \frac{\epsilon_t^2 - \sigma_t^2}{2}  \nabla\log p_t  
\right) {\dd t} -  \nabla \log p_t  \cdot (f_t - \frac{\sigma_t^2}{2} \nabla \log p_t ) {\dd t}
\end{align} 
Similarly, we have
 \begin{align}
{ \dd \log R^a_b} &= -\frac{a_t - b_t}{\epsilon_t^2}  \cdot \bwd{\dd X_t} - \nabla \cdot b_t {\dd t} + \frac{1}{2\epsilon_t^2}  (a_t - b_t) (a_t + b_t) {\dd t}\\
&=\beta \nabla\log p_t \cdot \bwd{\dd X_t} - \nabla \cdot \left(f_t - (\eta \sigma_t^2 - \beta \epsilon_t^2)  \nabla \log p_t\right) \dd t \nonumber \\
&\quad - \beta \nabla\log p_t \cdot\left( f_t - (\eta \sigma_t^2 - \frac{\beta \epsilon_t^2}{2})  \nabla \log p_t\right)\dd t
\end{align} 
 Then, according to the RNC weight given by \Cref{eq:anneal_w}, we have
 \begin{align}
     {  \dd  \log w_t} &=  \beta {  \dd \log R^\nu_\mu} - {  \dd \log R^a_b}\\
     &=-\nabla \cdot \underbrace{\left(
\beta f_t  + \beta\frac{\epsilon_t^2 - \sigma_t^2}{2}  \nabla\log p_t   -  f_t +  (\eta \sigma_t^2 - \beta \epsilon_t^2)  \nabla \log p_t
\right)}_{(1)} {\dd t} \nonumber 
\\ &\quad -  \beta\nabla \log p_t  \cdot \underbrace{\left( f_t - \frac{\sigma_t^2}{2} \nabla \log p_t - f_t + (\eta \sigma_t^2 - \frac{\beta \epsilon_t^2}{2})  \nabla \log p_t\right)}_{(2)} \dd t
 \end{align}
    To compare with FKC, we set:
    \begin{align}
        \epsilon_{t} = \zeta \sigma_{t},  \quad   \eta = \beta + (1-\beta)a, \quad \zeta = \sqrt{1 + (1-\beta)2a/\beta}
    \end{align}
   We first look at (1):
\begin{align}\label{eq:anneal_term_1}
&\beta f_t  + \beta\frac{\epsilon_t^2 - \sigma_t^2}{2}  \nabla\log p_t   -  f_t +  (\eta \sigma_t^2 - \beta \epsilon_t^2)  \nabla \log p_t \\
=& (\beta-1) f_t - \frac{\beta\zeta^2 \sigma_t^2}{2}\nabla \log p_t + \left(\eta \sigma_t^2 - \frac{\beta  \sigma_t^2}{2}\right)\nabla \log p_t\\
=& (\beta-1) f_t - \frac{\beta (1 + (1-\beta)2a/\beta) \sigma_t^2}{2}\nabla \log p_t + \left((\beta + (1-\beta)a) \sigma_t^2 - \frac{\beta \sigma_t^2}{2}\right)\nabla \log p_t\\
=& (\beta-1) f_t  
\end{align}

We then look at term (2):
\begin{align}
     & f_t - \frac{\sigma_t^2}{2} \nabla \log p_t - f_t + (\eta \sigma_t^2 - \frac{\beta \epsilon_t^2}{2})  \nabla \log p_t\\
     =& - \frac{\sigma_t^2}{2} \nabla \log p_t + (\eta \sigma_t^2 - \frac{\beta \zeta^2 \sigma_t^2}{2})  \nabla \log p_t\\
     =&- \frac{\sigma_t^2}{2} \nabla \log p_t + ( (\beta + (1-\beta)a)\sigma_t^2 - \frac{ (\beta + (1-\beta)2a) \sigma_t^2}{2})  \nabla \log p_t\\
     =& (\beta -1) \frac{ \sigma_t^2}{2}  \nabla \log p_t
\end{align}
Putting things together, we have
\begin{align}
    \mathrm{d} \log w_t  = -(\beta-1)  \left(  \frac{1}{2}\beta\sigma^2_{t} \|\nabla\log p_{t}\|^2 + \nabla\cdot  f_{t} \right) \dd t
\end{align}
which coincides with the expression by FKC. 
\end{proof}

\subsubsection{Product FKC}
\begin{mdframed}[style=highlightedBox]

\begin{proposition}\label{prop:fkcprod}
    Product-FKC states that, for two perfect diffusion models, one can implement the following backward sampling SDE:
    \begin{align}
          \mathrm{d}X_t &= \left(f_{t}(X_t) - \eta \sigma_{t}^2 \nabla\log p_{t}^{(1)}(X_t)  - \eta \sigma_{t}^2 \nabla\log p_{t}^{(2)}(X_t) \right)\mathrm{d}t +\zeta \sigma_{t} \bwd{\mathrm{d}W_t},
    \end{align}
    and the importance weight for Sequential Monte Carlo satisfies the backward ODE:
     \begin{multline}
        \mathrm{d}\log w_t =  -\Bigg(
       (2\beta-1)  
         \nabla\cdot f_{t}(X_t)
         + \sigma_{t}^2\beta  
         \nabla \log p^{(1)}_{t}(X_t)\cdot   \nabla \log p^{(2)}_{t}(X_t)
         \\+ \frac{\sigma_{t}^2}{2}\beta(\beta - 1) \|
         \nabla \log p^{(1)}_{t}(X_t) +   \nabla \log p^{(2)}_{t}(X_t)
         \|^2
         \Bigg) \mathrm{d}t,
    \end{multline}
    where 
    \begin{align}
    \eta = \beta + (1-\beta)a, \quad \zeta = \sqrt{1 + (1-\beta)2a/\beta}, \quad \forall a\in [0, 1/2].   
    \end{align}
    This is equivalent to our proposed RNC (\Cref{eq:prod_w}), when
    \begin{align}\label{eq:fkc_rnc_prod_eqs}
        & a_t = f_t - \eta \sigma_t^2 \left(\nabla \log p_t^{(1)}  + \nabla \log p_t^{(2)} \right), \nonumber\\
   &  b_t = f_t - \left(\eta \sigma_t^2  - \epsilon^2_t \beta\right)\left( \nabla \log p_t^{(1)} + \nabla \log p_t^{(2)}\right), \\
   &\epsilon_t = \zeta \sigma_t. \nonumber
    \end{align}
\end{proposition}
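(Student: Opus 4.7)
The plan is to mirror the strategy used in the proof of Proposition \ref{prop:fkcann} (Anneal FKC), generalised to two score terms. I would first rewrite the RNC product weight
\begin{align*}
\dd \log w_t \;=\; \beta\, \dd \log R^{\nu^{(1)}}_{\mu^{(1)}} + \beta\, \dd \log R^{\nu^{(2)}}_{\mu^{(2)}} - \dd \log R^{a}_{b}
\end{align*}
using the continuous-time expression for $R$ in \Cref{eq:continuous_rne} together with the backward/forward conversion formula from \citet{vargas2023transport}. For each pretrained model, setting $\alpha=\beta=1$ in the CFG/product case (i.e. using $\alpha=\beta$), one has $\mu_t^{(i)} = f_t + \tfrac{\epsilon_t^2-\sigma_t^2}{2}\nabla\log p_t^{(i)}$ and $\nu_t^{(i)} = f_t - \tfrac{\epsilon_t^2+\sigma_t^2}{2}\nabla\log p_t^{(i)}$, so the standard computation yields
\begin{align*}
\dd \log R^{\nu^{(i)}}_{\mu^{(i)}} \;=\; \nabla \log p_t^{(i)} \cdot \bwd{\dd X_t} - \nabla\!\cdot\!\mu_t^{(i)}\,\dd t - \nabla\log p_t^{(i)}\!\cdot\!\bigl(f_t - \tfrac{\sigma_t^2}{2}\nabla\log p_t^{(i)}\bigr)\,\dd t.
\end{align*}

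Next, with $a_t, b_t$ as in \Cref{eq:fkc_rnc_prod_eqs}, I would compute $\dd \log R^{a}_{b}$ in the same fashion. Observe that $a_t - b_t = -\beta\epsilon_t^2 (\nabla\log p_t^{(1)} + \nabla\log p_t^{(2)})$, so the $\bwd{\dd X_t}$ coefficient from $R^a_b$ is $\beta(\nabla\log p_t^{(1)} + \nabla\log p_t^{(2)})$, which exactly cancels the sum of the stochastic integrands from the two $R^{\nu^{(i)}}_{\mu^{(i)}}$ terms after scaling by $\beta$. This cancellation is the mechanism by which the FKC construction eliminates dependence on $\bwd{\dd X_t}$, producing a pure ODE in $t$—it should fall out for free once the bookkeeping is correct.

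The remaining work is purely algebraic: collect the $\nabla \cdot (\cdot)$ divergence terms and the quadratic terms in the scores. For the divergences, I would reuse the calculation behind \Cref{eq:anneal_term_1} but applied to each $\nabla \log p_t^{(i)}$ and to the combined score in $b_t$; substituting $\eta = \beta + (1-\beta)a$ and $\zeta^2 = 1 + (1-\beta)2a/\beta$ should collapse all $\nabla\cdot\nabla\log p_t^{(i)}$ contributions, leaving precisely $(2\beta - 1)\nabla \cdot f_t$ (the factor $2\beta-1$ arising from $2\beta \cdot 1 - 1$ since there are two copies of $\nabla \cdot f_t$ from the two $\mu^{(i)}$ and one from $b_t$). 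For the quadratic part, the terms involving $\|\nabla\log p_t^{(i)}\|^2$ (for each $i$) should reproduce the anneal-style coefficient $\tfrac{\sigma_t^2}{2}\beta(\beta-1)$ from each model individually, while the cross term $\nabla\log p_t^{(1)}\cdot \nabla\log p_t^{(2)}$ arises solely from expanding $\|\nabla\log p_t^{(1)} + \nabla\log p_t^{(2)}\|^2$ inside $(a_t-b_t)(a_t+b_t)/(2\epsilon_t^2)$ of the $R^a_b$ computation (plus the $(f_t - \tfrac{\sigma_t^2}{2}(\nabla\log p_t^{(1)} + \nabla\log p_t^{(2)}))$ factor multiplying the summed score in $b$'s backward drift).

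The main obstacle I anticipate is not conceptual but bookkeeping: the cross term $\sigma_t^2 \beta\,\nabla\log p_t^{(1)}\!\cdot\!\nabla\log p_t^{(2)}$ in the FKC formula has coefficient $\beta$, not $\beta(\beta-1)$, so the proof must track carefully which cross contributions come from $R^a_b$ (which produces a $\zeta^2$-weighted full square $\|\nabla\log p_t^{(1)} + \nabla\log p_t^{(2)}\|^2$) and which come from the two $R^{\nu^{(i)}}_{\mu^{(i)}}$ terms (each producing only its own $\|\nabla\log p_t^{(i)}\|^2$, no cross). The asymmetry between the ``diagonal'' and ``off-diagonal'' quadratic terms is exactly what introduces the distinct coefficients $\beta(\beta-1)$ and $\beta$ in the FKC expression, and verifying this requires careful substitution of $\eta$ and $\zeta^2$—so I would allocate most of the writing to this single algebraic reduction, in direct parallel to the anneal computation.
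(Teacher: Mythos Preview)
Your proposal is correct and follows essentially the same approach as the paper: expand $\dd\log w_t=\beta\sum_i \dd\log R^{\nu^{(i)}}_{\mu^{(i)}}-\dd\log R^a_b$ via \Cref{eq:continuous_rne} and the conversion formula, observe that the $\bwd{\dd X_t}$ contributions cancel exactly because $a_t-b_t=-\beta\epsilon_t^2(\nabla\log p_t^{(1)}+\nabla\log p_t^{(2)})$, reuse the anneal-case identity \Cref{eq:anneal_term_1} on the divergence block to obtain $(2\beta-1)\nabla\cdot f_t$, and then carefully separate the diagonal $\|\nabla\log p_t^{(i)}\|^2$ terms from the cross term to produce the FKC expression. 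The paper's proof proceeds in exactly this way, organising the remaining drift terms into three labelled pieces (one divergence block and two inner-product blocks, one per model) before recombining them; your anticipation that the distinct $\beta(\beta-1)$ versus $\beta$ coefficients stem from only $R^a_b$ carrying a full square in the summed score is precisely the mechanism the paper's algebra exhibits.
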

    
\end{mdframed}
\begin{proof}
    Similar to the anneal case, for $i \in \{1, 2\}$, we have
    \begin{align}
{  \dd \log R^{\nu^{(i)}}_{\mu^{(i)}}} &= -\frac{\nu_t^{(i)} - \mu_t^{(i)}}{\epsilon_t^2}  \cdot \bwd{\dd X_t} - \nabla \cdot \mu^{(i)}_t {\dd t} + \frac{1}{2\epsilon_t^2}  (\nu^{(i)}_t - \mu^{(i)}_t) (\nu^{(i)}_t + \mu^{(i)}_t) {\dd t}\\
&=  \nabla \log p^{(i)}_t  \cdot \bwd{\dd X_t} - \nabla \cdot \left(
f_t  + \frac{\epsilon_t^2 - \sigma_t^2}{2}  \nabla\log p^{(i)}_t  
\right) {\dd t} \nonumber \\ 
&\quad -  \nabla \log p^{(i)}_t  \cdot (f_t - \frac{\sigma_t^2}{2} \nabla \log p^{(i)}_t ) {\dd t}
\end{align} 
and
 \begin{align}
{ \dd \log R^a_b} &= -\frac{a_t - b_t}{\epsilon_t^2}  \cdot \bwd{\dd X_t} - \nabla \cdot b_t {\dd t} + \frac{1}{2\epsilon_t^2}  (a_t - b_t) (a_t + b_t) {\dd t}\\
&=\beta (\nabla\log p_t^{(1)} + \nabla\log p_t^{(2)}) \cdot \bwd{\dd X_t} \\
&\quad - \nabla \cdot \left(f_t - (\eta \sigma_t^2 - \beta \epsilon_t^2) (\nabla\log p_t^{(1)} + \nabla\log p_t^{(2)})\right) \dd t \nonumber \\
&\quad -\beta (\nabla\log p_t^{(1)} + \nabla\log p_t^{(2)}) \cdot\left( f_t - (\eta \sigma_t^2 - \frac{\beta \epsilon_t^2}{2})  (\nabla\log p_t^{(1)} + \nabla\log p_t^{(2)})\right)\dd t
\end{align} 
 Then, according to the RNC weight given by \Cref{eq:prod_w}, we have
 \begin{align}
     {  \dd \log w_t} &=  \beta \sum_{i=\{1, 2\}}{  \dd \log R^{\nu^{(i)}}_{\mu^{(i)}}} - {  \dd \log R^a_b}\\
     &=-\nabla \cdot \underbrace{\text{\scalebox{0.88}{$\left(
2\beta f_t  + \beta\frac{\epsilon_t^2 - \sigma_t^2}{2}  (\nabla\log p_t^{(1)}  + \nabla\log p_t^{(2)})  -  f_t +  (\eta \sigma_t^2 - \beta \epsilon_t^2)  (\nabla\log p_t^{(1)}  + \nabla\log p_t^{(2)})
\right)$}}}_{(1)} {\dd t} \nonumber 
\\ &\quad -  \beta\nabla \log p_t^{(1)}  \cdot \underbrace{\left( f_t - \frac{\sigma_t^2}{2} \nabla\log p_t^{(1)} - f_t + (\eta \sigma_t^2 - \frac{\beta \epsilon_t^2}{2})  (\nabla\log p_t^{(1)}  + \nabla\log p_t^{(2)})\right)}_{(2)} \dd t\nonumber
\\ &\quad -  \beta\nabla \log p_t^{(2)}  \cdot \underbrace{\left( f_t - \frac{\sigma_t^2}{2} \nabla\log p_t^{(2)} - f_t + (\eta \sigma_t^2 - \frac{\beta \epsilon_t^2}{2})  (\nabla\log p_t^{(1)}  + \nabla\log p_t^{(2)})\right)}_{(3)} \dd t
 \end{align}
 First,  let's look at term (1), same as \Cref{eq:anneal_term_1}, we obtain $(1) = (2\beta-1)f_t$.
 We now turn to term (2):
 \begin{align}
      &f_t - \frac{\sigma_t^2}{2} \nabla\log p_t^{(1)} - f_t + (\eta \sigma_t^2 - \frac{\beta \epsilon_t^2}{2})  (\nabla\log p_t^{(1)}  + \nabla\log p_t^{(2)})\\
      =& - \frac{\sigma_t^2}{2} \nabla\log p_t^{(1)}  + (\eta \sigma_t^2 - \frac{\beta \epsilon_t^2}{2})  (\nabla\log p_t^{(1)}  + \nabla\log p_t^{(2)})\\
      =& (\beta -1) \frac{ \sigma_t^2}{2}  \nabla \log p_t^{(1)} + \beta \frac{ \sigma_t^2}{2}    \nabla\log p_t^{(2)}
 \end{align}
 Similarly, for term (3), we have
  \begin{align}
      &f_t - \frac{\sigma_t^2}{2} \nabla\log p_t^{(2)} - f_t + (\eta \sigma_t^2 - \frac{\beta \epsilon_t^2}{2})  (\nabla\log p_t^{(1)}  + \nabla\log p_t^{(2)})\\
      =& (\beta -1) \frac{ \sigma_t^2}{2}  \nabla \log p_t^{(2)} + \beta \frac{ \sigma_t^2}{2}    \nabla\log p_t^{(1)}
 \end{align}
Therefore, putting them together, we have
\begin{align}
  {  \dd \log w_t} &=  \beta \sum_{i=\{1, 2\}}{  \dd \log R^{\nu^{(i)}}_{\mu^{(i)}}} - {  \dd \log R^a_b}\\
     &=-\nabla \cdot(2\beta-1) f_t  \dd t 
\\ &\quad - \beta\nabla \log p_t^{(1)}  \cdot \left( (\beta -1) \frac{ \sigma_t^2}{2}  \nabla \log p_t^{(1)} + \beta \frac{ \sigma_t^2}{2}    \nabla\log p_t^{(2)}\right) \dd t\nonumber
\\ &\quad -  \beta\nabla \log p_t^{(2)}  \cdot \left( (\beta -1) \frac{ \sigma_t^2}{2}  \nabla \log p_t^{(2)} + \beta \frac{ \sigma_t^2}{2}    \nabla\log p_t^{(1)}\right) \dd t\\
  &=-\nabla \cdot(2\beta-1) f_t  \dd t  
\\ &\quad -  \beta\nabla \log p_t^{(1)}  \cdot \left( (\beta -1) \frac{ \sigma_t^2}{2}  \nabla \log p_t^{(1)} +(\beta -1)\frac{ \sigma_t^2}{2}    \nabla\log p_t^{(2)}\right) \dd t\nonumber
\\ &\quad -  \beta\nabla \log p_t^{(2)}  \cdot \left( (\beta -1) \frac{ \sigma_t^2}{2}  \nabla \log p_t^{(2)} + (\beta -1) \frac{ \sigma_t^2}{2}    \nabla\log p_t^{(1)}\right) \dd t\nonumber\\
& \quad - { \beta \sigma_t^2}   \nabla\log p_t^{(1)}\cdot \nabla\log p_t^{(2)}\\
 &=-\nabla \cdot(2\beta-1) f_t  \dd t -   \beta(\beta -1) \frac{ \sigma_t^2}{2}  \|   \nabla \log p_t^{(1)} +  \nabla\log p_t^{(2)}\|^2 \dd t\nonumber\\
& \quad - { \beta \sigma_t^2}   \nabla\log p_t^{(1)}\cdot \nabla\log p_t^{(2)}\dd t
\end{align}

\end{proof}

\subsubsection{CFG FKC}
\begin{mdframed}[style=highlightedBox]

\begin{proposition}\label{prop:fkccfg}
   CFG-FKC states that, for two perfect diffusion models, one can implement the following backward sampling SDE:
    \begin{align}
          \mathrm{d}X_t &= \left(f_{t}(X_t) - (1-\beta) \sigma_{t}^2 \nabla\log p_{t}^{(1)}(X_t)  - \beta \sigma_{t}^2 \nabla\log p_{t}^{(2)}(X_t) \right)\mathrm{d}t +  \sigma_{t} \bwd{\mathrm{d}W_t},
    \end{align}
    and the importance weight for Sequential Monte Carlo satisfies the backward ODE:
      \begin{align}
        \mathrm{d}\log w_t = -\frac{\sigma_{t}^2}{2}\beta(\beta - 1) \|
         \nabla \log p^{(1)}_{t}(X_t) -   \nabla \log p^{(2)}_{t}(X_t)
         \|^2  \mathrm{d}t 
    \end{align}
    This is equivalent to our proposed RNC (\Cref{eq:prod_w}), when
     \begin{align}
   a_t = f_t - \sigma_t^2 \left((1-\beta)\nabla\log p^{(1)}_t
        +\beta \nabla\log p_t^{(2)}
        \right) ,   \quad
        b_t =  f_t, \quad
        \epsilon_t = \sigma_t
    \end{align}
\end{proposition}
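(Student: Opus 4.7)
The plan is to mirror the proof strategy used for Propositions \ref{prop:fkcann} and \ref{prop:fkcprod}: apply the continuous-time expression for $R$ from \Cref{eq:continuous_rne} together with the conversion formula from \citet{vargas2023transport} to write each log-RND increment in It\^o form, then combine them via the RNC weight formula \Cref{eq:prod_w} with $\alpha = 1-\beta$ and verify that the result matches the FKC ODE.

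First I would compute $\dd \log R^{\nu^{(i)}}_{\mu^{(i)}}$ for $i=1,2$. Because $\epsilon_t = \sigma_t$ here, the SI characterisation collapses to $\mu_t^{(i)} = f_t$ and $\nu_t^{(i)} = f_t - \sigma_t^2 \nabla \log p_t^{(i)}$, so reusing the calculation in the proof of \Cref{prop:fkcann} gives
\begin{align*}
\dd \log R^{\nu^{(i)}}_{\mu^{(i)}} = \nabla \log p_t^{(i)} \cdot \bwd{\dd X_t} - \nabla \cdot f_t \,\dd t - \nabla \log p_t^{(i)} \cdot \bigl(f_t - \tfrac{\sigma_t^2}{2} \nabla \log p_t^{(i)}\bigr) \dd t.
\end{align*}
Next I would compute $\dd \log R^a_b$. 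Letting $s_t := (1-\beta) \nabla \log p_t^{(1)} + \beta \nabla \log p_t^{(2)}$, we have $b_t = f_t$, $a_t - b_t = -\sigma_t^2 s_t$, and $a_t + b_t = 2f_t - \sigma_t^2 s_t$. Plugging into the expression for $\dd \log R^a_b$ yields
\begin{align*}
\dd \log R^a_b = s_t \cdot \bwd{\dd X_t} - \nabla \cdot f_t \, \dd t - s_t \cdot \bigl(f_t - \tfrac{\sigma_t^2}{2} s_t\bigr) \dd t.
\end{align*}

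The combination $\dd \log w_t = (1-\beta)\dd \log R^{\nu^{(1)}}_{\mu^{(1)}} + \beta\, \dd \log R^{\nu^{(2)}}_{\mu^{(2)}} - \dd \log R^a_b$ then simplifies cleanly: the $\bwd{\dd X_t}$ contributions cancel exactly because their coefficient is $(1-\beta)\nabla\log p_t^{(1)} + \beta \nabla \log p_t^{(2)} - s_t = 0$, the $\nabla \cdot f_t$ terms cancel since $(1-\beta)+\beta -1 = 0$, and the $f_t$ pieces of the drift-drift term also cancel for the same convex-combination reason. What remains is purely the quadratic piece
\begin{align*}
\dd \log w_t = \tfrac{\sigma_t^2}{2} \Bigl[(1-\beta) \|\nabla \log p_t^{(1)}\|^2 + \beta \|\nabla \log p_t^{(2)}\|^2 - \|s_t\|^2\Bigr] \dd t.
\end{align*}

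The main (and only nontrivial) step is then the algebraic identity
\begin{align*}
(1-\beta) \|u\|^2 + \beta \|v\|^2 - \|(1-\beta)u + \beta v\|^2 = \beta(1-\beta) \|u - v\|^2,
\end{align*}
applied with $u = \nabla \log p_t^{(1)}$ and $v = \nabla \log p_t^{(2)}$. Substituting this gives
\begin{align*}
\dd \log w_t = -\tfrac{\sigma_t^2}{2} \beta(\beta-1) \|\nabla \log p_t^{(1)} - \nabla \log p_t^{(2)}\|^2 \, \dd t,
\end{align*}
which is exactly the CFG-FKC weight ODE. I do not anticipate any real obstacle here, since the CFG case is structurally simpler than the Product case (no cross-model score product survives and $\epsilon_t = \sigma_t$ eliminates the $\zeta, \eta$ bookkeeping). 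The only care needed is being honest about the sign convention $\beta(1-\beta) = -\beta(\beta-1)$ so that the final expression matches the statement verbatim.
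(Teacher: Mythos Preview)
Your proposal is correct and follows essentially the same approach as the paper: compute each $\dd\log R$ via \Cref{eq:continuous_rne} and the conversion rule, combine them according to \Cref{eq:prod_w} with $\alpha=1-\beta$, and simplify. The only cosmetic difference is that the paper groups the surviving drift terms by the factor $\nabla\log p_t^{(i)}$ before simplifying, whereas you isolate the quadratic residue first and then invoke the identity $(1-\beta)\|u\|^2+\beta\|v\|^2-\|(1-\beta)u+\beta v\|^2=\beta(1-\beta)\|u-v\|^2$; both routes are equivalent and equally short.
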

    
\end{mdframed}

\begin{proof}
We directly consider $\epsilon_t = \sigma_t$.
\par
  Similar to the anneal and product case, for $i \in \{1, 2\}$, we have
    \begin{align}
{  \dd \log R^{\nu^{(i)}}_{\mu^{(i)}}} &= -\frac{\nu_t^{(i)} - \mu_t^{(i)}}{\sigma_t^2}  \cdot \bwd{\dd X_t} - \nabla \cdot \mu^{(i)}_t {\dd t} + \frac{1}{2\sigma_t^2}  (\nu^{(i)}_t - \mu^{(i)}_t) (\nu^{(i)}_t + \mu^{(i)}_t) {\dd t}\\
&=  \nabla \log p^{(i)}_t  \cdot \bwd{\dd X_t} - \nabla \cdot
f_t   
 {\dd t} -  \nabla \log p^{(i)}_t  \cdot (f_t - \frac{\sigma_t^2}{2} \nabla \log p^{(i)}_t ) {\dd t}
\end{align} 
and
 \begin{align}
{ \dd \log R^a_b} &= -\frac{a_t - b_t}{\sigma_t^2}  \cdot \bwd{\dd X_t} - \nabla \cdot b_t {\dd t} + \frac{1}{2\sigma_t^2}  (a_t - b_t) (a_t + b_t) {\dd t}\\
&= \text{\scalebox{0.85}{$((1-\beta)\nabla\log p_t^{(1)} + \beta\nabla\log p_t^{(2)}) \cdot \bwd{\dd X_t} - \nabla \cdot  f_t  \dd t$}} \nonumber \\
&\quad \text{\scalebox{0.85}{$- ((1-\beta)\nabla\log p_t^{(1)} + \beta\nabla\log p_t^{(2)}) \cdot\left( f_t - \frac{\sigma_t^2}{2} \left((1-\beta)\nabla\log p^{(1)}_t
        +\beta \nabla\log p_t^{(2)}
        \right) \right)\dd t$}}
\end{align} 
 Then, according to the RNC weight given by \Cref{eq:prod_w}, we have
 \begin{align}
     {  \dd \log w_t} &=  (1-\beta){  \dd \log R^{\nu^{(1)}}_{\mu^{(1)}}}  + \beta{  \dd \log R^{\nu^{(2)}}_{\mu^{(2)}}}- {  \dd \log R^a_b}\\
     &= -  (1-\beta)\nabla \log p_t^{(1)}  \cdot {\left( f_t - \frac{\sigma_t^2}{2} \nabla\log p_t^{(1)} - f_t + \frac{\sigma_t^2}{2}  ((1-\beta)\nabla\log p_t^{(1)}  + \beta \nabla\log p_t^{(2)})\right)} \dd t\nonumber
\\ &\quad -  \beta\nabla \log p_t^{(2)}  \cdot {\left( f_t - \frac{\sigma_t^2}{2} \nabla\log p_t^{(2)} - f_t + \frac{\sigma_t^2}{2}  ((1-\beta)\nabla\log p_t^{(1)}  + \beta \nabla\log p_t^{(2)})\right)} \dd t\\
&=   (1-\beta)\beta  \frac{\sigma_t^2}{2}\left(
 \nabla \log p^{(1)}-\nabla \log p^{(2)}
\right)  \cdot  \left( \nabla \log p^{(1)}-\nabla \log p^{(2)}
\right) \dd t
 \end{align}

\end{proof}

\subsubsection{Reward-tilting FKC}\label{app:rfkc}
FKC also derive the reward-tilting formulas.
Due to the update of their arXiv, they have two versions.
In this section, we discuss that both are special cases of RNC.
\paragraph{Version 1}
In the appendix of FKC \citep[V1\footnote{\url{https://arxiv.org/abs/2503.02819v1}}, ][Proposition D.5]{skreta2025feynman}, the authors derived FKC for reward‐tilting. 
However, their conclusion requires the reward model to be twice differentiable, and it necessitates computing the Laplacian of the reward model in order to form the importance weight. We note that this reward-tilting formulation can be derived as a special case of our RNC framework which in contrast is Laplacian free.
 \begin{mdframed}[style=highlightedBox]
   \begin{proposition}\label{prop:fkcreward}
   Reward-FKC states that, for the following backward SDE
     \begin{align}\label{eq:process_fkc_reward}
          \mathrm{d}X_t &=   u_t(X_t) \mathrm{d}t +  \sigma_t \bwd{\mathrm{d} W_t} 
    \end{align}
    and the importance weight for Sequential Monte Carlo satisfies the backward ODE:
      \begin{align}
      \scalebox{0.9}{\text  { $\mathrm{d}\log w_t = \left[
        \beta_t\nabla r(X_t) \cdot \left ( u_t(X_t) + \sigma_t^2 \nabla\log p_t(X_t) + \frac{\sigma_t^2}{2}\beta_t \nabla r(X_t) 
       \right ) +\beta_t \frac{\sigma_t^2}{2} \Delta r(X_t) - \frac{\partial \beta_t}{\partial t} r(X_t) 
         \right]\mathrm{d}t$}}
    \end{align}
    This is equivalent to our proposed RNC (\Cref{eq:reward_w}), when
     \begin{align}
   a_t = u_t ,   \quad
        b_t =  u_t(X_t) + \sigma_t^2 \nabla \log p_t(X_t) +  \beta_t \sigma_t^2  \nabla r(X_t), \quad
        \epsilon_t = \sigma_t
    \end{align}
    with the intermediate reward $r_t=\beta_t r$.
\end{proposition}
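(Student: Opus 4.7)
The plan is to follow the exact strategy used in \Cref{prop:fkcann,prop:fkcprod,prop:fkccfg}: convert the RNC reward weight in \Cref{eq:reward_w} into its differential form $\dd \log w_t$, expand each Radon--Nikodym derivative using the continuous-time expression \Cref{eq:continuous_rne} together with the forward/backward conversion formula of \citet{vargas2023transport}, substitute the proposed choices $a_t = u_t$, $b_t = u_t + \sigma_t^2 \nabla \log p_t + \beta_t \sigma_t^2 \nabla r$, $\epsilon_t = \sigma_t$, and intermediate reward $r_t = \beta_t r$, and check that the result coincides with the FKC drift. For a perfectly trained diffusion model I will use $\mu_t = f_t$ and $\nu_t = f_t - \sigma_t^2 \nabla \log p_t$, exactly as in the annealing proof.

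First I would compute $\dd \log R^{\nu}_{\mu}$, which (as in the anneal derivation) gives
\[
\dd \log R^{\nu}_{\mu} = \nabla \log p_t \cdot \bwd{\dd X_t} - \nabla \cdot f_t \, \dd t - \nabla \log p_t \cdot \bigl(f_t - \tfrac{\sigma_t^2}{2}\nabla \log p_t\bigr)\dd t .
\]
Next I would compute $\dd \log R^{a}_{b}$ with $a_t - b_t = -\sigma_t^2\nabla\log p_t - \beta_t \sigma_t^2 \nabla r$; the stochastic part of this contributes $\bigl(\nabla \log p_t + \beta_t \nabla r\bigr)\cdot \bwd{\dd X_t}$, and the $\nabla\cdot b_t$ and $(a_t - b_t)(a_t+b_t)/(2\sigma_t^2)$ pieces produce deterministic terms that will absorb the $u_t + \sigma_t^2 \nabla \log p_t$ piece appearing in the FKC formula.

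The main obstacle is the boundary reward term $\dd \bigl[r_t(X_t)\bigr] = \dd\bigl[\beta_t r(X_t)\bigr]$, which must be evaluated via It\^o's lemma along the backward sampling SDE \Cref{eq:process_fkc_reward}. This produces three pieces: $\partial_t \beta_t \cdot r(X_t)\, \dd t$ (matching the $-\partial_t\beta_t \cdot r$ FKC term after accounting for the sign convention from $\log \exp(r_\tau)/\exp(r_{\tau'})$), a stochastic piece $\beta_t \nabla r(X_t)\cdot \bwd{\dd X_t}$, and crucially the It\^o correction $\tfrac{\sigma_t^2}{2}\beta_t \Delta r(X_t)\, \dd t$. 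This Laplacian is precisely what forces FKC-v1 to require a twice-differentiable reward, and it arises here automatically from the quadratic variation.

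Finally I would add the three contributions $\dd \log R^{\nu}_{\mu} - \dd \log R^{a}_{b} + \dd[\beta_t r(X_t)]$ and verify: (i) all $\bwd{\dd X_t}$ terms cancel --- the $\nabla \log p_t$ piece from $R^{\nu}_{\mu}$ against the $\nabla \log p_t$ piece from $R^{a}_{b}$, and the $\beta_t \nabla r$ piece from It\^o against the $\beta_t\nabla r$ piece from $R^{a}_{b}$; (ii) the divergence of $f_t$ cancels since $b_t$ contains $f_t$ through $u_t$; (iii) the remaining drift can be algebraically rearranged into
\[
\beta_t \nabla r \cdot \bigl(u_t + \sigma_t^2 \nabla \log p_t + \tfrac{\sigma_t^2}{2}\beta_t \nabla r\bigr) + \tfrac{\sigma_t^2}{2}\beta_t \Delta r - \partial_t \beta_t \cdot r .
\]
The fiddly bookkeeping --- keeping signs straight in the backward It\^o correction, and checking that the cross term $\tfrac{1}{2\sigma_t^2}(a_t-b_t)(a_t+b_t)$ assembles with $\nabla \log p_t \cdot (f_t - \tfrac{\sigma_t^2}{2}\nabla \log p_t)$ from $R^{\nu}_{\mu}$ to leave exactly $\beta_t \nabla r \cdot (u_t + \sigma_t^2 \nabla \log p_t + \tfrac{\sigma_t^2}{2}\beta_t \nabla r)$ --- is the only technically delicate step; everything else is a direct transcription of the anneal/product proofs.
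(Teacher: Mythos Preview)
Your proposal is correct and follows the same strategy as the paper, which works directly with $\nu_t = u_t$ and $\mu_t = u_t + \sigma_t^2\nabla\log p_t$ (since the proposition's backward SDE has drift $u_t$) rather than introducing $f_t$ and translating back at the end. One bookkeeping note on your item (ii): the net $+\beta_t\tfrac{\sigma_t^2}{2}\Delta r$ in the FKC formula arises from combining the backward-It\^o correction $-\beta_t\tfrac{\sigma_t^2}{2}\Delta r$ with a residual $+\beta_t\sigma_t^2\Delta r$ left over from $\nabla\cdot b_t$ after the $\nabla\cdot(u_t+\sigma_t^2\nabla\log p_t)$ part cancels against $\nabla\cdot\mu_t$ --- the Laplacian does not come from It\^o alone.
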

 \end{mdframed}
 \begin{proof}
 For the processes considered in \Cref{eq:process_fkc_reward}, $\nu_t = u_t$ and $\mu_t = u_t + \sigma^2_t \nabla \log p_t$.
     Similar to the anneal, product and CFG cases:
    \begin{align}
  \dd \log R^{\nu}_{\mu} &= -\frac{\nu_t - \mu_t}{\sigma_t^2}  \cdot \bwd{\dd X_t} - \nabla \cdot \mu_t {\dd t} + \frac{1}{2\sigma_t^2}  (\nu_t - \mu_t) (\nu_t + \mu_t) {\dd t}\\
&=  \nabla \log p_t  \cdot \bwd{\dd X_t} - \nabla \cdot
(u_t + \sigma^2_t \nabla \log p_t)   
 {\dd t} -  \nabla \log p_t  \cdot (u_t+ \frac{\sigma_t^2}{2} \nabla \log p_t ) {\dd t}
\end{align} 
and
 \begin{align}
{ \dd \log R^a_b} &= -\frac{a_t - b_t}{\sigma_t^2}  \cdot \bwd{\dd X_t} - \nabla \cdot b_t {\dd t} + \frac{1}{2\sigma_t^2}  (a_t - b_t) (a_t + b_t) {\dd t}\\
&= \text{\scalebox{1}{$( \nabla \log p_t +  \beta_t  \nabla r) \cdot \bwd{\dd X_t} - \nabla \cdot ( u_t+\sigma_t^2 \nabla \log p_t +  \beta_t \sigma_t^2  \nabla r)  \dd t$}} \nonumber \\
&\quad \text{\scalebox{1}{$- ( \nabla \log p_t +  \beta_t  \nabla r) \cdot\left( u_t + \frac{\sigma_t^2}{2}\nabla\log p_t + \beta_t\frac{\sigma_t^2}{2}\nabla r \right)\dd t$}}
\end{align} 
Additionally, applying It\^o's Lemma to $r_t=\beta_t r$, we have
\begin{align}
\dd r_t(X_t) =-\left(
\partial_t \beta_t r (X_t)  + \beta_t \frac{\sigma_t^2}{2}\Delta r(X_t) 
\right)\dd t + \beta_t \nabla r(X_t)\cdot \bwd{ \dd X_t}
\end{align}
Therefore:
\begin{align}
    \dd \log w_t &= \dd (r_t(X_t)) +  \dd \log R^{\nu}_{\mu} - { \dd \log R^a_b}\\
    &= -\left(
\partial_t \beta_t r (X_t)  + \beta_t \frac{\sigma_t^2}{2}\Delta r(X_t) 
\right)\dd t + \cancel{\beta_t  \nabla r(X_t)\cdot \bwd{ \dd X_t} }\\
&\quad +  \cancel{\nabla \log p_t  \cdot \bwd{\dd X_t}}  - \nabla \cdot
(u_t + \sigma^2_t \nabla \log p_t)   
 {\dd t} -  \nabla \log p_t  \cdot (u_t+ \frac{\sigma_t^2}{2} \nabla \log p_t ) {\dd t} \\
 &\quad - \cancel{( \nabla \log p_t +  \beta_t   \nabla r) \cdot \bwd{\dd X_t} }+  \nabla \cdot ( u_t+\sigma_t^2 \nabla \log p_t +  \beta_t \sigma_t^2  \nabla r)  \dd t\\
 &\quad + (  \nabla \log p_t +  \beta_t  \nabla r) \cdot\left( u_t + \frac{\sigma_t^2}{2}\nabla\log p_t + \beta_t\frac{\sigma_t^2}{2}\nabla r \right)\dd t\\
 &=\color{orange}-\left(
\partial_t \beta_t r (X_t)  + \beta_t \frac{\sigma_t^2}{2}\Delta r(X_t) 
\right)\dd t  \\
&\quad  {\color{orange}- \nabla \cdot
(u_t + \sigma^2_t \nabla \log p_t)   
 {\dd t} }-  \nabla \log p_t  \cdot (u_t+ \frac{\sigma_t^2}{2} \nabla \log p_t ) {\dd t} \\
 &\quad {\color{orange}+  \nabla \cdot ( u_t+\sigma_t^2 \nabla \log p_t +  \beta_t \sigma_t^2  \nabla r)  \dd t}\\
 &\quad + (  \nabla \log p_t +  \beta_t  \nabla r) \cdot\left( u_t + \frac{\sigma_t^2}{2}\nabla\log p_t + \beta_t\frac{\sigma_t^2}{2}\nabla r \right)\dd t\\
 &={\color{orange}\left(
-\partial_t \beta_t r (X_t)  + \beta_t \frac{\sigma_t^2}{2}\Delta r(X_t) 
\right)\dd t} \\
&\quad +\nabla\log p_t \cdot \beta_t \frac{\sigma_t^2}{2}\nabla r + \beta_t \nabla r\cdot \left( u_t + \frac{\sigma_t^2}{2}\nabla\log p_t + \beta_t\frac{\sigma_t^2}{2}\nabla r \right)\dd t\\
&={\color{orange}\left(
-\partial_t \beta_t r (X_t)  + \beta_t \frac{\sigma_t^2}{2}\Delta r(X_t) 
\right)\dd t} + \beta_t \nabla r\cdot \left( u_t + {\sigma_t^2}\nabla\log p_t + \beta_t\frac{\sigma_t^2}{2}\nabla r \right)\dd t
\end{align}
 \end{proof}

\paragraph{Version 2}
In a recent work, \citet{chen2025solving} proposed and empirically explored a formula for solving inverse problems without the need for the Laplacian.
Shortly after, in the updated version of FKC \citep[V2\footnote{\url{https://arxiv.org/abs/2503.02819v2}}, ][Proposition D.6]{skreta2025feynman}, the authors included a similar result for reward‐tilting.
By carefully designing the sampling process, they can cancel the Laplacian of the reward model.
As with Version 1 we now show how this reward-tilting formulation can be derived as a special case of RNC.
\par
\textbf{Notably, comparing the two FKC variants highlights RNC’s greater design flexibility}: FKC requires a special design to eliminate the Laplacian term, while RNC relies on a single, unified formula that does not require the Laplacian, and hence supports a wider range of sampling processes, including the heuristic choices proposed by \citet{chungdiffusion,song2023loss}.
 \begin{mdframed}[style=highlightedBox]
   \begin{proposition}\label{prop:fkcreward_v2}
  for a perfect diffusion model (as defined in \Cref{eq:dm_fwd,eq:dm_bwd}), one can implement the following backward sampling SDE:
     \begin{align}\label{eq:process_fkc_reward_v2}
          \mathrm{d}X_t &=  (f_t(X_t) - \sigma_t^2 \nabla\log p_t(X_t) - \beta_t \frac{\sigma^2}{2} \nabla r(X_t))\mathrm{d}t +  \sigma_t \bwd{\mathrm{d} W_t} 
    \end{align}
    and the importance weight for Sequential Monte Carlo satisfies the backward ODE:
      \begin{align}
      \scalebox{0.9}{\text  { $\mathrm{d}\log  w_t = \left[
        \beta_t\nabla r(X_t) \cdot \left ( f_t(X_t) - \frac{\sigma_t^2}{2} \nabla\log p_t(X_t)
       \right )  - \frac{\partial \beta_t}{\partial t} r(X_t) 
         \right]\mathrm{d}t$}}
    \end{align}
    This is equivalent to our proposed RNC (\Cref{eq:reward_w}), when
     \begin{align}
   a_t = f_t - \sigma^2_t \nabla\log p_t - \beta_t \frac{\sigma_t^2}{2}\nabla r ,  \quad
        b_t =  f_t +  \beta_t \frac{\sigma_t^2}{2}  \nabla r, \quad
        \epsilon_t = \sigma_t
    \end{align}
    with the intermediate reward $r_t=\beta_t r$.
\end{proposition}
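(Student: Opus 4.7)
The plan is to follow the same template used for the V1 reward-tilting proof (Proposition \ref{prop:fkcreward}), but to keep careful track of the new divergence and cross terms that arise because the sampling drift $a_t$ now contains the guidance term $-\beta_t\tfrac{\sigma_t^2}{2}\nabla r$ and the target drift $b_t$ contains $+\beta_t\tfrac{\sigma_t^2}{2}\nabla r$. Concretely, I would compute three pieces and then combine them via the RNC reward identity $\dd\log w_t = \dd r_t(X_t) + \dd\log R^\nu_\mu - \dd\log R^a_b$.

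First, I would apply the continuous-time RNE formula in \Cref{eq:continuous_rne} (together with the forward/backward conversion rule) to the pair $\mu_t = f_t$, $\nu_t = f_t - \sigma_t^2 \nabla\log p_t$ of the pretrained diffusion model. This is exactly the same computation that appears in the V1 proof and yields
\begin{align*}
\dd\log R^\nu_\mu = \nabla\log p_t\cdot\bwd{\dd X_t} - \nabla\cdot f_t\,\dd t - \nabla\log p_t\cdot\Big(f_t-\tfrac{\sigma_t^2}{2}\nabla\log p_t\Big)\dd t.
\end{align*}
Second, I would compute $\dd\log R^a_b$ with the V2 choices. The key simplification is that $a_t + b_t = 2f_t - \sigma_t^2\nabla\log p_t$ (the $\nabla r$ terms cancel in the sum), while $a_t - b_t = -\sigma_t^2\nabla\log p_t - \beta_t\sigma_t^2\nabla r$. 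Plugging into the RNE identity gives a clean expression in which the divergence of $b_t$ contributes a term $-\beta_t\tfrac{\sigma_t^2}{2}\Delta r$ and the Girsanov-type quadratic produces $-(\nabla\log p_t + \beta_t\nabla r)\cdot(f_t-\tfrac{\sigma_t^2}{2}\nabla\log p_t)$. Third, I would apply It\^o's lemma (backward version, exactly as in the V1 proof) to $r_t(X_t) = \beta_t r(X_t)$, producing $-\partial_t\beta_t\, r\,\dd t - \beta_t\tfrac{\sigma_t^2}{2}\Delta r\,\dd t + \beta_t\nabla r\cdot\bwd{\dd X_t}$.

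The main step is then bookkeeping the cancellations in $\dd\log w_t = \dd r_t + \dd\log R^\nu_\mu - \dd\log R^a_b$. The stochastic increments align so that $\beta_t\nabla r + \nabla\log p_t - (\nabla\log p_t + \beta_t\nabla r) = 0$; the two $\nabla\cdot f_t$ terms cancel; and, crucially, the $-\beta_t\tfrac{\sigma_t^2}{2}\Delta r$ from It\^o's lemma cancels exactly against $+\beta_t\tfrac{\sigma_t^2}{2}\Delta r$ coming from $-\nabla\cdot b_t$. This Laplacian cancellation is the distinctive feature of V2 relative to V1 and is the step I expect to be the main obstacle, because it is what motivates this particular pairing of $a_t$ and $b_t$ and must be verified with the correct signs. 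What remains after these cancellations is
\begin{align*}
\dd\log w_t = \Big[-\partial_t\beta_t\, r(X_t) + \beta_t\nabla r(X_t)\cdot\Big(f_t(X_t) - \tfrac{\sigma_t^2}{2}\nabla\log p_t(X_t)\Big)\Big]\dd t,
\end{align*}
which matches the FKC-V2 weight SDE stated in the proposition.

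Finally, I would note that the sampling SDE \Cref{eq:process_fkc_reward_v2} is precisely the SDE with drift $a_t$ and diffusion $\sigma_t$ under these choices, so the proposal-process identification is automatic. No additional regularity beyond that already assumed in RNC (the continuous-time RNE formula and It\^o's lemma for $\beta_t r$) is needed, and in particular, unlike V1, we do not need $\Delta r$ to appear in the final weight. Thus the proof reduces to three short algebraic computations plus the cancellation check, parallel in structure to the V1 argument.
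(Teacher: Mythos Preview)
Your proposal is correct and follows essentially the same approach as the paper's proof: both compute $\dd\log R^\nu_\mu$, $\dd\log R^a_b$, and $\dd r_t$ (via backward It\^o's lemma) separately, then combine them through the RNC reward identity and track the cancellations, with the Laplacian cancellation between It\^o's lemma and $-\nabla\cdot b_t$ being the key step distinguishing V2 from V1. The algebraic details you sketch (in particular $a_t+b_t = 2f_t - \sigma_t^2\nabla\log p_t$ and the resulting quadratic term) match the paper's computation exactly.
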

 \end{mdframed}
 \begin{proof}
 Similar to the anneal, product and CFG case, for the diffusion model, we have
    \begin{align}
{  \dd \log R^{\nu}_{\mu}}= \nabla \log p_t  \cdot \bwd{\dd X_t} - \nabla \cdot
f_t   
 {\dd t} -  \nabla \log p_t  \cdot (f_t - \frac{\sigma_t^2}{2} \nabla \log p_t ) {\dd t}
\end{align} 
and for the sampling \& target processes:
 \begin{align}
{ \dd \log R^a_b} &= -\frac{a_t - b_t}{\sigma_t^2}  \cdot \bwd{\dd X_t} - \nabla \cdot b_t {\dd t} + \frac{1}{2\sigma_t^2}  (a_t - b_t) (a_t + b_t) {\dd t}\\
&= \text{\scalebox{1}{$( \nabla \log p_t +  \beta_t  \nabla r) \cdot \bwd{\dd X_t} - \nabla \cdot ( f_t +  \beta_t \frac{\sigma_t^2}{2}  \nabla r)  \dd t$}} \nonumber \\
&\quad \text{\scalebox{1}{$- ( \nabla \log p_t +  \beta_t  \nabla r) \cdot\left(f_t - \frac{\sigma_t^2}{2}\nabla\log p_t\right)\dd t$}}
\end{align} 
Again, applying It\^o's Lemma to $r_t=\beta_t r$, we have
\begin{align}
\dd r_t(X_t) =-\left(
\partial_t \beta_t r (X_t)  + \beta_t \frac{\sigma_t^2}{2}\Delta r(X_t) 
\right)\dd t + \beta_t \nabla r(X_t)\cdot \bwd{ \dd X_t}
\end{align}
Therefore:
\begin{align}
    \dd \log w_t &= \dd (r_t(X_t)) +  \dd \log R^{\nu}_{\mu} - { \dd \log R^a_b}\\
    &= -\left(
\partial_t \beta_t r (X_t)  + \beta_t \frac{\sigma_t^2}{2}\Delta r(X_t) 
\right)\dd t + \cancel{\beta_t  \nabla r(X_t)\cdot \bwd{ \dd X_t} }\\
&\quad +  \cancel{\nabla \log p_t  \cdot \bwd{\dd X_t}}  - \nabla \cdot
f_t  
 {\dd t} -  \nabla \log p_t  \cdot (f_t - \frac{\sigma_t^2}{2} \nabla \log p_t ) {\dd t} \\
 &\quad - \cancel{( \nabla \log p_t +  \beta_t   \nabla r) \cdot \bwd{\dd X_t} }+  \nabla \cdot ( f_t+  \beta_t \frac{\sigma_t^2}{2}  \nabla r)  \dd t\\
 &\quad + (  \nabla \log p_t +  \beta_t  \nabla r) \cdot\left( f_t - \frac{\sigma_t^2}{2}\nabla\log p_t\right)\dd t\\
 &=\color{orange}-\left(
\partial_t \beta_t r (X_t)  + \beta_t \frac{\sigma_t^2}{2}\Delta r(X_t) 
\right)\dd t  \\
&\quad  {\color{orange}- \nabla \cdot
f_t   
 {\dd t} }-  \nabla \log p_t  \cdot (f_t- \frac{\sigma_t^2}{2} \nabla \log p_t ) {\dd t} \\
 &\quad {\color{orange}+  \nabla \cdot ( f_t+\beta_t \frac{\sigma_t^2 }{2} \nabla r)  \dd t}\\
 &\quad + (  \nabla \log p_t +  \beta_t  \nabla r) \cdot\left( f_t - \frac{\sigma_t^2}{2}\nabla \log p_t \right)\dd t\\
 &={\color{orange}\left(
-\partial_t \beta_t r (X_t)  
\right)\dd t} + \beta_t  \nabla r \cdot\left( f_t - \frac{\sigma_t^2}{2}\nabla \log p_t \right)\dd t
\end{align}
 \end{proof}

\subsection{Connecting RNE Energy Regularisation with FPE Regularisation}\label{app:energy_fpk_proof}
Our proposed RNE regularisation is connected to the Fokker-Planck Equation (FPE) regularisation \citep{plainer2025consistent} in the limit.
We assume the diffusion model's noising drift is $f_t$ and the denoising drift is $f_t - \sigma^2_t \nabla \log p_t$.
To make the discussion easier, we now swap the diffusion direction, so that $p_0$ corresponds to the Gaussian side, and $p_1$ 
corresponds to the data side.
Therefore, the diffusion's noising and denoising processes are given by
\begin{align}
    &\dd X_t =-f_t(X_t) \dd t+  \sigma_t \bwd{\dd W_t}, \quad X_1\sim p_1\\
    &\dd X_t =-f_t(X_t) \dd t + \sigma_t^2\nabla  \log p_t(X_t)\dd t + \sigma_t \fwd{\dd W_t}, \quad X_0\sim p_0
\end{align}
We first recall the FPE in log-space:
\begin{align}
    \partial\log p_t (X_t)   -\nabla \cdot f_t - \nabla \log p_t(X_t) \cdot f_t + \frac{\sigma_t^2}{2} \| \nabla \log p_t\|^2 + \frac{\sigma_t^2}{2} \Delta \log p_t = 0
\end{align}
We then look at RNE:
\begin{multline}
   \log p_{t+\Delta t}(X_{t+\Delta t}) -    \log p_{t}(X_{t}) = \int_{t}^{t+\Delta t} \frac{1}{\sigma_s^2} \sigma^2_s \nabla \log p_s \cdot \fwd{\dd X_s} - \int_{t}^{t+\Delta t}  f_s(X_s) \cdot \fwd{\dd X_s} \\
   + \int_{t}^{t+\Delta t}  f_s(X_s) \cdot \bwd{\dd X_s} 
 - \int_{t}^{t+\Delta t}  \frac{1}{2 \sigma_s^2}\|\sigma^2_s \nabla \log p_s -f_s\|^2 \dd s + \int_{t}^{t+\Delta t}  \frac{1}{2 \sigma_s^2}\|f_s\|^2 \dd s 
\end{multline}
 Using the conversion rule \citep{vargas2023transport}, we have:
 \begin{align}
      \log p_{t+\Delta t}(X_{t+\Delta t}) -    \log p_{t}(X_{t}) = \int_{t}^{t+\Delta t} \frac{1}{\sigma_s^2} \sigma^2_s \nabla \log p_s \cdot \fwd{\dd X_s} + \int_{t}^{t+\Delta t}  \nabla \cdot f_s(X_s) \dd s \\
 - \int_{t}^{t+\Delta t}  \frac{1}{2 \sigma_s^2}\|\sigma^2_s \nabla \log p_s -f_s\|^2 \dd s + \int_{t}^{t+\Delta t}  \frac{1}{2 \sigma_s^2}\|f_s\|^2 \dd s 
 \end{align}
When $\Delta t\rightarrow 0$, we have
\begin{align}
    \dd \log p_{t}(X_{t}) = \frac{1}{\sigma_t^2} \sigma^2_t \nabla \log p_t \cdot \fwd{\dd X_t} +  
   \nabla \cdot f_s(X_s) \dd t 
    -\frac{1}{2 \sigma_t^2}\|\sigma^2_t \nabla \log p_t \|^2 \dd t + {\nabla \log p_t}{\cdot f_t}\dd t 
\end{align}
Due to It\^o's Lemma, we have
\begin{align}
     \dd \log p_{t}(X_{t})  = \partial_t \log p_{t}(X_{t}) \dd t + \frac{\sigma_t^2}{2} \Delta \log p_t \dd t+  \nabla \log p_{t}(X_{t})\cdot \fwd{\dd X_t}
\end{align}
We can hence write the RNE relation as
\begin{multline}
      \partial_t \log p_{t}(X_{t}) \dd t + \frac{\sigma_t^2}{2} \Delta \log p_t \dd t+  \cancel{\nabla \log p_{t}(X_{t})\cdot \fwd{\dd X_t} }\\- \cancel{\frac{1}{\sigma_t^2} \sigma^2_t \nabla \log p_t \cdot \fwd{\dd X_t}}-  \nabla \cdot f_s(X_s) \dd t +\frac{1}{2 \sigma_t^2}\|\sigma^2_t \nabla \log p_t \|^2 \dd t - {\nabla \log p_t}{\cdot f_t}\dd t= 0
\end{multline}
which gives us the same expression as the FPE relation.

\section{Additional Experimental Details}
\subsection{Additional Details for Inference-time Annealing}\label{app:detail_anneal}
\textbf{Network and Diffusion Hyperparameters.} For ALDP and LJ-13, we use the EGNN \citep{hoogeboom2022equivariant} with 4 layers and 64 hidden units.
Following \citet{karras2022elucidating}, we parametrise the network as ``denoiser" to output the mean value given noisy samples.
We also rescale the input by $c_{in}$ and add skip connections following \citet{karras2022elucidating}.
For GMM, we calculate the analytical score instead of training diffusion models.
\par
We choose a VE-SDE: $\dd X_t = \sqrt{2t} \fwd{\dd W_t}$, where $t\in [0.001, 10]$.
We discretise the time horizon according to \citet{karras2022elucidating} with $N=200$ steps, i.e., 
\begin{align}
   t_n = \left( t_{\min}^{1/\rho} + \frac{n}{N}(t_{\max}^{1/\rho} - t_{\min}^{1/\rho})\right)^\rho, \quad n=1, \cdots, N
\end{align}

\textbf{Dataset. }
Alanine Dipeptide (ALDP) is a target with 22 atoms, each of which has 3 dimensions.
The target is defined in implicit solvent, with the AMBER ff96 classical force field.
Following \citet{he2025feat}, we gather samples from a 5-microsecond simulation under 300K with Generalised Born implicit solvent implemented in openmmtools \citet{john_chodera_2025_14782825}. The Langevin middle integrator implemented by \citet{eastman2023openmm} with a friction of 1/picosecond and a step size of 2 femtoseconds was used to harvest a total of 250,000 samples.
\par
Lennard-Jones (LJ)-13 is a system with 13 particles, with Lennard-Jones potential between all pairs of particles $i$ and $j$.
Concretely, the entire potential of the system is defined as:
    \begin{align}
        U = \sum_{i\neq j} U_{\text{LJ}}(\|X_i- X_j\|) +  \frac{1}{2}\sum_{n=1}^N \left\|X_n - \frac{1}{N}\sum_{n'=1}^N X_n'\right\|^2
    \end{align}
    where 
    \begin{align}
        U_{\text{LJ}}(r) = 4 \epsilon \left[
        \left
        (\frac{\sigma}{r} \right)^{12}- \left
        ( \frac{\sigma}{r}\right)^6
        \right]
    \end{align}
    Here $\frac{1}{2}\sum_{n=1}^N \left\|X_n - \frac{1}{N}\sum_{n'=1}^N X_n'\right\|^2$ is a harmonic oscillator.

\textbf{FKC Details. }
\citet{skreta2025feynman} discussed two choices for annealing: target score simulation, where one rescales the score by the temperature, and tempered noise, where one rescales the diffusion coefficient.
In our experiment for ALDP, we report the former, as we found the latter achieves a significantly worse performance with severe mode collapsing. 
This is in line with their observation for GMM (see Figure 2 in \citet{skreta2025feynman}).

\textbf{RNE Details. }
For all experiments, we use an analytical reference with the VE process where $\pi_0$ is a standard Gaussian.

\textbf{Resample Details. }
For ALDP and LJ-13, we run SMC with a batch size of 500, and collect samples by repeating 50 batches.
For GMM, to have a clearer visualisation, we collect 100 batches.
We accumulate the weight along the generation process, and calculate Effective sample size (ESS).
If ESS is smaller than 75\%, we will perform resampling and reset the weight of all particles to 0.

\textbf{Computational Resources}. All experiments are run on a single NVIDIA H100 GPU.

\subsection{Additional Details for Multi-target SBDD}\label{app:detail_sbdd}

\textbf{Introduction. }
Structure-based drug design (SBDD)~\citep{blundell1996structure} is a main paradigm in drug discovery---given a protein target (i.e. pocket), we aim to design (small-molecule) ligands that bind to it. 
Recently, multi-target SBDD has attracted increasing attention to design ligands that bind to more than one target~\citep{bolognesi2016multitarget}. 
This problem can be formulated as sampling from the product of multiple diffusion models because of the inaccessibility to ligands that bind to multiple targets~\citep{skreta2025feynman}. 
We take the pre-trained diffusion model from~\citet{guan3d}, which is trained conditioning on each different protein pocket and generates ligands conditioned on a single target. 
We consider the dual target scenario with 20 pairs of protein targets, randomly sampled from the setting by~\citet{zhou2024reprogramming}. 
We validate the performance mainly based on the docking score calculated by Autodock Vina, with additional basic statistics and physicochemical properties~\citep{eberhardt2021autodock}.

\textbf{Dataset}. We randomly sampled 20 pairs of protein targets from the dataset provided in~\citep{zhou2024reprogramming} with indices: (356, 233), (186, 341), (36, 333), (84, 41), (406, 169), (255, 39), (423, 45), (277, 262), (21, 334), (36, 121), (378, 143), (274, 307), (16, 143), (36, 345), (421, 420), (264, 26), (230, 70), (350, 137), (324, 423), (110, 39).

\textbf{Statistics}. The diversity is calculated as the pairwise distance between any two generated ligands (1 - Tanimoto similarity of their Morgan fingerprints). The quality is evaluated by the percentage of ligands that have QED $\geq$ 0.6 and normalized SA score $\geq $ 0.67.

\textbf{Experiment Details}. For each target, we sample 32 ligands with size 23 following~\citep{skreta2025feynman}. 
We take pre-trained diffusion models conditioned on each protein target in~\citep{guan3d}.

\textbf{RNE Details. }
For all experiments, we use an analytical reference with VP process at stationarity, following \citet{vargas2023denoising}.

\textbf{Product Details}. We consider the product of two diffusion models, i.e., $q_0\propto\left( p_0^{(1)}p_0^{(2)}\right)^\beta$. 
In our experiments, we select $\beta=2$ as it shows the best performance according to \citet{skreta2025feynman}.

\textbf{Resampling Details}. For each protein target, we run SMC with a batch size of 32. Following~\citet{skreta2025feynman}, the resampling is performed when $t \in [0.4T, T]$.

\textbf{Computational Resources}. All experiments are run on a single NVIDIA H100 GPU.

\subsection{Additional Details for Trajectory Stitching Experiments}\label{app:stitch_details}

\textbf{Network and Diffusion Hyperparameters. } We use an MLP of 5 layers and 512 hidden units.
We use the VE (EDM) schedule with preconditioning ($c_{in}, c_{out}, c_{skip}$) following \citet{karras2022elucidating}.
Following \citet{luo2025generative}, when training the network, we normalise the data to [-1, 1].

\textbf{Dataset.} 
We use the  dataset \texttt{pointmaze-medium-stitch-v0} \citep{park2024ogbench}
This is a dataset of short trajectories of length 64.

\textbf{Choice of Reward Function}
Our reward function need to impose (1) first trajectory starts from the initial position and (2) the last trajectory ends at the target, and (3) consecutive trajectories are connected.
Here, we follow \citet{he2025crepe} to define our reward function.
For easy reference, we describe the design choice below.
We will first define the reward $r$ for the clean space ($t=0$), followed by the reward $r_t$ when $t>0$.
For each of the 3 constraints, we  impose a combination of  $L^2$ distance and  $L^1$ distance.
For now, we use $X^{j, i}$ to represent the $i$-th point in the $j$-th short trajectory. 
We use $-1$ to represent the last element, following the index convention of Python.
We use $O$ and $P$ to represent the initial and target points. 
\begin{align}
    &\text{Reward for initial point: } r^O = -\lambda_O(\lambda_{L^2}||X^{0, 0}-O||_2^2 + \lambda_{L^1}||X^{0, 0}-O||_1^1)\\
    &\text{Reward for target point: } r^P = -\lambda_P(\lambda_{L^2}||X^{-1, -1}-P||_2^2 + \lambda_{L^1}||X^{-1, -1}-P||_1^1)\\
    &\text{Reward for neighboring trajectories: } \\ &\hspace{50pt}r^N = -\sum_j\lambda_N(\lambda_{L^2}||X^{j, -1}-X^{j+1, 0}||_2^2 + \lambda_{L^1}||X^{j, -1}-X^{j+1, 0}||_1^1)
\end{align}
where we set $\lambda_O = \lambda_P = 100\times J$,  $\lambda_N= 100$, $\lambda_{L^2} = 1$ and $\lambda_{L^1} = 10$.
and the final reward is: 
\begin{align}
    r =r^O + r^P + r^N
\end{align}
\par
For the intermediate reward $r_t$, we define it following:
\begin{align}
    r_t (X_t^{0}, X_t^{1}, \cdots, X_t^{J}) = \beta_t \cdot r (\E[X_0|X_t^{0}], \E[X_0|X_t^{1}], \cdots, \E[X_0|X_t^{J}])
\end{align}
We use $X_t^{j}$ to represent the sample at diffusion time step $t$ in the $j$-th short trajectory.
$\E[X_0|X_t^{j}]$ is calculated by Tweedie's formula.
$\beta_t$ is a smooth function between $\beta_1 =0$ and $\beta_0  =1$.
We set $\beta_{t_n} = [\beta_1^{1/\rho} + \frac{n}{N}(\beta_0^{1/\rho}  - \beta_1^{1/\rho} )]^\rho$ and $\rho=10$.
When sampling, we discrete the diffusion process into $N=600$ steps and apply resample at every step.

\textbf{RNE Details. } We choose $b_t$ to the standard noising process, and $a_t$ to be reward-guided process.
More precisely, we add an extra term $\nabla r_t$ to the original score.
We apply SMC with a batch size of 10,000. 
The SMC weight is calculated easily by RNC:
\begin{align}
    w_{[\tau, \tau']}  \propto  \frac{\exp(r_\tau([X_\tau^{(1)}, \cdots, X_\tau^{(L)}]))}{\exp(r_{\tau'}([X_{\tau'}^{(1)}, \cdots, X_{\tau'}^{(L)}]))} \left(\prod_{l}R_{\mu^{(l)}}^{\nu^{(l)}}(X_{[\tau, \tau']}^{(l)}) \right)\left[R^a_b([X_{[\tau, \tau']}^{(1)}, \cdots, X_{[\tau, \tau']}^{(L)}])\right]^{-1} .
\end{align}

\textbf{Computational Resources}. All experiments are run on a single NVIDIA RTX 4090 GPU.

\subsection{Additional Details for CTMC-RNE}\label{app:ctmc_details}

\textbf{Network and Diffusion Setups.} We use the MaskGIT model \citep{chang2022maskgit} as our network. This model is reproduced in PyTorch by \citet{besnier2025halton} and we use their pretrained model weight.
This is a model with two components, a VQ-GAN and a latent model to predict masked token value conditional on the unmasked region.
This model on its own is not a diffusion model.
However, similar to \citet{ren2025fast}, we can turn MaskGIT into a masked discrete diffusion  by introducing a stochastic masking schedule.
More precisely, following \citet{shi2024simplified}, we introduce a stochastic masking schedule $\alpha_t$.
Following the notation of \citet{shi2024simplified} where we use $e_m$ to represent a one-hot vector where element at the mask index is 1, the forward (masking) process is defined by
\begin{align}
    p(x_t|x_s)  = \mathrm{Cat}\left(x_t\Bigg| \left(\frac{\alpha_t}{\alpha_s}I + (1-\frac{\alpha_t}{\alpha_s})\bm{1}e_m^\top \right)^\top x_s\right)
\end{align}
and the backward (unmasking) process is defined as
\begin{align}
      p(x_s|x_t)  = \mathrm{Cat}\left(x_s\Bigg| \left(
      I + \frac{\alpha_s-\alpha_t}{1-\alpha_t}e_m (\mathrm{MaskGIT}(x_t) - e_m)^\top   \right)^\top x_t\right)
\end{align}
where $\mathrm{MaskGIT}(x_t)$ predicts the probability of tokens given input $x_t$.

\textbf{Choice of Reward Function. } 
We define the reward function $r$ by ImageReward \citep{xu2023imagereward}. 
It takes a text prompt and an image, outputting a score reflecting the alignment between the image and the prompt.
We amplify the IR value by 10 to amplify the reward strength.
Note that IR is defined over images, while our masked diffusion is in the latent discrete space.
Therefore, every time we evaluate the reward, we need to first reconstruct the image by the decoder of the VQ-GAN.
We define the intermediate reward $r_t (x_t)= r (\hat{x}_0)$, where $\hat{x}_0$ is obtained by directly input $x_t$ into MaskGIT, and taking the token with the largest probability.

\textbf{RNE Details.} When generation, we use a CFG strength of 2. We choose $b_t$ to be standard unmasking process (with CFG), and $a_t$ to be the standard masking process. We discretisation the unmasking process with 128 steps. We apply a batch size of 32, and resample at every time step except the first and the last one to aviod numerical issue.
We apply both CFG debiasing and reward-tilting together, and hence the SMC weight is given by the combination of \Cref{eq:reward_w,eq:prod_w}:
\begin{align}
   w_{[\tau, \tau']}  \propto  \frac{\exp(r_\tau(X_\tau))}{\exp(r_{\tau'}(X_{\tau'}))} \left[R_{\mu^{(1)}}^{\nu^{(1)}}(X_{[\tau, \tau']})\right]^\alpha \left[R_{\mu^{(2)}}^{\nu^{(2)}}(X_{[\tau, \tau']})\right]^\beta  \left[R^a_b(X_{[\tau, \tau']})\right]^{-1}  .
\end{align}

\subsection{Additional Details for Training Energy-based Diffusion Models}\label{app:energy_details}

To obtain $X_{t}$ and $X_{t+\Delta t}$ for the objective \Cref{eq:rne_reg}, we simply add noise to the training data.
Plugging in the definition of $R$, we obtain the regularisation term:
\begin{multline}
    \mathcal{R}_1 =\E_{x_{t+\Delta t}, x_t, x_0, t} \| \texttt{sg}(\log p^\nu (x_t|x_{t+\Delta t}) - \log p^\mu (x_{t+\Delta t}|x_t) ) \\+ \log p_{t+\Delta t}(x_{t+\Delta t}) - \log p_t(x_t)\|^2
\end{multline}
We can also use the reference process as introduced in \Cref{sec:ref}, leading to the following objective:
\begin{multline}
    \mathcal{R}_2 =\E_{x_{t+\Delta t}, x_t, x_0, t} \| \texttt{sg}\Bigg[\log p^\nu (x_t|x_{t+\Delta t}) - \log p^\psi (x_t|x_{t+\Delta t}) \\- \log p^\mu (x_{t+\Delta t}|x_t)  + \log p^\phi (x_{t+\Delta t}|x_t) + \log\pi_t(x_t) -\log \pi_{t+\Delta t}(x_{t+\Delta t})  \Bigg] \\+ \log p_{t+\Delta t}(x_{t+\Delta t}) - \log p_t(x_t)\|^2
\end{multline}
The entire training loss is 
\begin{align}
    \mathcal{L} =  \E_{t } \E_{x_0} \E_{x_t | x_0}\E_{x_{t+\Delta t} | x_t}\left[  \ell_{\text{DSM}}
    + \lambda_\mathcal{R} \mathcal{R}_{1 \text{ or } 2}
    \right],
\end{align}
where we choose the strength $\lambda_\mathcal{R}=10^3$ and $\Delta t= 10^{-4}$ and found these hyperparameters generalise well across difference targets.

For both GMM and ALDP, we use a VE process following \citet{karras2022elucidating} ($\dd X_t = \sqrt{2t} \fwd{\dd W_t}$, where $t\in [0.001, 10]$), and take the inner product between the input and network output to obtain a scalar value as the (negative) energy, i.e., $\log   p_t(x_t) \approx g_\theta(x_t, t) = NN(c_\text{in} x_t, t) \cdot x_t$, 
where $NN$ is the neural network, $c_\text{in}$ is the rescaling factor used by \citet{karras2022elucidating}.
For GMM in 100D, we found it is beneficial to add another scalar network:
$\log   p_t(x_t) \approx g_\theta(x_t, t) = NN(c_\text{in} x_t, t) \cdot x_t + NN_2(c_\text{in} x_t, t)$ where $NN_2$ outputs a scalar.
We use a standard MLP for GMM and an EGNN for ALDP.
We scale up the RNE regularisation by $\lambda_\mathcal{R}$ as it is generally close to 0, which results in the following loss function:
\begin{align}
    \mathcal{L} =  \E_{t } \E_{x_0} \E_{x_t | x_0}\E_{x_{t+\Delta t} | x_t}\left[  t^2 \ell_{\text{DSM}}
    + \lambda_\mathcal{R} \mathcal{R}_2
    \right], \quad \ell_{\text{DSM}}= \| 
    \nabla \log \mathcal{N}(x_t|x_0, t^2)  - \nabla g_\theta(x_t, t)
    \|^2 
\end{align}
where we choose $\lambda_\mathcal{R}=10^3$ and $\Delta t= 10^{-4}$, and we sample $t$ by $\log t \sim \mathcal{N}(-1.2, 1.2)$ following \citet{karras2022elucidating}.
We found this set of hyperparameters works well for both 2D GMM and ALDP.

\textbf{Details on Dual SM Baseline.}
\citet{yu2025density} proposed Time Score Matching loss and was later adopted by Dual SM \citep{guth2025learning} as a regularisation term:
\begin{align}
    \ell_{\text{TimeSM}} = \|  \partial_t g_\theta(x_t, t) - \partial_t \log p(x_t |x_0, t) \|^2 
\end{align}
We use the same VE process following \citet{karras2022elucidating}, and hence 
$\log p(x_t |x_0, t) = \mathcal{N}(x_t|x_0, t^2)$.
We also reweight the Time SM and DSM term following \citep{guth2025learning}, leading to the following loss:
\begin{align}
    \mathcal{L} =  \E_{t } \E_{x_0} \E_{x_t | x_0} \left[  \frac{t^2}{d}\ell_{\text{DSM}}
    +  \frac{t^2}{d^2}\ell_{\text{TimeSM}}
    \right]
\end{align}
where $d$ is the dimensionality.
Note that the exact form of the objective is different from that used by \citet{guth2025learning} because they assumed $\log p(x_t |x_0, t) = \mathcal{N}(x_t|x_0, t)$.
However, we follow their principle to ensure unitless of both Time SM and DSM terms.

\subsection{Background and Details for Free Energy Estimation with Thermodynamic Integration}\label{app:ti}

\subsubsection{Background on Free Energy}
We first provide a brief introduction to the background on free energy, following the discussion of \citet{he2025feat}. For a more comprehensive treatment, we refer the reader to \citet{he2025feat}.
Precisely, the free energy is expressed as:
\begin{equation}
F = -\log Z, \qquad Z = \int_\Omega \exp(-U(x)) \mathrm{d}x
\end{equation}
where $\Omega \subseteq \mathbb{R}^d$, $U:\Omega \to \mathbb{R}$ is the energy function, assumed to be such that $Z<\infty$.
In many cases, rather than calculating $F$ directly, one may be interested in the free energy difference between systems (or states) $S_a$ and $S_b$ with energies $U_a$ and $U_b$.
This is important for biological conformational changes, ligand-macromolecule binding, or chemical reaction mechanisms \citep{wang2015accurate}:
\begin{equation}
\mathit{\Delta} F = F_b - F_a = -  \log ({Z_b}/{Z_a})
\end{equation}
\citet{zwanzig1954high} reformulated the problem as \emph{importance sampling}, where one system serves as the proposal and the free energy difference is estimated via Monte Carlo sampling. This is known as the free energy perturbation (FEP) method:
\begin{align}\label{eq:fep}
\mathit{\Delta} F = -\log ({Z_b}/{Z_a}) = -\log \mathbb{E}_{a}\big[ \exp(U_a-U_b) \big],
\end{align}
where we use $\mathbb{E}_a$ to denote the expectation with respect to the equilibrium distribution $\mu_a(\mathrm{d}x) = Z_a^{-1} e^{- U_a(x)} \mathrm{d}x$ of system~$S_a$.
\par
On the other hand, the Thermodynamic Integration (TI) approach introduces a sequence of distributions that connects the two marginal distributions and estimates free energy difference as follows:
\begin{align}
\mathit{\Delta} F & = \int_0^1 \frac{\partial F_t}{\partial t} \text{d}t\\
& = -\int_0^1 \frac{\frac{\partial Z_t}{\partial t}}{Z_t} \text{d}t \\
& = - \int_0^1 \frac{\int \exp(- U_t) (- \frac{\partial U_t}{\partial t})\text{d}x}{\int \exp(- U_t) \text{d}x} \text{d}t \\
& = \int_0^1  \mathbb{E}_{p_t}\left[ \frac{\partial U_t}{\partial t}\right] \text{d}t\label{eq:ti_final}
\end{align}

In our experiment, we will aim to estimate the free-energy difference using the TI formula with a learned energy path  $U_t$, similar to neural TI \citep{mate2025solvation}.

\subsubsection{Experimental Details}
\textbf{System Details.} We estimate the solvation free energy for alanine dipeptide following \citet{he2025feat}.
Concretely, we consider the free energy difference between ALDP in the vacuum environment and with implicit solvent, defined with AMBER ff96 classical force field. 
We train our model with samples used by \citet{he2025feat}.
The author gathered the training set from a 5 microsecond simulation under 300K with Generalized Born implicit solvent implemented in \texttt{openmmtools}~\citep{john_chodera_2025_14782825}.
The Langevin middle integrator implemented in \citet{eastman2023openmm} with a friction of 1/picosecond and a step size of 2 femtoseconds was used to harvest a total of 250,000 samples.

Below are settings for $S_a$ and $S_b$:
\begin{itemize}[leftmargin=*]
    \item $S_a$: ALDP in the vacuum environment;
        \item $S_b$: ALDP in implicit solvent.
\end{itemize}
Similar to \citet{he2025feat}, when training the network, we rescale each target scale by 20, i.e., we define the energy as \( U\left(\frac{x}{20}\right) \).
Note that this will only change the scale of input and the score, with no influence on the free energy difference as  long as we apply the same scaling to both targets.

\textbf{Stochastic Interpolant Training Details.}
We train a stochastic interpolant model bridging between $S_a$ and $S_b$.
The model has two networks, a vector field and an energy network:

Given pairs of samples $(x_a, x_b)$ from systems $S_a$ and $S_b$, we first define an interpolant:
\begin{align}\label{eq:I_t}
    I_t = \alpha_t x_a + \beta_t x_b + \gamma_t \epsilon, \quad \epsilon\sim \mathcal{N}(0, \text{Id})
\end{align}
where $\alpha_0=1, \alpha_1=0$; $\beta_0=0, \beta_1=1$; and $\gamma_0=\gamma_1=0$ ensure proper boundary conditions: $I_{t=0}=x_a$ and $I_{t=1}=x_b$. By\cite{albergo2023stochastic}, the vector field and energy path is defined as
\begin{align}
\label{eq:cond:expedct}
    v_t(x) = \mathbb{E}[\dot I_t |I_t=x], \qquad \nabla U_t(x) = \gamma^{-1}_t \mathbb{E}[\epsilon|I_t=x]
\end{align}
where the dot denotes the time derivative and $\mathbb{E}[\cdot|I_t=x]$ denotes expectation over the law of $I_t$ conditional on $I_t=x$. 
Using the $L^2$ formulation of the conditional expectation, we can write objective functions for the function $v_t$ and $\nabla U_t$ defined in \Cref{eq:cond:expedct}; if we parametrize these functions as neural networks $v^\psi_t(x)$ and $U_t^\theta(x)$, depending on both $t$ and $x$, this leads to the losses:
\begin{align}
    \label{eq:si_b}\mathcal{L}_v(\psi) & = \E_{t\sim \mathcal{U}(0, 1)} \E_{x_a,x_b, \epsilon} \big[\lambda_t |
    v^\psi_t(I_t) - \dot I_t
    |^2\big]\\
    \label{eq:si_u}\mathcal{L}_U(\theta) & =\E_{t\sim \mathcal{U}(0, 1)}  \E_{x_a,x_b, \epsilon} \big[\eta_t |
    \nabla U^\theta_t(I_t) - \gamma_t^{-1}\epsilon 
    |^2 \big]
\end{align}
where $\lambda_t$ and $\eta_t$ are weighting functions to balance optimisation across different times. In practice, we follow \citet{he2025feat} to set $\lambda_t=1$ and $\eta_t=\gamma_t$.

Additionally, we also use target score matching   \citep[TSM, ][]{de2024target} to enhance the energy learning following \citep{mate2025solvation,he2025feat}:
\begin{align}
  \mathcal{L}_U^{\text{TSM,0}}(\theta) &=   \E_{t\sim \mathcal{U}(0, 0.5)}  \E_{x_a,x_b,\epsilon} \big[ |
    \nabla U^\theta_t(I_t) -\alpha_t^{-1} \nabla U_a(x_a)|^2\big] \\
    \mathcal{L}_U^{\text{TSM, 1}}(\theta) &=  \E_{t\sim \mathcal{U}(0.5, 1)}  \E_{x_a,x_b,\epsilon}  \big[|
    \nabla U^\theta_t(I_t) - \beta_t^{-1} \nabla U_b(x_b)|^2\big]
\end{align}
At optimality, the following two SDEs become time reversals of each other ($\forall \sigma_t\geq 0$):
\begin{align}
\mathrm{d}X_t &= -\frac{1}{2}\sigma_t^2 \nabla U_t^\theta(X_t) \mathrm{d}t + v_t^\psi(X_t)\mathrm{d}t+ \sigma_t\, \fwd{\mathrm{d}{B}_t}, \quad X_0 \sim \mu_a, \label{eq:sde_forward}\\
\mathrm{d}X_t &= \ \ \ \frac{1}{2}\sigma_t^2 \nabla U_t^\theta (X_t)\mathrm{d}t + v^\psi_t(X_t) \mathrm{d}t+\sigma_t\, \bwd{\mathrm{d}{B}_t}, \quad X_1 \sim \mu_b,\label{eq:sde_backward}
\end{align}
where $\mu_a$ and $\mu_b$ are the distribution defined via energy $U_a$ and $U_b$.

We train the model with a batch size of 20.
Also, to improve results and to accelerate convergence, we apply mini-batch Optimal Transport \citet{tong2024improving} when sampling the pair $(x_a, x_b)$.
We also follow 
\citet{he2025feat} to use a different batch size for OT (500) and for training the network (20).
We train both methods for 40,000 iterations.
\par
We parametrise the energy network with inner product in the same way as \Cref{app:energy_details}, i.e., $U^\theta(x_t, t) = NN(c_\text{in} x_t, t) \cdot x_t$.
Note that in \citet{mate2025solvation}, the author add preconditioning to ensure $U^\theta(\cdot , 0)=U_a(\cdot)$ and $U^\theta(\cdot , 1)=U_b(\cdot)$.
However, this will require calling the target energy during training and will be less efficient \citep{he2025no}. Also, this requires designing a smoother parameter for $t\in (0, 1)$, which is easier for the LJ system considered by \citet{mate2025solvation} but non-trivial for ALDP.
Therefore, we drop this preconditioning and fully parametrise the energy with a neural network.
\par
For the baseline without RNE regularisation, we only train the energy network using $\mathcal{L}_U + \mathcal{L}^\text{TSM, 0}_U+\mathcal{L}^\text{TSM, 1}_U$.
For the results with RNE regularisation, we train both the vector field and the energy network, with an additional RNE regularisation: $\mathcal{L}_U + \mathcal{L}^\text{TSM, 0}_U+\mathcal{L}^\text{TSM, 1}_U + \lambda_\mathcal{R}\mathcal{R} $, where 
\begin{multline}
    \mathcal{R} =\E_{x_{t+\Delta t}, x_t, x_0, t} \| \texttt{sg}(\log p^\mu (x_t|x_{t+\Delta t}) - \log p^\nu (x_{t+\Delta t}|x_t) ) \\-U^\theta_{t+\Delta t}(x_{t+\Delta t}) + U^\theta_t(x_t)\|^2
\end{multline}
where $\mu =\frac{1}{2}\sigma_t^2 \nabla U_t^\theta (X_t) + v^\psi_t(X_t) $, and $\nu = -\frac{1}{2}\sigma_t^2 \nabla U_t^\theta (X_t) + v^\psi_t(X_t) $.
We found the hyperparameters used in the diffusion setting generalise well here: $\Delta t = 10^{-4}$ and $\lambda_\mathcal{R}=10^3$.
Different from the diffusion setting, we have the freedom to choose any $\sigma_t\geq 0$ for the forward and backwards pair of SDE.
We hence choose $\sigma_t=\sqrt{0.2}, \forall t$.
We did not find that this choice had a significant influence on the results.
Also, as $\sigma_t$ is a constant for any time step $t$, the instability issue discussed in \Cref{sec:ref} does not happen in this setting, and hence we do not use the analytical reference here.

\par
\textbf{Estimation Details.}
After training the network, we estimate the free-energy difference using the TI formula.  
This estimation is identical for both the baseline and the RNE regularisation.  
One caveat, however, is that when training without preconditioning, the boundary conditions are not satisfied.  
This not only makes the TI estimation inaccurate but also can introduce a constant shift at the boundaries.  
In other words, the free energies of $U_a$ and $U_0^\theta$ differ, and the same holds for $U_b$ and $U_1^\theta$.
\par
To account for this mismatch, we estimate the free energy difference between $U_a$ and $U_0^\theta$ 
using FEP formulation as described in \Cref{eq:fep} with samples from $\mu_a \propto \exp(-U_a)$.
Similarly, we estimate the free energy difference between $U_b$ and $U_1^\theta$ using FEP with samples from $\mu_b \propto \exp(-U_b)$.
\par
We then estimate the free energy difference between $U_0^\theta$ and $U_1^\theta$ using the TI formulation in \Cref{eq:ti_final}.
The final free energy difference between $U_a$ and $U_b$ will be the summation of these three estimates:
\begin{align}
    \Delta F_{U_a, U_b} = \Delta F_{U_a, U_0^\theta} + \Delta F_{U_0^\theta, U_1^\theta} + \Delta F_{U_1^\theta, U_b}
\end{align}
Note that \Cref{eq:ti_final} requires the sample from $p_t \propto \exp(-U_t^\theta)$.
However, we only have samples from $\mu_a$ and $\mu_b$.
Therefore, we take the assumption that $I_t \sim p_t$, where $I_t$ is defined as \Cref{eq:I_t}, similar to \citet{mate2025solvation}.
When the energy path is learned poorly, this assumption breaks down severely, resulting in inaccurate free-energy estimates. Conversely, a well-learned energy network improves the accuracy of the estimation. 
Hence, this serves as a good metric for assessing whether our proposed RNE regularisation provides improvements in learning a more accurate energy network.
\par
We estimate $\Delta F_{U_a, U_0^\theta}$ and $\Delta F_{U_1^\theta, U_b}$ using 5{,}000 samples each.  
For $\Delta F_{U_0^\theta, U_1^\theta}$, we use 5{,}000 samples with 1{,}000 steps, uniformly discretising the interval $(0,1)$.
We repeat baseline and RNE regularisation 3 times, and report the mean and standard deviation in \Cref{tab:ti}.
The reference value is taken from \citet{he2025feat}, which was obtained with MBAR \citep{shirts2008statistically}.
\par
We also provide a visualisation comparing $U_a$ with the learned $U_0^\theta$, and $U_b$ with the learned $U_1^\theta$ in \Cref{fig:UaUb_compare}.
In summary, the TI estimates reported in \Cref{tab:ti} show that RNE improves the energy along the path, while \Cref{fig:UaUb_compare} shows RNE improves the energy at two ends.
 
\begin{figure}[H]
\begin{subfigure}{0.5\linewidth}
     \centering
    \includegraphics[width=\linewidth]{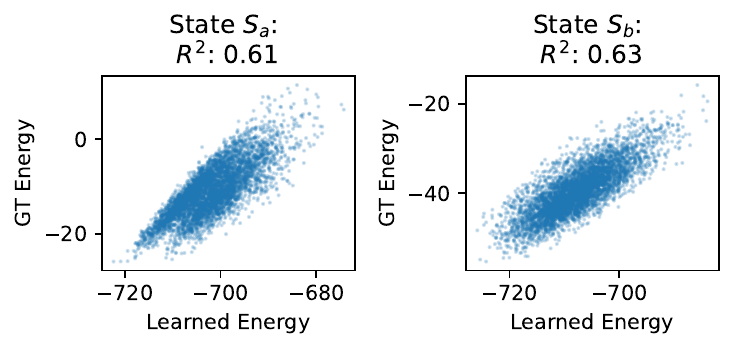}
    \caption{Without RNE.}
\end{subfigure}
   \begin{subfigure}{0.5\linewidth}
     \centering
    \includegraphics[width=\linewidth]{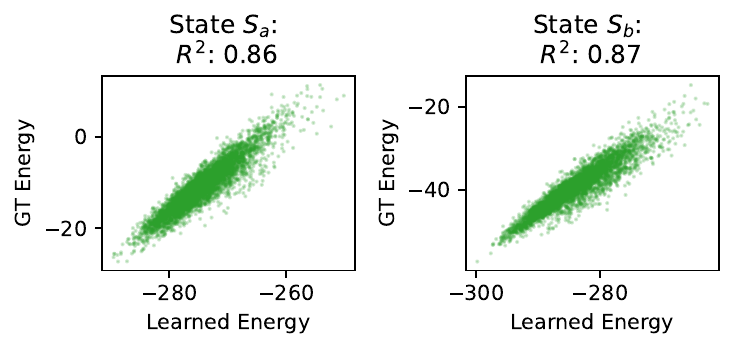}
    \caption{With RNE.}
\end{subfigure}
\caption{Comparing $U_a$ and $U_b$ with the learned $U_0^\theta$ and $U_1^\theta$ without / with RNE regularisation.}\label{fig:UaUb_compare}
\end{figure}

\section{Related Works in Sampling from Unnormalised Densities}

It is important to highlight that in the task of sampling from unnormalised densities, we have seen a recent uptake in methods that exploit the RND between SDEs for Sequential Monte Carlo \citep{chen2024sequential,albergo2024nets,tan2025scalable}.
Among these, \citet{chen2024sequential} is most closely aligned with our methodology, as it also uses the RND between forward and backward SDEs.
However, their approach is not directly applicable to generative models, as it relies on access to the intermediate densities, which was manually designed in their case as a geometric interpolation between the prior and the target. 
In our case, these intermediate densities are not tractable, and this is precisely where our RNE framework comes in and provides a principled solution.

\end{document}